\documentclass[twocolumn,table,dvipsnames]{ncu}   

\usepackage{silence}
\usepackage{amsthm}
\usepackage{nicefrac}
\usepackage{mathrsfs}
\usepackage{dsfont}
\usepackage{upgreek}
\usepackage{gensymb}

\usepackage{algorithm}
\usepackage{algpseudocode}
\usepackage{listings}

\usepackage{array}
\usepackage{makecell}
\usepackage{adjustbox}
\usepackage{tabularx}
\usepackage{diagbox}
\usepackage{longtable}
\usepackage{siunitx}
\usepackage{wrapfig}
\usepackage{float}
\usepackage{enumitem}

\usepackage{pifont}
\usepackage{fontawesome5}
\usepackage{bbding}
\usepackage{xspace}
\usepackage{textcomp}
\usepackage{epigraph}

\newlength\savewidth

\newcolumntype{x}[1]{>{\centering\arraybackslash}p{#1pt}}
\newcolumntype{y}[1]{>{\raggedright\arraybackslash}p{#1pt}}
\newcolumntype{z}[1]{>{\raggedleft\arraybackslash}p{#1pt}}

\theoremstyle{plain}
\newtheorem{theorem}{Theorem}
\newtheorem{proposition}[theorem]{Proposition}

\theoremstyle{definition}
\newtheorem{definition}[theorem]{Definition}

\theoremstyle{remark}

\usepackage{newfloat}
\usepackage{xcolor}
\usepackage{upquote}
\DeclareFloatingEnvironment[name=Code, placement=tbp, fileext=lol]{codeblock}

\definecolor{pygkeyword}{HTML}{007020}
\definecolor{pygstring}{HTML}{4070A0}
\definecolor{pygcomment}{HTML}{60A0B0}
\definecolor{pygname}{HTML}{06287E}
\definecolor{pyglineno}{RGB}{128,128,128}

\lstdefinestyle{pycode}{
    language=Python,
    basicstyle=\ttfamily\scriptsize,
    keywordstyle=\color{pygkeyword}\bfseries,
    stringstyle=\color{pygstring},
    commentstyle=\color{pygcomment}\itshape,
    emph={torch,math,F,self}, emphstyle=\color{pygname},
    numbers=left,
    numberstyle=\ttfamily\scriptsize\color{pyglineno},
    numbersep=4pt,
    xleftmargin=10pt,
    breaklines=true, breakatwhitespace=false,
    postbreak=\mbox{\textcolor{pyglineno}{$\hookrightarrow$}\space},
    showstringspaces=false, upquote=true,
    columns=fullflexible, keepspaces=true,
    tabsize=4,
}

\definecolor{goodgreen}{HTML}{009000}
\definecolor{badred}{HTML}{ea4335}

\newcommand{\eu}{\mathrm{e}\mkern1mu}

\newcommand{\diff}{\mathop{}\!\mathrm{d}}

\title{Sphere Retraction Normalizations}

\author[1,*]{Jie Zhang}
\author[2,*]{Cheng-Fang Su}
\author[1,3]{Yi-Jui Huang}
\author[1]{Min-Te Sun}

\affiliation[1]{Department of Computer Science and Information Engineering, National Central University, Taiwan}
\affiliation[2]{Department of Applied Mathematics, National Yang Ming Chiao Tung University, Taiwan}
\affiliation[3]{Department of Mathematics, National Central University, Taiwan}
\contribution[*]{Equal contribution}

\abstract{
Residual connections are the de facto mechanism for training deep neural networks stably. Geodesic Normalization (GeoNorm) recasts them on a Riemannian manifold, orthogonalizing each layer output against the current hidden state and applying the resulting update through the Riemannian exponential map. Every hidden state thus keeps a constant $\ell_{2}$-norm, confining the residual stream to a hypersphere. The exponential map, however, is only one member of a broad family of retraction maps. We show that on the hypersphere this entire family collapses to a single scalar design choice. What distinguishes one retraction from another is only how the magnitude of an update is converted into a rotation angle within the plane spanned by the hidden state and the update. This view places Euclidean residual connections and GeoNorm in one framework. Instantiating it with the metric projection retraction and the Cayley retraction yields \textit{Proj-SpheretNorm} and \textit{Cay-SpheretNorm}, which are exactly norm-preserving yet require only algebraic operations. Both prove to be members of a one-parameter family of angular retractions, \textit{$p$-SpheretNorm}, whose rotation angle saturates rather than growing without bound. The two methods above are recovered exactly at $p = 1$ and $p = 2$, while the identity map and GeoNorm arise only as limits at either end. On nanoGPT, all three methods outperform existing lightweight deep connection schemes, and the best validation loss is attained at finite $p$, indicating that the exponential map is not the preferred retraction for spherical residual streams but merely one end of a spectrum.
}

\ncudata[Code]{\codeurl{https://github.com/hazdzz/deep_connection}}
\ncudata[Correspondence]{Jie Zhang (\email{hazdzz@g.ncu.edu.tw}), \\
\hphantom{\textbf{Correspondence:} }Cheng-Fang Su (\email{scf1204@nycu.edu.tw})}
\date{\today}
\preprint{Preprint}

\begin{document}

\tcbset{
    overlay={
        \node[
            anchor=south east,
            at=(frame.south east),
            xshift=-0.15cm,
            yshift=0.25cm,
        ]{
            \raisebox{0.05cm}{%
                \includegraphics[height=2.6cm]{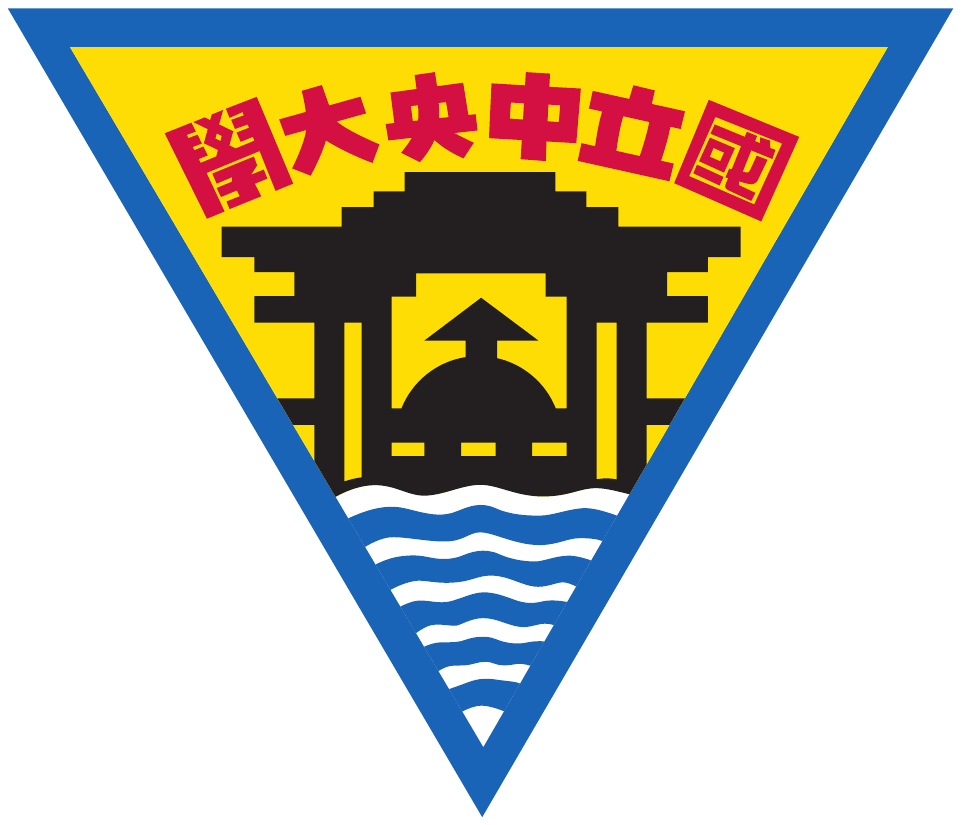}%
            }%
            \hspace{-0.62cm}%
            \includegraphics[height=2.4cm]{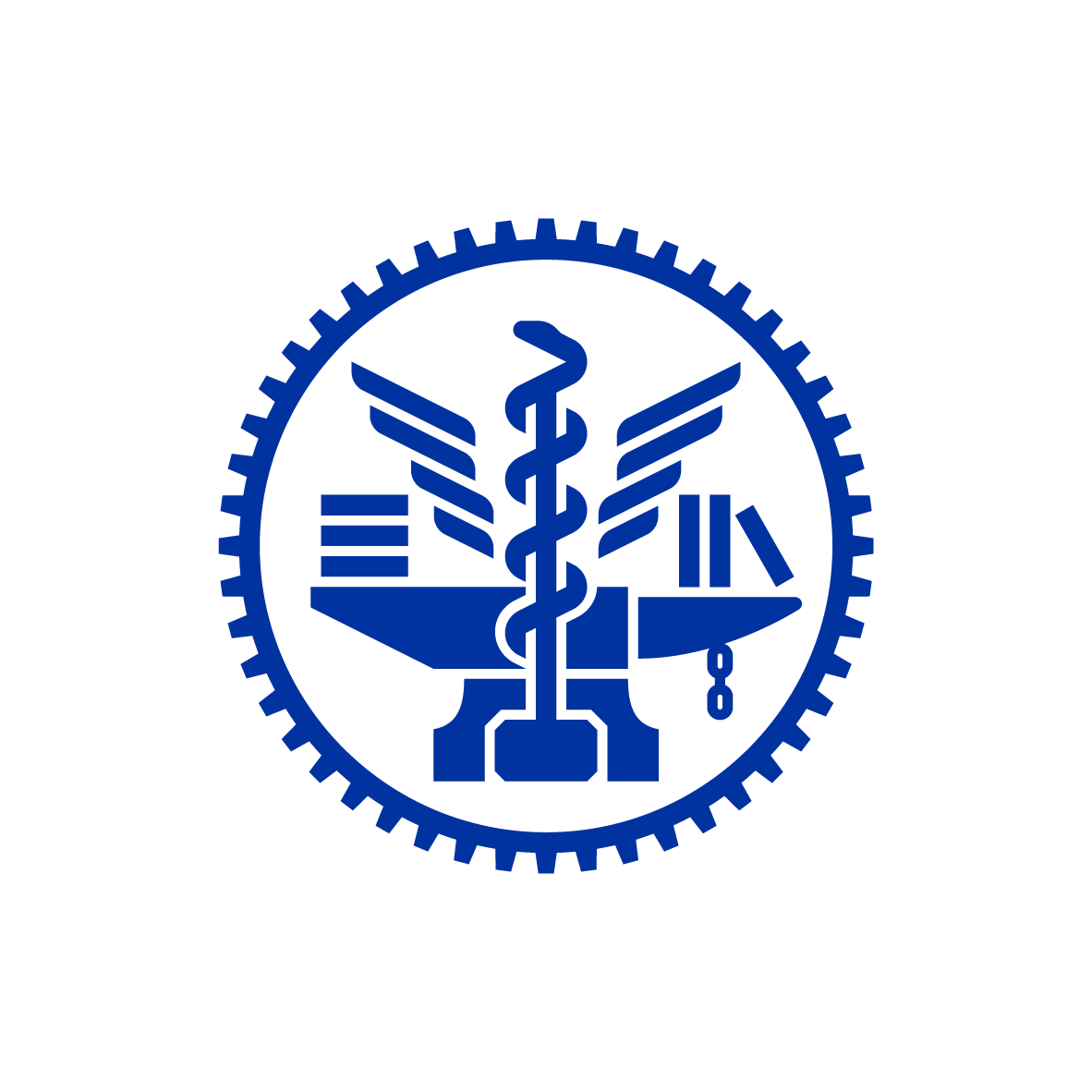}%
        };
    },
}

\maketitle

\epigraph{``Geometry is not true, it is advantageous.''}{--- Henri Poincar{\'{e}}}
\vspace{-2ex}

\section{Introduction}
Training neural networks in a deep and stable manner remains an open problem. Prior research~\citep{li2025mixln,pmlr-v267-kim25u,chen2026post0layernorm} primarily tackles this challenge by combining skip connections~\citep{he2016deep,10.1007/978-3-319-46493-0_38}, normalization methods~\citep{ba2016layer,NEURIPS2019_1e8a1942}, and gating mechanisms~\citep{10.1162/neco.1997.9.8.1735,cho-etal-2014-learning}. We refer to this class of methodologies collectively as \emph{deep connections}. In the Transformer architecture~\citep{NIPS2017_3f5ee243}, Post-LayerNorm
(Post-LN)~\citep{NIPS2017_3f5ee243} and Pre-LayerNorm
(Pre-LN)~\citep{wang-etal-2019-learning-deep,pmlr-v119-xiong20b} have emerged as the two most common choices for the placement of deep connections.

However, both Pre-LN and Post-LN exhibit inherent limitations. Transformers with Pre-LN suffer from the curse of depth~\citep{sun2025curse}, where the contribution of deeper layers progressively diminishes. Moreover, the unbounded residual stream in Pre-LN allows massive activations to accumulate across layers, which has been linked to more pronounced attention sinks --- a phenomenon in which attention scores concentrate excessively on the first token~\citep{xiao2024efficient,sun2024massive,gu2025when}. By contrast, Transformers with Post-LN, on the other hand, suffer from large gradients near the output layer at initialization~\citep{pmlr-v119-xiong20b}, which destabilizes training and necessitates a carefully tuned learning-rate warm-up schedule that slows down optimization and adds hyperparameter tuning.

\begin{figure}[!ht]
\centering
\begin{subfigure}[t]{0.4\textwidth}
    \centering
    \includegraphics[width=0.4\linewidth]{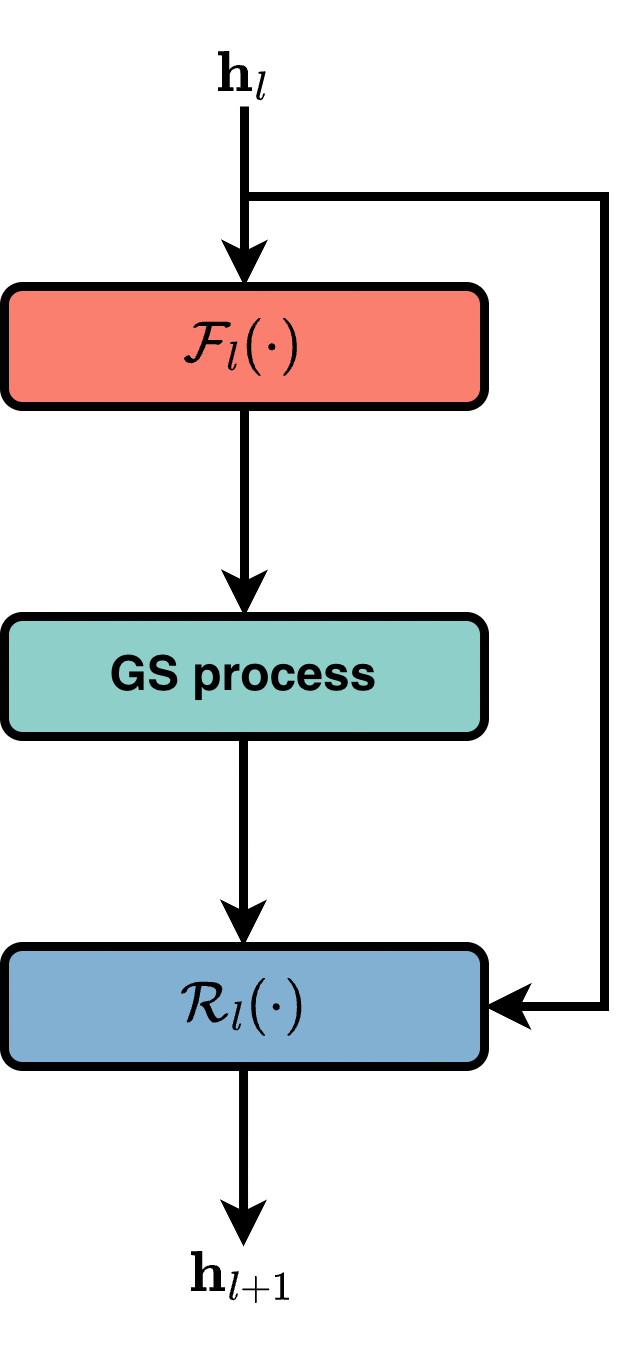}
    \caption{The whole process of SpheretNorm. $\mathcal{F}_{l}(\cdot)$ represents a (non-)linear mapping, the GS process stands for the Gram–Schmidt process, and $\mathcal{R}_{l}(\cdot)$ denotes the retraction map.}
    \label{fig:spheretnorm_process}
\end{subfigure}
\vspace{1em}
\begin{subfigure}[t]{0.4\textwidth}
    \centering
    \includegraphics[width=0.4\linewidth]{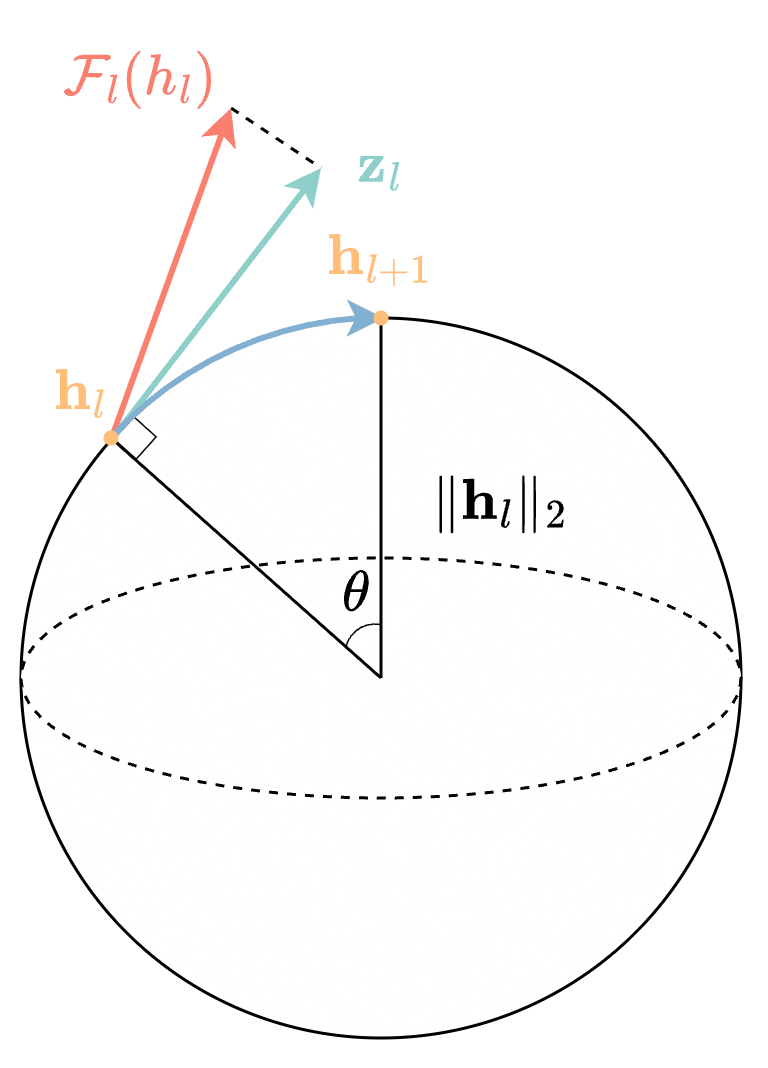}
    \caption{Illustration of the geometric meaning of SpheretNorm. The retraction map $\mathcal{R}_{l}(\cdot)$ is defined on the hypersphere of radius $\lVert{\mathbf{h}_{l}}\rVert_{2}$.}
    \label{fig:retraction}
\end{subfigure}
\caption{Overview of SpheretNorm (including GeoNorm, Proj-SpheretNorm, Cay-SpheretNorm, and $p$-SpheretNorm). (a) The overall computational pipeline, and (b) its underlying geometric interpretation.}
\label{fig:spheretnorm}
\end{figure}

A recent line of work~\citep{li2025mixln,pmlr-v267-kim25u,chen2026post0layernorm} seeks to reconcile the Post-LN and Pre-LN paradigms within Euclidean space. \citet{zheng2026geonorm0} pursue a similar goal but take a distinct route: their Geodesic Normalization (GeoNorm) accomplishes the residual connection on a Riemannian manifold. Let $\boldsymbol{h}_{l}$ denote the $l$-th hidden state, and let $\mathcal{F}_{l}(\cdot)$ denote the corresponding (non-)linear mapping. Specifically, GeoNorm first orthogonalizes $\mathcal{F}_{l}(\boldsymbol{h}_{l})$ against $\boldsymbol{h}_{l}$ via a Gram-Schmidt process to obtain an intermediate tensor $\boldsymbol{z}_{l}$, and then applies an exponential map at $\boldsymbol{h}_{l}$ along $\boldsymbol{z}_{l}$ to produce $\boldsymbol{h}_{l+1}$. By keeping the $\ell_{2}$-norm of every hidden state constant across layers, GeoNorm in effect confines the hidden states to a hypersphere, which stabilizes forward propagation.

While GeoNorm offers a compelling geometric perspective, its update is tied to one specific choice: the exponential map. From the viewpoint of Riemannian geometry, however, the exponential map is merely one member of a broad family of retraction maps~\citep{10.5555/1557548,10.1137/100802529}. Thus, a natural question arises: \emph{which retraction should a deep connection use on the hypersphere, and how does this choice shape the behavior of a deep neural network?}

Rather than proposing another way to unify Post-LN and Pre-LN into a single normalization mechanism, we revisit retraction maps on the hypersphere and propose three \underline{Sphe}re \underline{Ret}raction \underline{Norm}alizations (SpheretNorms): \textbf{Proj-SpheretNorm}, \textbf{Cay-SpheretNorm}, and \textbfps{$p$-SpheretNorm}. We first unify residual connection and GeoNorm within a single framework, under which GeoNorm can be interpreted as a Riemannian residual connection on the hypersphere. Building upon this perspective, we further exploit the metric projection retraction and the Cayley retraction on the hypersphere to derive Proj-SpheretNorm and Cay-SpheretNorm, respectively. 

We then observe that both Proj-SpheretNorm and Cay-SpheretNorm can be subsumed into a unified formulation based on the exponential map with different effective angles. We refer to this generalized formulation as $p$-SpheretNorm with a positive parameter $p$, which recovers Proj-SpheretNorm at $p = 1$ and Cay-SpheretNorm at $p = 2$. Moreover, as $p \to 0^{+}$, $p$-SpheretNorm reduces to the identity mapping, while as $p \to +\infty$ with $\alpha_{l} \in (0, 1]$, it recovers GeoNorm. Through a single tunable parameter, $p$-SpheretNorm provides a principled and flexible family of normalization layers that bridges existing methods.

We evaluate three types of SpheretNorm using nanoGPT~\citep{karpathy2022nanogpt} as the backbone on both pretraining and downstream tasks. Proj-SpheretNorm achieves the best results, attaining the lowest pretraining and validation losses on the medium- and large-size nanoGPT models. Results on the downstream tasks also show that Proj-SpheretNorm achieves high accuracy. Taken together, these results indicate that Proj-SpheretNorm is the strongest of the three SpheretNorm variants across both pretraining and downstream evaluations.

\paragraph{Contributions.}
In conclusion, our key contributions are summarized as follows:
\begin{enumerate}
    \item We unify residual connections and GeoNorm as retraction-based updates, showing that GeoNorm amounts to a Riemannian residual connection driven by the exponential map on the hypersphere.
    \item We instantiate two alternative retractions on the hypersphere, the metric projection retraction and the Cayley retraction, which yield Proj-SpheretNorm and Cay-SpheretNorm.
    \item We generalize both instances into $p$-SpheretNorm, a single-parameter family that reduces to the identity mapping as $p \to 0^{+}$, recovers
    Proj-SpheretNorm at $p = 1$ and Cay-SpheretNorm at $p = 2$, and recovers GeoNorm as $p \to +\infty$.
    \item On nanoGPT at up to 36 layers and $6.55$\,B tokens, SpheretNorm attains the lowest pre-training and validation losses at 24 and 36 layers and the highest average zero-shot accuracy on both OpenWebText and FineWeb-Edu, outperforming Pre-LN, Pre-DyT, Peri-LN, Keel, and GeoNorm, while converging in every setting in which Pre-LN or Keel diverges.
    \end{enumerate}
\section{Related Work}
\paragraph{Normalization Methods.}
LayerNorm~\citep{ba2016layer} has served as the foundation of Transformers~\citep{NIPS2017_3f5ee243} since its inception. Later, RMSNorm~\citep{NEURIPS2019_1e8a1942}, a simplified variant of LayerNorm that drops the re-centering operation, became the de facto standard normalization. A closely related and contemporaneous normalization method, ScaleNorm~\citep{nguyen-salazar-2019-transformers}, differs mainly in that it normalizes by the $\ell_{2}$-norm rather than the root mean square, and replaces the per-dimension gain vector with a single tied scalar, making it more parameter-efficient.

\paragraph{Normalization-Free Normalization Functions.}
A noteworthy line of work replaces LayerNorm with a zero-centered, bounded, center-sensitive, monotonic element-wise non-linearity. \citet{Zhu_2025_CVPR} propose the Dynamic Tanh (DyT) function, \citet{stollenwerk-2026-mathematical} introduce the Dynamic Inverse Square Root Unit (DyISRU) function, and \citet{Chen_2026_CVPR} develop the Dynamic erf (Derf) function. By eliminating the reduction along the hidden dimension, these activation functions remove the synchronization bottleneck of LayerNorm and can be fused with the preceding linear layer, substantially reducing overhead while preserving the squashing behavior of LayerNorm.

\paragraph{Skip Connection, and its Variants.}
ResNet~\citep{10.1007/978-3-319-46493-0_38,he2016deep} ushered in the era of very deep neural networks. Later, \citet{pmlr-v161-bachlechner21a} proposed ReZero, a residual connection with a learnable step size $\alpha_{l}$ initialized at zero. \citet{Touvron_2021_ICCV} further enhanced ReZero by using a learnable vector instead of a scalar. Recently, \citet{zheng2026geonorm0} proposed GeoNorm, a Riemannian residual connection that preserves the $\ell_{2}$-norm of each hidden state throughout forward propagation. Inspired by the property of GeoNorm that maintains a constant $\ell_{2}$-norm across hidden layers, we revisit retraction maps on the hypersphere, and propose SpheretNorms.

\paragraph{Post-LN, Pre-LN, and their Variants.}
With the advent of Transformers, applying a residual connection followed by LayerNorm, known as Post-LN, became standard practice. However, Transformers using Post-LN typically rely on a carefully tuned learning-rate warm-up schedule. In contrast, Pre-LN applies LayerNorm before each sub-layer and adds the residual connection afterward. While more stable, Transformers with Pre-LN suffer from the curse of depth~\citep{sun2025curse}, where deeper layers contribute far less to learning and representation than earlier ones, effectively widening the network rather than deepening it. LayerNorm Scaling (LNS)~\citep{sun2025curse} attenuates the normalized hidden state by a depth-dependent factor before each sub-layer, mitigating the curse of depth. In addition to the conventional Pre-LN and Post-LN designs, Peri-LN~\citep{pmlr-v267-kim25u} and Keel~\citep{chen2026post0layernorm} apply LayerNorm twice to reconcile the two paradigms. Collectively, these works explore different placements and scalings of LayerNorm within the residual stream to stabilize and deepen Transformers, yet they all operate within the Euclidean geometry of the hidden representations.
\section{Background and Preliminary}
Let $\mathbf{h}_{l}$ denote the $l$-th hidden state, with $\mathbf{h}_{1}$ being the input feature (or the output of the embedding layer, for Transformers), and let $\mathcal{F}_{l}(\cdot)$ denote the $l$-th (non-)linear mapping in Euclidean space, such as multi-head self-attention~\citep{NIPS2017_3f5ee243} or a multi-layer perceptron. Unless otherwise specified, for notational convenience, all bias terms are omitted throughout this work. Moreover, we assume $\mathbf{h}_{l} \in \mathbb{R}^{d}$ and $\mathbf{h}_{l} \neq \mathbf{0}$. For $l = 1,\,2,\,\ldots,\,L-1$, we take two instances of deep connection methods for illustrating.

\paragraph{Residual Connection (RC).}
To facilitate the training of deep CNNs, \citet{10.1007/978-3-319-46493-0_38,he2016deep} propose ResNet, whose key innovation is the residual connection. It can be formulated as
\begin{equation}\label{eq:res_conn}
\mathbf{h}_{l+1} = \mathbf{h}_{l} + \mathcal{F}_{l}(\mathbf{h}_{l}).
\end{equation}

\paragraph{Geodesic Normalization (GeoNorm).}
The exponential map in Riemannian geometry is defined as 
\begin{equation}\label{eq:exp_map}
\exp_{\mathbf{h}}(\mathbf{v}) = \cos(\frac{\lVert{\mathbf{v}}\rVert_{2}}{\lVert{\mathbf{h}}\rVert_{2}})\mathbf{h} + \lVert{\mathbf{h}}\rVert_{2}\sin(\frac{\lVert{\mathbf{v}}\rVert_{2}}{\lVert{\mathbf{h}}\rVert_{2}})\frac{\mathbf{v}}{\lVert{\mathbf{v}}\rVert_{2}},
\end{equation}
with $\mathbf{h}^{\top}\mathbf{v}=0$. Building upon this exponential map, \citet{zheng2026geonorm0} propose GeoNorm, which is defined as
\begin{equation}\label{eq:geonorm}
\begin{split}
\mathbf{z}_{l} &= \Bigl(\mathbf{I} - \frac{\mathbf{h}_{l}\mathbf{h}_{l}^{\top}}{\lVert{\mathbf{h}_{l}}\rVert^{2}}\Bigr)\mathcal{F}_{l}(\mathbf{h}_{l}),\quad 
\beta_{l} = \alpha_{l}\frac{\lVert{\mathbf{z}_{l}}\rVert_{2}}{\lVert{\mathbf{h}_{l}}\rVert_{2}}, \\
\mathbf{h}_{l+1} &= \cos(\beta_{l})\mathbf{h}_{l} + \lVert{\mathbf{h}_{l}}\rVert_{2}\sin(\beta_{l})\frac{\mathbf{z}_{l}}{\lVert{\mathbf{z}_{l}}\rVert_{2}}.
\end{split}
\end{equation}
Here, $\alpha_{l} \in (0, 1]$ is a learnable parameter~\footnote{The range of $\alpha_{l}$ is not explicitly specified in \citep{zheng2026geonorm0}. Given that $\alpha_{l}$ acts as the step size, we deduce its range from this property.}.
\section{Proposed Method}\label{sec:Proposed Method}
\subsection{Sphere Retraction Normalizations}
In this subsection, we begin by casting the standard residual connection and GeoNorm into a common framework that exposes their shared computational structure and distinguishes their geometric components. We single out the retraction map for redesign and consider alternatives to the exponential map. Instantiating the framework with the metric projection and Cayley retractions yields Proj-SpheretNorm and Cay-SpheretNorm, respectively. We then introduce a unified one-parameter construction, termed $p$-SpheretNorm, which recovers these two methods as exact special cases.

\subsubsection{Retraction Maps on the Hypersphere}
In a deep neural network, the $l$-th block maps $\mathbf{h}_{l-1}$ to $\mathbf{h}_l$, for $l=1,\ldots,L$. Given a manifold $\mathcal{M}$, the tangent-space projection $\operatorname{proj}_{T_{\mathbf{h}}\mathcal{M}}$, a step size $\alpha_l$, and a retraction $\mathcal{R}_{\mathbf{h}}$, a deep connection can be written abstractly as
\begin{equation}\label{eq:deep_conn_manifold}
\mathbf{h}_l = \mathcal{R}_{\mathbf{h}_{l-1}}\!\Bigl[\alpha_{l}\cdot\operatorname{proj}_{T_{\mathbf{h}_{l-1}}\mathcal{M}}\bigl(\mathcal{F}_{l}(\mathbf{h}_{l-1})\bigr)\Bigr].
\end{equation}
Within this framework, RC and GeoNorm share the same computational skeleton and differ only in how the individual components are instantiated. In particular, GeoNorm corresponds to choosing the hypersphere as the manifold and the exponential map as the retraction, which makes it a Riemannian residual connection. Table~\ref{tab:rc_vs_geonorm} contrasts the two methods
component by component.
\begin{table}[!ht]
\centering
\caption{Residual Connection vs. Geodesic Normalization.}
\label{tab:rc_vs_geonorm}
\begin{tabular}{l|cc}
\toprule
{} 
    & RC 
    & GeoNorm \\
\midrule
$\mathcal{M}$ 
    & $\mathbb{R}^{d}$
    & $\mathbb{S}^{d-1}_{r}=\{\mathbf{h}\in\mathbb{R}^{d}:\lVert\mathbf{h}\rVert_{2}=r\}$ \\
$T_{\mathbf{h}}\mathcal{M}$ 
    & $\mathbb{R}^{d}$ 
    & $\{\mathbf{v}\in\mathbb{R}^{d}:\mathbf{h}^{\top}\mathbf{v}=0\}$ \\
$\operatorname{proj}_{T_{\mathbf{h}}\mathcal{M}}(\mathbf{u})$ 
    & $\mathbf{u}$
    & $\Bigl(\mathbf{I}-\dfrac{\mathbf{h}\mathbf{h}^{\top}}{\lVert\mathbf{h}\rVert_{2}^{2}}\Bigr)\mathbf{u}$ \\
$\alpha_{l}$ 
    & $1$
    & $\alpha_{l} \in (0,1]$ \\
$\mathcal{R}_{\mathbf{h}}(\mathbf{v})$ 
    & $\mathbf{h}+\mathbf{v}$
    & $\exp_{\mathbf{h}}(\mathbf{v})$ \\
\bottomrule
\end{tabular}
\end{table}

Table~\ref{tab:rc_vs_geonorm} shows that GeoNorm chooses the retraction map to be the exponential map on the hypersphere. The exponential map, however, is merely one instance of a broader class of retractions~\citep{10.5555/1557548,10.1137/100802529}. We replace the exponential map with other retractions keeps the framework intact while enriching the design space. We recall the definition of a retraction below.

\begin{definition}[Retraction~\citep{10.5555/1557548}]\label{def:retraction}
A \emph{retraction} on a manifold $\mathcal{M}$ is a smooth mapping
$\mathcal{R}: T\mathcal{M}\to\mathcal{M},\ (\mathbf{h},\mathbf{v})\mapsto\mathcal{R}_{\mathbf{h}}(\mathbf{v})$, such that for every $\mathbf{h}\in\mathcal{M}$ and every $\mathbf{v} \in T_{\mathbf{h}}\mathcal{M}$, $\mathcal{R}_{\mathbf{h}}(\mathbf{0}) = \mathbf{h},\ \frac{\diff}{\diff{t}}\,\mathcal{R}_{\mathbf{h}}(t\mathbf{v})\bigg\vert_{t=0} = \mathbf{v}$.
\end{definition}

\paragraph{Sphere Metric Projection Retraction Normalization.}
We first choose the metric projection retraction on the hypersphere~\citep{10.5555/1557548}, also known as the gnomonic retraction on the hypersphere~\citep{10.1137/100802529}, as the foundation for proposing a novel normalization layer, which we name the Sphere Metric Projection Retraction Normalization (Proj-SpheretNorm). The definition of the metric projection retraction on the hypersphere is described as blows.
\begin{definition}[Metric Projection Retraction~\citep{10.5555/1557548}]\label{def:proj_rectration}
Let $\mathcal{M} = \mathbb{S}_r^{n-1} = \{\mathbf{h}\in\mathbb{R}^{n}:\|\mathbf{h}\|_{2}=r\}$ be the hypersphere of radius $r > 0$, endowed with the Riemannian metric induced from the Euclidean inner product. For each $\mathbf{h}\in\mathcal{M}$, the tangent space is $T_{\mathbf{h}}\mathcal{M}=\{\mathbf{v}\in\mathbb{R}^{n}:\,\langle\mathbf{h},\mathbf{v}\rangle=0\}$, and the projective retraction is defined by
\begin{equation}\label{eq:proj_retraction}
\mathcal{R}^{\text{Proj}}_{\mathbf{h}}(\mathbf{v}) = r\cdot\frac{\mathbf{h}+\mathbf{v}}{\lVert{\mathbf{h}+\mathbf{v}}\rVert_2},\quad \mathbf{v} \in T_{\mathbf{h}}\mathcal{M}.
\end{equation}
Then $\mathcal{R}^{\text{Proj}}$ is a second-order retraction on $\mathcal{M}$; that is, for every $\mathbf{h} \in \mathcal{M}$ and every $\mathbf{v} \in T_{\mathbf{h}}\mathcal{M}$, the curve $c(t):=\mathcal{R}_{\mathbf{h}}(t\mathbf{v})$ satisfies $c(0)=\mathbf{h},\ c'(0)=\mathbf{v},\ \mathbf{P}_{\mathbf{h}}\left(c''(0)\right)=\mathbf{0}$, where $\mathbf{P}_{\mathbf{h}}:\mathbb{R}^{n}\to T_{\mathbf{h}}\mathcal{M}$ denotes the orthogonal projection onto the tangent space.
\end{definition}

Based on Eq.~\eqref{eq:deep_conn_manifold} and Eq.~\eqref{eq:proj_retraction}, we have
\begin{equation}\label{eq:proj_spheretnorm}
\begin{split}
\mathbf{z}_{l} &= \Bigl(\mathbf{I} - \frac{\mathbf{h}_{l}\mathbf{h}_{l}^{\top}}{\lVert{\mathbf{h}_{l}}\rVert^{2}}\Bigr)\mathcal{F}_{l}(\mathbf{h}_{l}), \\
\mathbf{h}_{l+1} &= \lVert\mathbf{h}_{l}\rVert_{2}\cdot\frac{\mathbf{h}_{l} + \alpha_{l}\mathbf{z}_{l}}{\lVert{\mathbf{h}_{l} + \alpha_{l}\mathbf{z}_{l}}\rVert_{2}}.
\end{split}
\end{equation}



\paragraph{Sphere Cayley Retraction Normalization.}
We also consider the Cayley retraction on the hypersphere~\citep{10.5555/1557548}, also known as the stereographic retraction on the hypersphere~\citep{10.1137/100802529}, to propose Sphere Cayley Retraction Normalization (Cay-SpheretNorm). For a skew-symmetric matrix $\mathbf{W} \in \mathfrak{so}(d)$, where $\mathfrak{so}(d)=\left\{\boldsymbol{W}\in\mathbb{R}^{d\times d}:\boldsymbol{W}^\top=-\boldsymbol{W} \right\}$, the Cayley transform is defined by $\mathcal{C}(\mathbf{W}) = \left(\mathbf{I}-\frac{1}{2}\mathbf{W}\right)^{-1}\left(\mathbf{I}+\frac{1}{2}\mathbf{W}\right)$. On the hypersphere, a tangent vector determines a unique skew-symmetric generator supported on the two-dimensional subspace spanned by the current point and the tangent vector. This yields the following closed-form expression.

\begin{proposition}\label{prop:cayley-sphere}
Let $\mathbf{h}\in\mathbb{S}_r^{d-1}$ and $\mathbf{v}\in T_{\mathbf{h}}\mathbb{S}_r^{d-1}$. There exists a unique
matrix $\mathbf{W}=\mathbf{W}(\mathbf{h},\mathbf{v})\in\mathfrak{so}(d)$ satisfying $\mathbf{W}\mathbf{h}=\mathbf{v},\ \operatorname{range}(\mathbf{W}) \subseteq \operatorname{span}\{\mathbf{h},\mathbf{v}\}$,
and it is given by
\begin{equation}\label{eq:cayley-generator}
\mathbf{W}(\mathbf{h},\mathbf{v})
=\frac{\mathbf{v}\mathbf{h}^\top-\mathbf{h}\mathbf{v}^\top}{r^2}.
\end{equation}
Moreover, define $\rho=\frac{\|\mathbf{v}\|_2}{r}$. The corresponding Cayley retraction admits the closed form
\begin{equation}\label{eq:cayley-retraction}
    \mathcal{R}_{\mathbf{h}}^{\text{Cay}}(\mathbf{v})=\mathcal{C}\!\left(\mathbf{W}(\mathbf{h},\mathbf{v})\right)\mathbf{h}=\frac{4-\rho^2}{4+\rho^2}\mathbf{h}
    +\frac{4}{4+\rho^2}\mathbf{v}.
\end{equation}
\end{proposition}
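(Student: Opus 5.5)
The plan is to prove existence and uniqueness of the generator directly from skew-symmetry, and then evaluate the Cayley transform by restricting everything to the plane $U=\operatorname{span}\{\mathbf{h},\mathbf{v}\}$. Throughout I use $\lVert\mathbf{h}\rVert_2=r$ and $\mathbf{h}^\top\mathbf{v}=0$. I treat the degenerate case $\mathbf{v}=\mathbf{0}$ first. A matrix $\mathbf{W}\in\mathfrak{so}(d)$ with $\operatorname{range}(\mathbf{W})\subseteq\operatorname{span}\{\mathbf{h}\}$ has rank at most one. A nonzero real skew-symmetric matrix has even rank, so $\mathbf{W}=\mathbf{0}$, which agrees with \eqref{eq:cayley-generator}. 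In this case $\mathcal{C}(\mathbf{0})\mathbf{h}=\mathbf{h}$, which agrees with \eqref{eq:cayley-retraction} at $\rho=0$. From now on I assume $\mathbf{v}\neq\mathbf{0}$, so that $U$ is two-dimensional.

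\emph{Uniqueness.} For skew $\mathbf{W}$ we have $\ker\mathbf{W}=\operatorname{range}(\mathbf{W}^\top)^\perp=\operatorname{range}(\mathbf{W})^\perp\supseteq U^\perp$, so $\mathbf{W}$ vanishes on $U^\perp$. It is therefore determined by $\mathbf{W}\mathbf{h}=\mathbf{v}$ together with $\mathbf{W}\mathbf{v}$. Since $\mathbf{W}\mathbf{v}\in U$, write $\mathbf{W}\mathbf{v}=a\mathbf{h}+b\mathbf{v}$. Skew-symmetry gives $\mathbf{v}^\top\mathbf{W}\mathbf{v}=0$, which forces $b=0$. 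It also gives $\mathbf{h}^\top\mathbf{W}\mathbf{v}=-\mathbf{v}^\top\mathbf{W}\mathbf{h}=-\lVert\mathbf{v}\rVert_2^2$, which forces $a=-\lVert\mathbf{v}\rVert_2^2/r^2=-\rho^2$. Hence $\mathbf{W}$ is pinned down on all of $\mathbb{R}^d$.

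\emph{Existence.} I check directly that $(\mathbf{v}\mathbf{h}^\top-\mathbf{h}\mathbf{v}^\top)/r^2$ is skew, has range in $U$, and sends $\mathbf{h}\mapsto\mathbf{v}$. By the same computation it sends $\mathbf{v}\mapsto-\rho^2\mathbf{h}$ and kills $U^\perp$. It follows that $\mathbf{W}^2=-\rho^2\mathbf{I}$ on $U$.

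For the closed form, I first note that $\mathbf{I}-\tfrac12\mathbf{W}$ is invertible, because the eigenvalues of a skew matrix are purely imaginary, so $\mathcal{C}(\mathbf{W})$ is well defined. Set $\mathbf{x}=\mathcal{C}(\mathbf{W})\mathbf{h}$; this is the unique solution of $(\mathbf{I}-\tfrac12\mathbf{W})\mathbf{x}=(\mathbf{I}+\tfrac12\mathbf{W})\mathbf{h}=\mathbf{h}+\tfrac12\mathbf{v}$. Because $U$ is invariant under $\mathbf{W}$, I use the ansatz $\mathbf{x}=a\mathbf{h}+b\mathbf{v}$. This gives $(\mathbf{I}-\tfrac12\mathbf{W})\mathbf{x}=(a+\tfrac12 b\rho^2)\mathbf{h}+(b-\tfrac12 a)\mathbf{v}$. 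Matching coefficients yields the $2\times 2$ linear system $a+\tfrac12 b\rho^2=1$, $b-\tfrac12 a=\tfrac12$. Its solution is $b=4/(4+\rho^2)$ and $a=(4-\rho^2)/(4+\rho^2)$. Uniqueness of the solution of the invertible system justifies the ansatz, and the result is \eqref{eq:cayley-retraction}.

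The only delicate step is uniqueness of $\mathbf{W}$. It hinges on the observation $\ker\mathbf{W}\supseteq U^\perp$ for skew matrices, and on handling $\mathbf{v}=\mathbf{0}$ via the even-rank fact. Everything else is a two-dimensional linear-algebra computation.
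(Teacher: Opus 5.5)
Your proof is correct and follows essentially the same route as the paper. Both arguments show that $\mathbf{W}$ vanishes on $U^\perp$, fix $\mathbf{W}\mathbf{v}=-\rho^{2}\mathbf{h}$ by skew-symmetry, verify the explicit generator, and solve $(\mathbf{I}-\frac{1}{2}\mathbf{W})\mathbf{x}=\mathbf{h}+\frac{1}{2}\mathbf{v}$ within $\operatorname{span}\{\mathbf{h},\mathbf{v}\}$. Your departures are only cosmetic variants of the paper's inner-product arguments: the even-rank fact for $\mathbf{v}=\mathbf{0}$, kernel--range duality for vanishing on $U^\perp$, and purely imaginary eigenvalues for the invertibility of $\mathbf{I}-\frac{1}{2}\mathbf{W}$.
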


The proof is provided in Appendix~\ref{app:cayley-sphere}. Let $\rho = \beta_{l}$, based on Eq.~\eqref{eq:deep_conn_manifold} and Eq.~\eqref{eq:cayley-retraction}, the whole process of Cay-SpheretNorm is defined as
\begin{equation}\label{eq:cay-spheretnorm}
\begin{split}
\mathbf{z}_{l} &= \Bigl(\mathbf{I} - \frac{\mathbf{h}_{l}\mathbf{h}_{l}^{\top}}{\lVert{\mathbf{h}_{l}}\rVert^{2}}\Bigr)\mathcal{F}_{l}(\mathbf{h}_{l}), \\
\mathbf{h}_{l+1} &= \frac{4-\beta_l^2}{4+\beta_l^2}\mathbf{h}_{l} + \frac{4\alpha_l}{4+\beta_l^2}\mathbf{z}_l.
\end{split}
\end{equation}

\paragraph{Sphere \texorpdfstring{$p$}{p}-Angular Retraction Normalization.}
The metric projection and Cayley retractions share the same
planar-rotation structure: they update the hidden state within
the two-dimensional plane spanned by the current state and the
projected tangent direction, differing only in how the normalized
step magnitude is converted into a rotation angle. Motivated by
this common structure, we introduce a one-parameter family of
hyper-spherical retraction normalizations, termed
$p$-SpheretNorm. The parameter $p>0$ controls the saturation
of the finite-step rotation angle while preserving the same local
behavior near the origin.

\begin{definition}[$p$-SpheretNorm]\label{def:p-spheretnorm}
Let parameter $p \in (0,+\infty)$, and the rotation angle $\theta_{l} = p\arctan\left(\frac{\beta_l}{p}\right)$, then the $p$-SpheretNorm is defined as
\begin{equation}
\begin{split}
\mathbf{z}_{l} &= \Bigl(\mathbf{I} - \frac{\mathbf{h}_{l}\mathbf{h}_{l}^{\top}}{\lVert{\mathbf{h}_{l}}\rVert^{2}}\Bigr)\mathcal{F}_{l}(\mathbf{h}_{l}),\\
\beta_{l} &= \alpha_{l}\frac{\lVert{\mathbf{z}_{l}}\rVert_{2}}{\lVert{\mathbf{h}_{l}}\rVert_{2}},\quad
\theta_{l} = p\arctan\left(\frac{\beta_l}{p}\right),\\
\mathbf{h}_{l+1} &= \cos(\theta_{l})\mathbf{h}_{l} + \lVert{\mathbf{h}_{l}}\rVert_{2}\sin(\theta_{l})\frac{\mathbf{z}_{l}}{\lVert{\mathbf{z}_{l}}\rVert_{2}}.
\end{split}
\end{equation}
\end{definition}

\begin{proposition}\label{prop:p-special-cases}
For every fixed $p>0$, the $p$-angular map underlying
$p$-SpheretNorm defines a smooth second-order retraction on
the hypersphere. Moreover, the following identities and limits hold:
\begin{enumerate}
    \item When $p=1$, $p$-SpheretNorm reduces exactly to Proj-SpheretNorm.
    \item When $p=2$, $p$-SpheretNorm reduces exactly to Cay-SpheretNorm.
    \item As $p\to +\infty$, the $p$-angular update converges
    to the exponential-map update used by GeoNorm.
    \item As $p\to 0^+$, the $p$-angular update converges
    point-wise to the identity mapping.
\end{enumerate}
\end{proposition}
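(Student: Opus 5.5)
The strategy is to write every update as a planar rotation and read all four claims off the angle function $\theta_p(\beta)=p\arctan(\beta/p)$. Set $r=\lVert\mathbf{h}\rVert_2$, $\mathbf{u}=\mathbf{v}/\lVert\mathbf{v}\rVert_2$ for tangent $\mathbf{v}\neq\mathbf{0}$, and $\beta=\lVert\mathbf{v}\rVert_2/r$. The $p$-angular map is
\begin{equation*}
\mathcal{R}^{p}_{\mathbf{h}}(\mathbf{v})=\cos(\theta_p(\beta))\,\mathbf{h}+r\sin(\theta_p(\beta))\,\mathbf{u},
\end{equation*}
with $\mathcal{R}^{p}_{\mathbf{h}}(\mathbf{0})=\mathbf{h}$. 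Since $\mathbf{h}\perp\mathbf{u}$, the norm equals $r$, so the map lands on $\mathbb{S}^{d-1}_r$. In the normalization, $\mathbf{v}=\alpha_l\mathbf{z}_l$, so $\beta=\beta_l$.

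\textbf{Smoothness.} The apparent singularity at $\mathbf{v}=\mathbf{0}$ is removed by rewriting. Since $\theta_p$ is odd and analytic near $0$,
\begin{equation*}
\mathcal{R}^{p}_{\mathbf{h}}(\mathbf{v})=\cos(\theta_p(\beta))\mathbf{h}+\frac{\sin(\theta_p(\beta))}{\beta}\mathbf{v}.
\end{equation*}
Here $\cos\theta_p$ is even in $\beta$, and $\sin(\theta_p(\beta))/\beta$ is also even in $\beta$. An even analytic function of $\beta$ is an analytic function of $\beta^2=\lVert\mathbf{v}\rVert_2^2/\lVert\mathbf{h}\rVert_2^2$, which is smooth on $T\mathcal{M}$ with $\mathbf{h}\neq\mathbf{0}$. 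The global branch is unambiguous because $\arctan$ is analytic on $\mathbb{R}$.

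\textbf{Second-order retraction.} Expand along $c(t)=\mathcal{R}^p_{\mathbf{h}}(t\mathbf{v})$. We have $\theta_p(t\beta)=t\beta-\frac{(t\beta)^3}{3p^2}+O(t^5)$, so $\theta_p$ agrees with the identity to second order. Hence $c(t)=\exp_{\mathbf{h}}(t\mathbf{v})+O(t^3)$. It follows that $c(0)=\mathbf{h}$, $c'(0)=\mathbf{v}$, and $c''(0)=-\beta^2\mathbf{h}$, which is normal to the tangent space, so $\mathbf{P}_{\mathbf{h}}c''(0)=\mathbf{0}$.

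\textbf{Case $p=1$.} Write $\mathbf{h}+\mathbf{v}=\mathbf{h}+r\beta\mathbf{u}$, which has norm $r\sqrt{1+\beta^2}$. Then
\begin{equation*}
r\frac{\mathbf{h}+\mathbf{v}}{\lVert\mathbf{h}+\mathbf{v}\rVert_2}=\frac{1}{\sqrt{1+\beta^2}}\mathbf{h}+\frac{r\beta}{\sqrt{1+\beta^2}}\mathbf{u}.
\end{equation*}
These coefficients are $\cos(\arctan\beta)$ and $r\sin(\arctan\beta)$, which is exactly the $p=1$ update.

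\textbf{Case $p=2$.} Use Proposition~\ref{prop:cayley-sphere}, with $\rho=\beta$ and $\mathbf{v}=r\beta\mathbf{u}$. With $\phi=\arctan(\beta/2)$ and $t=\tan\phi=\beta/2$, the half-angle identities give
\begin{equation*}
\cos 2\phi=\frac{1-t^2}{1+t^2}=\frac{4-\beta^2}{4+\beta^2},\qquad \sin 2\phi=\frac{2t}{1+t^2}=\frac{4\beta}{4+\beta^2}.
\end{equation*}
The Cayley coefficient on $\mathbf{u}$ is $\frac{4r\beta}{4+\beta^2}=r\sin 2\phi$, matching the $p=2$ update. This also agrees with Eq.~\eqref{eq:cay-spheretnorm}, since $\frac{4\alpha_l}{4+\beta_l^2}\mathbf{z}_l=\frac{4}{4+\beta_l^2}\mathbf{v}$.

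\textbf{Limits.} As $p\to\infty$, $p\arctan(\beta/p)\to\beta$ for fixed $\beta$, since $\arctan x/x\to1$ as $x\to0$. Continuity of $\cos$ and $\sin$ then gives convergence to $\exp_{\mathbf{h}}(\mathbf{v})$, i.e., GeoNorm. As $p\to0^+$, we have $0\le p\arctan(\beta/p)\le p\pi/2\to0$, so $\theta_p\to0$ and the update converges to $\mathbf{h}$, which is the identity pointwise.

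The only delicate step is smoothness at $\mathbf{v}=\mathbf{0}$, where $\mathbf{u}$ is undefined. It is handled by the even-function reparametrization above; everything else is trigonometric bookkeeping.
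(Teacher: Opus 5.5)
Your proposal is correct and follows essentially the same route as the paper. It uses the same even-function reparametrization $\widehat A_p(\lVert\mathbf{v}\rVert_2^2/r^2)\mathbf{h}+\widehat B_p(\lVert\mathbf{v}\rVert_2^2/r^2)\mathbf{v}$ for smoothness, the cubic expansion $\theta_p(\rho)=\rho-\rho^3/(3p^2)+\mathcal{O}(\rho^5)$ for the second-order condition, $\cos(\arctan\beta)=1/\sqrt{1+\beta^2}$ and the tangent double-angle formulas for $p=1,2$, and the same elementary limits. The only cosmetic difference is that you read off $\mathbf{c}''(0)$ by matching $\exp_{\mathbf{h}}(t\mathbf{v})$ up to $\mathcal{O}(t^3)$ rather than expanding $a_p$ and $b_p$ directly. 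That is equally valid, since you have already established smoothness.
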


To make the local geometry explicit, under the local boundedness
conditions stated in Appendix~\ref{app:p-angular-properties}, for
every fixed $p>0$, as $\alpha_l\to 0$,
\[
\mathbf{h}_{l+1}=\mathbf{h}_l+\alpha_l\mathbf{z}_l
-\frac{\alpha_l^2\|\mathbf{z}_l\|_2^2}{2r_l^2}\mathbf{h}_l
+\mathcal{O}(\alpha_l^3).
\]
The quadratic term is normal to the hypersphere at
$\mathbf{h}_l$; hence its tangent projection vanishes, which is
precisely the second-order retraction condition. In particular,
since $\mathbf{z}_l=\mathbf{P}_l\mathcal{F}_l(\mathbf{h}_l)$, we obtain
\[
\mathbf{h}_{l+1}=\mathbf{h}_l+\alpha_l\mathbf{P}_l\mathcal{F}_l(\mathbf{h}_l)
+\mathcal{O}(\alpha_l^2).
\]
Thus, all members of the $p$-angular family share the same
first-order tangent dynamics and the same second-order normal
correction, while the dependence on $p$ first appears at cubic
order. The complete proof, including smoothness at
$\mathbf{z}_l=\mathbf{0}$, is provided in
Appendix~\ref{app:p-angular-properties}.

\subsection{Implementation Details}

For the $p$-SpheretNorm, the rotation angle $\theta_{l}$ in each block is always bounded in $(-\frac{p\pi}{2},\,\frac{p\pi}{2})$. \citet{zheng2026geonorm0} suggest that the rotation angle in each block should be restricted to $(0,\,\frac{\pi}{2})$, and that restricting it to $(0,\,\frac{\pi}{4}]$ produces better results than restricting it to $(0,\,\frac{\pi}{2})$. Our experiments also support this view. We find that the rotation angle in each block should be restricted to $(0,\,\frac{\pi}{4}]$ for Cay-SpheretNorm and for $p$-SpheretNorm with $p > 1$; otherwise, training becomes unstable and may even suffer from exploding gradients. Therefore, for $p$-SpheretNorm, we choose $p = 0.5$, in which case the rotation angle in each block is kept in $(-\frac{\pi}{4},\,\frac{\pi}{4})$. 

For all three types of SpheretNorm, we further adopt two additional approaches that ensure training stability with fewer loss spikes. First, we adopt a decay method similar to the one in \citep{zheng2026geonorm0}: we let $\alpha_l = \lambda_{l}\cdot\operatorname{Softplus}(a_{l})$, where $\lambda_l \in (0,\,1)$ is a decay factor and $a_l$ is a learnable parameter. We find that setting $\lambda_{l} = 1 / \sqrt{l}$, initializing $a_{l} = \ln(\eu - 1)$, and constraining $\operatorname{Softplus}(a_{l}) \in (0, 1]$ can ensure training stability with fewer loss spikes. Second, we apply a modified ScaleNorm before the first (non-)linear mapping, i.e.,
\begin{equation}\label{eq:scalenorm_begin} 
\mathbf{h}_{1} \leftarrow \operatorname{Softplus}(\gamma) \cdot \frac{\mathbf{h}_{1}}{\lVert{\mathbf{h}_{1}}\rVert_{2}}, 
\end{equation} 
where $\gamma \in \mathbb{R}$ is a learnable parameter that is initialized as $\gamma = \sqrt{d}$ when $\sqrt{d} \geq 20$, and as $\gamma = \ln(\eu^{\sqrt{d}} - 1)$ otherwise. We use the $\texttt{torch.clamp}$ function to ensure that $\operatorname{Softplus}(\gamma) \in [1, \sqrt{d}]$, which we find can accelerate model convergence while stabilizing training.

\subsection{Analysis}\label{sec:analysis}
Our analysis of SpheretNorms focuses on four parts. First, we show that $p$-angular retraction is a second-order retraction. Second, we analyze the Jacobian to establish training stability. Third, we prove that the modified ScaleNorm in the first layer preserves the hypersphere. Fourth, we discuss the trade-off in choosing different decay factors. For analysis, we define $r_l=\|\mathbf{h}_{l}\|_2,\ \mathbf{P}_l=I-\frac{\mathbf{h}_{l}\mathbf{h}_{l}^\top}{r_l^2},
\ \mathbf{z}_l=\mathbf{P}_l\mathcal{F}_l(\mathbf{h}_{l}),\ s_l=\|\mathbf{z}_l\|_2$.

\paragraph{\texorpdfstring{$p$}{p}-SpheretNorm based on the second order retraction.}
Let $\eta_l=\frac{s_l}{r_l},\ \beta_l=\alpha_l\eta_l,\ \theta_l
=p\arctan\left(\frac{\beta_l}{p}\right)$. For every fixed $p>0$, the $p$-angular retraction is a second-order retraction. Under the local boundedness conditions stated in Appendix~\ref{app:p-angular-properties},
it also satisfies, as $\alpha_l\to 0$,
\[
\mathbf{h}_{l+1}=\mathbf{h}_{l}+\alpha_l\mathbf{P}_l\mathcal{F}_l(\mathbf{h}_{l})+\mathcal{O}(\alpha_l^2).
\]
The proof is provided in Appendix~\ref{app:p-angular-properties}. Thus, all members of the $p$-angular family share the same first-order
tangent dynamics and differ only through their higher-order
finite-step angular responses.

\paragraph{Jacobian conditioning from signal propagation.}
Exact norm preservation controls the activation scale, but
does not by itself control perturbation or gradient
propagation. Define $A_l=\frac{\partial \mathcal{F}_l(\mathbf{h}_{l})}{\partial \mathbf{h}_{l}},
\ c_l=\frac{\mathbf{h}_{l}^{\top}\mathcal{F}_l(\mathbf{h}_{l})}{r_l^2},
\ \mathbf{B}_l=\mathbf{A}_l-c_l\mathbf{I}$,
and let $\mathbf{J}_l=\frac{\partial \mathbf{h}_{l+1}}{\partial \mathbf{h}_{l}}$. For $s_l>0$, define $Q_l=\cos\left(\theta_l\right)\mathbf{I}+
\frac{\sin\left(\theta_l\right)}{r_ls_l}
\left(\mathbf{z}_l\mathbf{h}_{l}^{\top}-\mathbf{h}_{l}\mathbf{z}_l^{\top}\right)$.

\begin{proposition}[Layer-wise and depth-wise Jacobian bounds]
\label{prop:jacobian-bounds}
For $s_l>0$, the Jacobian of the $l$-th SpheretNorm block admits the exact decomposition $\mathbf{J}_l=\mathbf{Q}_l+\mathbf{E}_l$, where $\|\mathbf{Q}_l\|_2=1,\ \sigma_{\min}(\mathbf{Q}_l)\geq\left|\cos\left(\theta_l\right)\right|$.
Define $\Delta_l=|\alpha_l|\left(2\|\mathbf{B}_l\|_2+\eta_l\right)$, $q_l=\left|\cos\left(\theta_l\right)\right|$, and $m_l=\max\{0,q_l-\Delta_l\}$. Then $\|\mathbf{E}_l\|_2\leq\Delta_l$, and, 
\[
\sigma_{\max}(\mathbf{J}_l)\leq 1+\Delta_l,\quad
\sigma_{\min}(\mathbf{J}_l)\geq m_l.
\]
Consequently, for $\mathbf{G}_L=\frac{\partial \mathbf{h}_L}{\partial \mathbf{h}_1}
=\mathbf{J}_L \mathbf{J}_{L-1}\cdots \mathbf{J}_1$, one has
\[
\sigma_{\max}(\mathbf{G}_L)\leq\exp\left(\sum_{l=1}^{L}\Delta_l\right),
\quad\sigma_{\min}(\mathbf{G}_L)\geq\prod_{l=1}^{L}m_l.
\]
\end{proposition}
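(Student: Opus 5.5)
The plan is to differentiate the update in closed form, isolate a rotation-like part, and bound everything else by perturbation.

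\textbf{Rewriting the update.} Write $\hat{\mathbf{h}}=\mathbf{h}_l/r_l$ and $\mathbf{u}=\mathbf{z}_l/s_l$. Then
\[
\mathbf{h}_{l+1}=\cos(\theta_l)\,\mathbf{h}_l+r_l\sin(\theta_l)\,\mathbf{u}.
\]

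\textbf{Spectrum of $\mathbf{Q}_l$.} With this notation $\mathbf{Q}_l=\cos(\theta_l)\mathbf{I}+\sin(\theta_l)(\mathbf{u}\hat{\mathbf{h}}^\top-\hat{\mathbf{h}}\mathbf{u}^\top)$, and $\hat{\mathbf{h}}\perp\mathbf{u}$ are orthonormal. On $\operatorname{span}\{\hat{\mathbf{h}},\mathbf{u}\}$, $\mathbf{Q}_l$ acts as a planar rotation by $\theta_l$. On the orthogonal complement it acts as $\cos(\theta_l)\mathbf{I}$. Hence its singular values are $1$ (twice) and $|\cos\theta_l|$, which gives $\|\mathbf{Q}_l\|_2=1$ and $\sigma_{\min}(\mathbf{Q}_l)\ge|\cos\theta_l|$.

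\textbf{Differentials of the ingredients.} I would use $\partial r_l=\hat{\mathbf{h}}^\top$ and $\partial\hat{\mathbf{h}}=\mathbf{P}_l/r_l$. Differentiating $\mathbf{z}_l=\mathcal{F}_l-\hat{\mathbf{h}}\hat{\mathbf{h}}^\top\mathcal{F}_l$ should give the compact identity
\[
\partial\mathbf{z}_l=\mathbf{P}_l\mathbf{B}_l-\frac{\hat{\mathbf{h}}\,\mathbf{z}_l^\top}{r_l}.
\]
This is exactly why $\mathbf{B}_l=\mathbf{A}_l-c_l\mathbf{I}$ appears. From it, $\partial s_l=\mathbf{u}^\top\mathbf{P}_l\mathbf{B}_l$ (using $\mathbf{u}^\top\hat{\mathbf{h}}=0$) and $\partial\mathbf{u}=(\mathbf{I}-\mathbf{u}\mathbf{u}^\top)\partial\mathbf{z}_l/s_l$.

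\textbf{Expanding $\mathbf{J}_l$.} The Jacobian then splits into four groups.
\begin{enumerate}
\item $\partial(\cos\theta_l\,\mathbf{h}_l)$ at fixed $\theta_l$ gives $\cos\theta_l\,\mathbf{I}$.
\item Differentiating $r_l$ in $r_l\sin\theta_l\,\mathbf{u}$ gives $\sin\theta_l\,\mathbf{u}\hat{\mathbf{h}}^\top$.
\item The $-\hat{\mathbf{h}}\mathbf{z}_l^\top/r_l$ piece of $\partial\mathbf{u}$ survives the projection, since $(\mathbf{I}-\mathbf{u}\mathbf{u}^\top)\hat{\mathbf{h}}=\hat{\mathbf{h}}$. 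It contributes $-\sin\theta_l\,\hat{\mathbf{h}}\mathbf{u}^\top$.
\item Everything else is collected into $\mathbf{E}_l$.
\end{enumerate}
The first three groups together form exactly $\mathbf{Q}_l$. What remains is
\[
\mathbf{E}_l=\frac{r_l\sin\theta_l}{s_l}(\mathbf{I}-\mathbf{u}\mathbf{u}^\top)\mathbf{P}_l\mathbf{B}_l+r_l\bigl(\cos\theta_l\,\mathbf{u}-\sin\theta_l\,\hat{\mathbf{h}}\bigr)(\partial\theta_l)^\top .
\]
Getting this bookkeeping exactly right, so that the skew part cancels cleanly into $\mathbf{Q}_l$, is the step I expect to be the main obstacle.

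\textbf{Bounding $\mathbf{E}_l$.} For the first term, $|\sin\theta_l|\le|\theta_l|\le|\beta_l|=|\alpha_l|s_l/r_l$, because $|p\arctan(x/p)|\le|x|$. This bounds the first term by $|\alpha_l|\|\mathbf{B}_l\|_2$. For the second term, the vector $\cos\theta_l\,\mathbf{u}-\sin\theta_l\,\hat{\mathbf{h}}$ has unit norm. The chain rule gives $\partial\theta_l=(1+\beta_l^2/p^2)^{-1}\partial\beta_l$, whose prefactor is at most $1$. Moreover
\[
\partial\beta_l=\alpha_l\Bigl(\frac{\partial s_l}{r_l}-\frac{s_l\hat{\mathbf{h}}^\top}{r_l^2}\Bigr),
\]
so $r_l\|\partial\theta_l\|_2\le|\alpha_l|(\|\mathbf{B}_l\|_2+\eta_l)$. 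Summing the two terms gives $\|\mathbf{E}_l\|_2\le\Delta_l$.

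\textbf{Conclusion.} Weyl's inequalities for singular values give $\sigma_{\max}(\mathbf{J}_l)\le1+\Delta_l$ and $\sigma_{\min}(\mathbf{J}_l)\ge\max\{0,q_l-\Delta_l\}=m_l$. For the depth-wise bound, submultiplicativity of $\sigma_{\max}$ and supermultiplicativity of $\sigma_{\min}$ for square products apply to $\mathbf{G}_L$. Combined with $1+\Delta_l\le e^{\Delta_l}$, they yield both product bounds.
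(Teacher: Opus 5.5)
Your proposal is correct and follows essentially the same route as the paper. It uses the same split $\mathbf{J}_l=\mathbf{Q}_l+\mathbf{E}_l$ (your $(\mathbf{I}-\mathbf{u}\mathbf{u}^\top)\mathbf{P}_l$ coincides with the paper's $\mathbf{P}_l-\mathbf{z}_l\mathbf{z}_l^\top/s_l^2$), the same two-term bound on $\mathbf{E}_l$ via $|\sin\theta_l|\le|\beta_l|$ and the unit-norm vector $\cos\theta_l\,\mathbf{u}-\sin\theta_l\,\hat{\mathbf{h}}$, and the same Weyl/submultiplicativity conclusion. The only addition in the paper is a separate treatment of layers with $s_l=0$ (via the even-function representation, giving $\mathbf{J}_l=\mathbf{I}+\alpha_l\mathbf{P}_l\mathbf{B}_l$), which matters only if the depth-wise product is meant to include such layers.
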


The full Jacobian calculation and the smooth $s_l=0$ extension are provided in Appendix~\ref{app:full-jacobian}. The angle $\theta_l$ controls the geometric
component $\mathbf{Q}_l$, whereas $|\alpha_l|\|\mathbf{B}_l\|_2$ and $|\beta_l|$ control the block-induced perturbation. Hence, a cap
$$
|\theta_l|\leq\theta_{\max}<\frac{\pi}{2}
$$
keeps $\mathbf{Q}_l$ away from singularity but does not by itself control the full Jacobian; a counterexample is given in Appendix~\ref{app:angle-control-insufficient}. Moreover, if $m_l\geq m_*>0$, then
\[
\sum_{l=1}^{L}\Delta_l=\mathcal{O}(1),
\quad\sum_{l=1}^{L}\left|\theta_l\right|^2=\mathcal{O}(1)
\]
is a sufficient condition for depth-independent
conditioning; see Appendix~\ref{app:depthwise-conditioning}.

\paragraph{First-layer modified ScaleNorm preserves hypersphere.}
\begin{proposition}[Fixed-radius spherical dynamics]
\label{prop:fixed-radius-dynamics}
Recall Eq.~\eqref{eq:scalenorm_begin}. For $l=1,\,2,\,\ldots,\,L$, suppose that $\mathbf{h}_{l+1}$ is generated from $\mathbf{h}_{l}$ by a SpheretNorm update. Then $\lVert{\mathbf{h}_l}\rVert_{2}=\operatorname{Softplus}(\gamma)$. Consequently, $\lVert{\mathbf{h}_{L}}\rVert_{2}\leq\operatorname{Softplus}(\gamma)$.
\end{proposition}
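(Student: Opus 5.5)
The plan is an induction on the layer index $l$, with base case supplied by Eq.~\eqref{eq:scalenorm_begin}. After the modified ScaleNorm, $\mathbf{h}_1=\operatorname{Softplus}(\gamma)\,\mathbf{h}_1^{\mathrm{old}}/\lVert\mathbf{h}_1^{\mathrm{old}}\rVert_2$. Under the standing assumption $\mathbf{h}_1^{\mathrm{old}}\neq\mathbf{0}$, this gives $\lVert\mathbf{h}_1\rVert_2=\operatorname{Softplus}(\gamma)$, since Softplus is positive and the clamp keeps it in $[1,\sqrt d]$. The inductive step is then to show that each SpheretNorm update is exactly norm-preserving, $\lVert\mathbf{h}_{l+1}\rVert_2=\lVert\mathbf{h}_l\rVert_2$. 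This must be checked separately for GeoNorm/$p$-SpheretNorm, Proj-SpheretNorm and Cay-SpheretNorm.

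\emph{Rotation form.} For GeoNorm and $p$-SpheretNorm, write $r_l=\lVert\mathbf{h}_l\rVert_2$ and $\hat{\mathbf{z}}_l=\mathbf{z}_l/s_l$ when $s_l>0$. Since $\mathbf{h}_l^\top\mathbf{z}_l=0$ (because $\mathbf{P}_l\mathbf{h}_l=\mathbf{0}$ and $\mathbf{P}_l$ is symmetric), Pythagoras gives
\[
\lVert\mathbf{h}_{l+1}\rVert_2^2=\cos^2(\theta_l)r_l^2+r_l^2\sin^2(\theta_l)=r_l^2 .
\]
If $s_l=0$, the update is read through its smooth extension, $\theta_l=0$ and $\mathbf{h}_{l+1}=\mathbf{h}_l$, as in Appendix~\ref{app:p-angular-properties}.

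\emph{Projection and Cayley forms.} For Proj-SpheretNorm, $\mathbf{h}_l+\alpha_l\mathbf{z}_l$ is nonzero because its component along $\mathbf{h}_l$ is $\mathbf{h}_l\neq\mathbf{0}$. The output $r_l(\mathbf{h}_l+\alpha_l\mathbf{z}_l)/\lVert\mathbf{h}_l+\alpha_l\mathbf{z}_l\rVert_2$ therefore has norm $r_l$. For Cay-SpheretNorm, again use orthogonality, with $\beta_l=\alpha_l s_l/r_l$, and check the identity
\[
(4-\beta_l^2)^2r_l^2+16\alpha_l^2s_l^2=r_l^2\bigl[(4-\beta_l^2)^2+16\beta_l^2\bigr]=r_l^2(4+\beta_l^2)^2 .
\]
Dividing by $(4+\beta_l^2)^2$ yields $\lVert\mathbf{h}_{l+1}\rVert_2=r_l$. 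Equivalently, one can invoke Proposition~\ref{prop:p-special-cases}, which shows that both are $p$-angular updates with $p=1,2$ and hence of rotation form; the direct computation avoids any dependence on that identification.

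Induction then gives $\lVert\mathbf{h}_l\rVert_2=\operatorname{Softplus}(\gamma)$ for every $l$, in particular for $l=L$. The final inequality $\lVert\mathbf{h}_L\rVert_2\le\operatorname{Softplus}(\gamma)$ follows trivially from this equality.

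The only delicate point is the index bookkeeping. The hypothesis ranges over $l=1,\ldots,L$, while the paper elsewhere indexes blocks from $\mathbf{h}_{l-1}$ to $\mathbf{h}_l$, so I would state explicitly that the claim concerns $\mathbf{h}_1,\ldots,\mathbf{h}_L$. I would also make explicit the degenerate case $s_l=0$, since the formulas divide by $\lVert\mathbf{z}_l\rVert_2$. No other obstacle is expected.
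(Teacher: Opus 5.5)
Your proof is correct and follows essentially the paper's own argument: induction from the modified-ScaleNorm base case $\lVert\mathbf{h}_1\rVert_2=\operatorname{Softplus}(\gamma)$, with norm preservation of each update coming from the orthogonality $\mathbf{h}_l^\top\mathbf{z}_l=0$, and with the $\mathbf{z}_l=\mathbf{0}$ case handled by the continuous extension $\mathbf{h}_{l+1}=\mathbf{h}_l$. The only difference is that the paper just asserts that every variant has the rotation form $\cos(\theta_l)\mathbf{h}_l+r_l\sin(\theta_l)\mathbf{z}_l/\lVert\mathbf{z}_l\rVert_2$ (implicitly relying on Proposition~\ref{prop:p-special-cases}), whereas your direct computations for the projection and Cayley updates make that step self-contained.
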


The proof is provided in Appendix~\ref{app:fixed-radius-dynamics}. Thus, the first modified ScaleNorm selects the hypersphere $\mathbb{S}^{d-1}_{\operatorname{Softplus}(\gamma)}=\{\mathbf{h}\in\mathbb{R}^d:\|\mathbf{h}\|_2=\operatorname{Softplus}(\gamma)\}$, while all subsequent SpheretNorm blocks preserve this
hypersphere exactly. For Proj-SpheretNorm, this relation can be made more
explicit. By Proposition~\ref{prop:fixed-radius-dynamics},
$\lVert{\mathbf{h}_{l}}\rVert_{2} = \operatorname{Softplus}(\gamma)$, and therefore
\begin{align*}
\mathbf{h}_{l+1}
&= \operatorname{Softplus}(\gamma)\cdot\frac{\mathbf{h}_{l}+\alpha_l \mathbf{z}_l}{\|\mathbf{h}_{l}+\alpha_l \mathbf{z}_l\|_2}\\
&= \Pi_{\operatorname{Softplus}(\gamma)}\bigl(\mathbf{h}_{l}+\alpha_l\mathbf{P}_l\mathcal{F}_l(\mathbf{h}_{l})\bigr),    
\end{align*}
where $\Pi_{\operatorname{Softplus}(\gamma)}(\mathbf{u})=\operatorname{Softplus}(\gamma)\frac{\mathbf{u}}{\|\mathbf{u}\|_2}$ is the radial projection onto $\mathbb{S}^{d-1}_{\operatorname{Softplus}(\gamma)}$.

\paragraph{Training stability trade-off from different decay factors.}
The prescribed factor $\lambda_l$ remains fixed throughout
training. Since $0<a_l\leq1$, we have $0<\alpha_l=\lambda_la_l\leq\lambda_l$. Let $\kappa_l=2\|\mathbf{B}_l\|_2+\eta_l$,
$\mathcal{S}_L=\sum_{l=1}^{L}\Delta_l=\sum_{l=1}^{L}\lambda_la_l\kappa_l$, and $\mathcal{A}_L=\sum_{l=1}^{L}\left|\theta_l\right|^2$.

\begin{proposition}[Growth under depth schedules]
\label{prop:depth-schedule-growth}
Suppose that there exist constants $K,\,H>0$, independent of $l$ and $L$, such that $\kappa_l\leq K$, $\ a_l\eta_l\leq H$. Then
\[
\mathcal{S}_L\leq K\sum_{l=1}^{L}\lambda_l,
\quad\mathcal{A}_L\leq H^2\sum_{l=1}^{L}\lambda_l^2.
\]
Consequently,
\[
\begin{array}{c|cc}
\lambda_l 
    & \mathcal{S}_L 
    & \mathcal{A}_L \\ \hline
1 
    & \mathcal{O}(L) 
    & \mathcal{O}(L) \\
1/\sqrt{l} 
    & \mathcal{O}(\sqrt{L}) 
    & \mathcal{O}(\log L) \\
\sqrt{l} 
    & \mathcal{O}(\log L) 
    & \mathcal{O}(1)
\end{array}
\]
\end{proposition}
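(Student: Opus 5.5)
The argument has two parts: a termwise comparison that gives the two inequalities, followed by elementary estimates of $\sum_l\lambda_l$ and $\sum_l\lambda_l^2$ for each schedule.

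\textbf{Bound on $\mathcal{S}_L$.} By definition $\Delta_l=|\alpha_l|\kappa_l$ and $\alpha_l=\lambda_la_l$ with $0<a_l\le 1$ and $\lambda_l>0$, so $\Delta_l=\lambda_la_l\kappa_l\le\lambda_l\kappa_l\le K\lambda_l$. Summing over $l$ gives $\mathcal{S}_L\le K\sum_{l=1}^{L}\lambda_l$.

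\textbf{Bound on $\mathcal{A}_L$.} I will use the elementary inequality $|\arctan x|\le|x|$ for all real $x$. It gives
\[
|\theta_l|=p\,\bigl|\arctan(\beta_l/p)\bigr|\le|\beta_l|,
\]
so the angular saturation only helps. Then $|\beta_l|=\lambda_la_l\eta_l\le H\lambda_l$, hence $|\theta_l|^2\le H^2\lambda_l^2$. Summing over $l$ gives $\mathcal{A}_L\le H^2\sum_{l=1}^{L}\lambda_l^2$. This holds for every $p>0$, so the bound does not depend on which SpheretNorm variant is used. For Proj- and Cay-SpheretNorm, $\theta_l$ is read through the $p=1,2$ identifications of Proposition~\ref{prop:p-special-cases}.

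\textbf{The table.} It then reduces to integral comparisons of the partial sums.
\begin{itemize}
\item For $\lambda_l=1$, both sums equal $L$.
\item For $\lambda_l=1/\sqrt{l}$, we have $\sum_{l\le L}l^{-1/2}\le 2\sqrt{L}$ and $\sum_{l\le L}l^{-1}\le 1+\log L$. This gives $\mathcal{O}(\sqrt L)$ and $\mathcal{O}(\log L)$.
\end{itemize}

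\textbf{Main obstacle: the third row.} The step I expect to cause trouble is the third row, which cannot hold as written. With $\lambda_l=\sqrt{l}$ we have $\sum_{l\le L}\sqrt{l}\asymp L^{3/2}$ and $\sum_{l\le L}l\asymp L^2$. Moreover $\lambda_l=\sqrt{l}$ violates the standing assumption $\lambda_l\in(0,1)$. The stated entries $\mathcal{O}(\log L)$ and $\mathcal{O}(1)$ are exactly what the schedule $\lambda_l=1/l$ produces: $\sum_{l\le L}1/l\le 1+\log L$ and $\sum_{l\le L}1/l^2\le\pi^2/6$.

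I will therefore prove the third row for $\lambda_l=1/l$, and note that $\sqrt{l}$ is presumably a typo for $1/l$. Read this way, the table exhibits the intended trade-off. Faster decay of $\lambda_l$ makes both $\mathcal{S}_L$ and $\mathcal{A}_L$ smaller, and at $\lambda_l=1/l$ it meets the depth-independent sufficient condition on $\mathcal{A}_L$ from Proposition~\ref{prop:jacobian-bounds}, with $\mathcal{S}_L$ growing only logarithmically.
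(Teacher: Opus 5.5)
Your proof is correct and follows the paper's argument: the termwise bounds $\Delta_l=\lambda_l a_l\kappa_l\le K\lambda_l$ and $|\theta_l|\le|\beta_l|=\lambda_l a_l\eta_l\le H\lambda_l$ (via $|p\arctan(\beta/p)|\le|\beta|$), followed by the same elementary partial-sum estimates. You are also right that the third row should read $\lambda_l=1/l$: the paper's own appendix proof treats exactly the harmonic schedule $\lambda_l=l^{-1}$ there, obtaining $\sum_{l\le L}l^{-1}=\mathcal{O}(\log L)$ and $\sum_{l\le L}l^{-2}\le\pi^2/6$.
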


The proof is provided in Appendix~\ref{app:depth-schedule-growth}. Thus, the decay factor $\lambda_{l} = 1 / l$ gives the more conservative global worst-case bounds. However, Proposition~\ref{prop:depth-schedule-growth} only provides global upper bounds. A finer dyadic analysis is given in Appendix~\ref{app:dyadic-angular-budget}. At any fixed optimization iteration, under the bounded small-step and non-degeneracy assumptions stated there, the accumulated squared angular motion over each dyadic interval $I_m$ has the same asymptotic order as $\sum_{l\in I_m}\lambda_l^2$. Hence, the square-root schedule assigns an approximately constant squared angular budget to each logarithmic depth scale, whereas the the decay factor $\lambda_{l} = 1 / l$ assigns a budget of order $m^{-1}$.

The empirically preferred the decay factor $\lambda_{l} = 1 / \sqrt{l}$ should be interpreted as a finite-depth compromise: it provides a tighter Jacobian envelope than the non-decaying schedule while preserving stronger transformations in deeper layers than the decay factor $\lambda_{l} = 1 / l$. This mechanism is consistent with the empirical advantage of the decay factor $\lambda_{l} = 1 / \sqrt{l}$, but does not imply uniformly better worst-case conditioning or a guaranteed lower loss.

\begin{table*}[t]
\centering
\caption{Loss of trained models. We report training and validation loss at the end of training. To mitigate stochastic fluctuations, training loss is computed as a moving average over the last 200 iterations.}
\label{tab:pretrain_loss}
{
\setlength{\tabcolsep}{4pt}
\begin{tabular}{lcc cc cc  cc cc cc}
\toprule
Dataset
    & \multicolumn{6}{c}{FineWeb-Edu}
    & \multicolumn{6}{c}{OpenWebText} \\
\cmidrule(lr){2-7}\cmidrule(lr){8-13}
Model Scale
    & \multicolumn{2}{c}{\sf{S}} & \multicolumn{2}{c}{\sf{M}} & \multicolumn{2}{c}{\sf{L}}
    & \multicolumn{2}{c}{\sf{S}} & \multicolumn{2}{c}{\sf{M}} & \multicolumn{2}{c}{\sf{L}} \\
\cmidrule(lr){2-3}\cmidrule(lr){4-5}\cmidrule(lr){6-7}
\cmidrule(lr){8-9}\cmidrule(lr){10-11}\cmidrule(lr){12-13}
& Train & Val & Train & Val & Train & Val
& Train & Val & Train & Val & Train & Val \\
\midrule
Pre-LN & $\mathbf{3.057}$ & $\mathbf{3.071}$ & $2.975$ & $2.976$ & $2.842$ & $2.866$ & $\mathbf{3.062}$ & $\mathbf{3.061}$ & $4.697$ & $4.716$ & $2.858$ & $2.879$ \\
Pre-DyT & $3.626$ & $3.619$ & $3.025$ & $3.019$ & $2.845$ & $2.870$ & $3.539$ & $3.556$ & $3.012$ & $3.035$ & $2.843$ & $2.866$ \\
Peri-LN & $3.091$ & $3.088$ & $2.952$ & $2.949$ & $2.817$ & $2.844$ & $3.071$ & $3.089$ & $2.950$ & $2.973$ & $2.828$ & $2.850$ \\
Keel & $7.645$ & $7.641$ & $3.006$ & $3.001$ & $2.941$ & $2.959$ & $3.406$ & $3.422$ & $2.997$ & $3.018$ & $2.945$ & $2.964$ \\
GeoNorm & $3.176$ & $3.185$ & $3.096$ & $3.088$ & $2.940$ & $2.958$ & $3.209$ & $3.206$ & $3.018$ & $3.039$ & $3.001$ & $3.017$ \\
\midrule
$0.5$-SpheretNorm & $\underline{3.089}$ & $\underline{3.086}$ & $2.831$ & $2.828$ & $\underline{2.809}$ & $\underline{2.833}$ & $3.067$ & $3.085$ & $\underline{2.932}$ & $\underline{2.955}$ & $\underline{2.826}$ & $\underline{2.847}$ \\
$1$-SpheretNorm & $3.098$ & $3.109$ & $\mathbf{2.825}$ & $\mathbf{2.822}$ & $\mathbf{2.808}$ & $\mathbf{2.832}$ & $3.064$ & $3.082$ & $\mathbf{2.931}$ & $\mathbf{2.954}$ & $\mathbf{2.795}$ & $\mathbf{2.822}$ \\
$2$-SpheretNorm & $3.119$ & $3.114$ & $\underline{2.826}$ & $\underline{2.824}$ & $2.809$ & $2.834$ & $\underline{3.063}$ & $\underline{3.081}$ & $\mathbf{2.931}$ & $\underline{2.955}$ & $2.841$ & $2.861$ \\
\bottomrule
\end{tabular}
}
\end{table*}

\begin{table}[t]
\centering
\caption{Perplexity of trained models. We report validation perplexity at the end of training.}
\label{tab:pretrain_ppl}
{
\small
\setlength{\tabcolsep}{2pt}
\begin{tabular}{@{}l ccc @{\hspace{6pt}} ccc@{}}
\toprule
Dataset & \multicolumn{3}{c}{FineWeb-Edu} & \multicolumn{3}{c}{OpenWebText} \\
\cmidrule(lr){2-4}\cmidrule(lr){5-7}
Scale   & \sf{S} & \sf{M} & \sf{L} & \sf{S} & \sf{M} & \sf{L} \\
\midrule
Pre-LN  & $\mathbf{21.56}$ & $19.61$ & $17.57$ & $\mathbf{21.35}$ & $111.70$ & $17.80$ \\
Pre-DyT & $37.29$ & $20.48$ & $17.64$ & $35.03$ & $20.79$ & $17.56$ \\
Peri-LN & $21.94$ & $19.09$ & $17.18$ & $21.96$ & $19.55$ & $17.29$ \\
Keel    & $2081$ & $20.11$ & $19.27$ & $30.63$ & $20.46$ & $19.37$ \\
GeoNorm & $24.16$ & $21.93$ & $19.27$ & $24.67$ & $20.88$ & $20.42$ \\
\midrule
\multicolumn{7}{@{}l}{\textit{SpheretNorm (ours)}} \\
\quad $0.5$ & $\underline{21.88}$ & $16.91$ & $\underline{16.99}$ & $21.86$ & $\underline{19.20}$ & $\underline{17.24}$ \\
\quad $1$   & $22.40$ & $\mathbf{16.81}$ & $\mathbf{16.98}$ & $21.81$ & $\mathbf{19.18}$ & $\mathbf{16.81}$ \\
\quad $2$   & $22.52$ & $\underline{16.84}$ & $17.01$ & $\underline{21.79}$ & $\underline{19.20}$ & $17.48$ \\
\bottomrule
\end{tabular}
}
\end{table}
\section{Experiments}
\paragraph{Experimental Settings.}
To evaluate SpheretNorms against other deep connection methods, including Pre-LN~\citep{wang-etal-2019-learning-deep, pmlr-v119-xiong20b}, Pre-DyT~\citep{Zhu_2025_CVPR}, Peri-LN~\citep{pmlr-v267-kim25u}, Keel~\citep{chen2026post0layernorm}, and GeoNorm~\citep{zheng2026geonorm0}, we implement all baselines within the nanoGPT framework~\citep{karpathy2022nanogpt}. For pre-training, we adopt two datasets: OpenWebText~\citep{Gokaslan2019OpenWeb} and FineWeb-Edu~\citep{NEURIPS2024_370df50c}. All models are optimized with the AdamW optimizer~\citep{loshchilov2018decoupled} under a cosine annealing learning rate schedule with a linear warm-up. We employ \texttt{bfloat16} mixed-precision training together with gradient clipping. We conducted all experiments on a single node equipped with two 56-core Intel Xeon Platinum 8480+ CPUs, 2\,TB of RAM, and eight NVIDIA H100 SXM GPUs, each with 80\,GB of memory. The code is implemented in PyTorch~\citep{NEURIPS2019_bdbca288} using its Distributed Data Parallel (DDP) with the NCCL backend. We train each model for 50,000 iterations (about $6.55$\,B tokens in total). We fix the random seed to 1337 in all experiments. Detailed hyperparameter configurations are reported in Appendix~\ref{app:hyperparameters}.

\subsection{Pre-Training Language Models}
To assess whether SpheretNorm attains loss improvements comparable to existing residual connection designs, we compare the final training and validation losses in Table~\ref{tab:pretrain_loss} and the corresponding validation perplexities in Table~\ref{tab:pretrain_ppl}. At the small scale (12 layers, $\sim\!0.12$\,B parameters), Pre-LN attains the lowest training and validation loss on both FineWeb-Edu and OpenWebText; the best SpheretNorm variant stays within $0.02$ nats of it ($21.88$ vs.\ $21.56$ validation perplexity on FineWeb-Edu, $21.79$ vs.\ $21.35$ on OpenWebText) and is on par with Peri-LN. This ordering changes as depth increases. At the medium scale (24 layers, $\sim\!0.35$\,B), $1$-SpheretNorm attains the lowest training and validation loss on both datasets, improving validation perplexity over the strongest baseline, Peri-LN, from $19.09$ to $16.81$ on FineWeb-Edu and from $19.55$ to $19.18$ on OpenWebText. The same holds at the large scale (36 layers, $\sim\!0.77$\,B), where Proj-SpheretNorm reaches $16.98$ and $16.81$ validation perplexity on the two datasets, ahead of Peri-LN at $17.18$ and $17.29$. $2$-SpheretNorm and $0.5$-SpheretNorm track $1$-SpheretNorm closely in every setting, with all three variants falling within $0.04$ nats of one another, indicating that the benefit is not sensitive to the particular choice of $p$. The methods also differ in stability: training destabilizes for Pre-LN on OpenWebText at the medium scale (validation perplexity $111.70$) and for Keel on FineWeb-Edu at the small scale ($2081.50$), whereas all three SpheretNorm variants converge in all six settings. We show the training and validation losses in FineWeb-Edu in Figure~\ref{fig:loss_fwe}.

\subsection{Downstream Evaluations}
We evaluate zero-shot performance using the lm-evaluation-harness library~\citep{eval-harness} on eleven benchmarks: ARC-Challenge, ARC-Easy~\citep{clark2018thinksolvedquestionanswering}, BoolQ~\citep{clark-etal-2019-boolq}, HellaSwag~\citep{zellers-etal-2019-hellaswag}, OpenBookQA~\citep{mihaylov-etal-2018-suit}, PIQA~\citep{bisk2020piqa}, SciQ~\citep{welbl-etal-2017-crowdsourcing}, SocialIQa~\citep{sap-etal-2019-social}, WinoGrande~\citep{10.1145/3474381}, LAMBADA~\citep{paperno-etal-2016-lambada}, and WikiText-2~\citep{merity2017pointer}. Tables~\ref{tab:downstream_benchmark_fwe} and~\ref{tab:downstream_benchmark_owt} report the large-size models trained for 6.55\,B tokens on FineWeb-Edu and OpenWebText, respectively.

SpheretNorms attain the best score on 11 of the 12 metrics on FineWeb-Edu and 10 of 12 on OpenWebText; the exceptions are SocialIQa on both corpora and HellaSwag on OpenWebText, where the leading baseline is ahead by at most $0.52$ points. $1$-SpheretNorm is the strongest variant overall, though the three stay within about one point of each other on most tasks. The gains are clearest in perplexity, the metric least affected by the noise of zero-shot evaluation at this scale: on FineWeb-Edu every variant improves over every baseline, with $1$-SpheretNorm reaching $46.74$ on LAMBADA against $59.55$ for the best baseline ($21.5\%$ relative) and $33.70$ on WikiText-2; on OpenWebText it reaches $31.13$ and $30.79$.

Two baselines warrant comment. Keel, a Post-LN design targeting gradient flow at extreme depth, brings no benefit in our shallow, short-budget regime and is among the weakest configurations. GeoNorm is the closest comparison, sharing our view of the hidden state as a point on a sphere and of each sublayer output as an update direction, but taking the exact geodesic step via the exponential map where SpheretNorm uses a retraction. The retraction is the better choice here: our best variant improves over GeoNorm on all twelve metrics on both corpora, by margins far larger than those separating the other methods. Since several benchmarks remain near chance at this scale (e.g., WinoGrande, SocialIQa), our conclusion rests on the perplexities and on the consistency across the two corpora rather than on individual sub-point differences.

\begin{table*}[t]
\centering
\caption{Zero-shot evaluation results for large-size models trained on FineWeb-Edu for 6.55\,B tokens and evaluated with lm-evaluation-harness. acc means accuracy, acc\_n means length-normalized accuracy, and ppl means perplexity.}
\label{tab:downstream_benchmark_fwe}
\resizebox{\linewidth}{!}{
\begin{tabular}{l|cccccccccc|cc}
\toprule
nanoGPT w/ 
    & ARC-C
    & ARC-E
    & BoolQ
    & HellaSwag
    & LMB.
    & OBQA
    & PIQA
    & SciQ
    & SocialIQa
    & WinoGrande
    & LMB.
    & Wiki. \\
{}
    & acc\_n\,$\uparrow$ 
    & acc\_n\,$\uparrow$ 
    & acc\,$\uparrow$
    & acc\_n\,$\uparrow$
    & acc\,$\uparrow$
    & acc\_n\,$\uparrow$
    & acc\_n\,$\uparrow$
    & acc\,$\uparrow$
    & acc\,$\uparrow$
    & acc\,$\uparrow$
    & ppl\,$\downarrow$
    & ppl\,$\downarrow$ \\
\midrule
Pre-LN & $26.54$ & $51.85$ & $55.81$ & $36.81$ & $29.11$
& $33.40$ & $65.18$ & $81.20$ & $37.56$ & $52.49$ & $66.69$ & $36.23$ \\
Pre-DyT & $27.39$ & $51.14$ & $60.34$ & $36.64$ & $29.52$
& $31.40$ & $65.02$ & $\underline{82.50}$ & $\mathbf{38.28}$ & $50.75$ & $59.55$ & $35.52$ \\
Peri-LN & $26.62$ & $50.93$ & $53.27$ & $\underline{38.09}$ & $29.01$
& $31.20$ & $64.85$ & $\mathbf{82.60}$ & $\underline{38.13}$ & $52.33$ & $67.09$ & $34.77$ \\
Keel & $26.45$ & $49.24$ & $61.59$ & $34.39$ & $26.97$
& $30.60$ & $64.25$ & $80.90$ & $38.08$ & $48.62$ & $81.06$ & $39.73$ \\
GeoNorm & $27.82$ & $48.02$ & $\underline{62.05}$ & $34.35$ & $27.98$
& $30.20$ & $63.93$ & $80.50$ & $36.80$ & $51.78$ & $80.64$ & $39.88$ \\
\midrule
$0.5$-SpheretNorm & $\underline{27.90}$ & $50.93$ & $60.87$ & $37.77$ & $30.84$
& $\underline{33.80}$ & $\underline{66.27}$ & $81.60$ & $37.69$ & $\mathbf{52.88}$ & $\underline{50.00}$ & $\underline{33.74}$ \\
$1$-SpheretNorm & $\mathbf{28.67}$ & $\mathbf{52.78}$ & $\mathbf{62.70}$ & $\mathbf{38.21}$ & $\mathbf{32.93}$
& $32.60$ & $66.05$ & $82.10$ & $\underline{38.13}$ & $51.36$ & $\mathbf{46.74}$ & $\mathbf{33.70}$ \\
$2$-SpheretNorm & $\underline{27.90}$ & $\underline{52.53}$ & $61.34$ & $38.07$ & $\underline{31.19}$
& $\mathbf{34.40}$ & $\mathbf{66.70}$ & $\mathbf{82.60}$ & $37.26$ & $\underline{52.72}$ & $53.45$ & $33.85$ \\
\bottomrule
\end{tabular}
}
\end{table*}

\begin{table*}[t]
\centering
\caption{Zero-shot evaluation results for large-size models trained on OpenWebText for 6.55\,B tokens and evaluated with lm-evaluation-harness. acc means accuracy, acc\_n means length-normalized accuracy, and ppl means perplexity.}
\label{tab:downstream_benchmark_owt}
\resizebox{\linewidth}{!}{
\begin{tabular}{l|cccccccccc|cc}
\toprule
nanoGPT w/ 
    & ARC-C
    & ARC-E
    & BoolQ
    & HellaSwag
    & LMB.
    & OBQA
    & PIQA
    & SciQ
    & SocialIQa
    & WinoGrande
    & LMB.
    & Wiki. \\
{}
    & acc\_n\,$\uparrow$ 
    & acc\_n\,$\uparrow$ 
    & acc\,$\uparrow$
    & acc\_n\,$\uparrow$
    & acc\,$\uparrow$
    & acc\_n\,$\uparrow$
    & acc\_n\,$\uparrow$
    & acc\,$\uparrow$
    & acc\,$\uparrow$
    & acc\,$\uparrow$
    & ppl\,$\downarrow$ 
    & ppl\,$\downarrow$ \\
\midrule
Pre-LN & $23.38$ & $40.49$ & $59.54$ & $32.73$ & $32.19$
& $26.20$ & $62.89$ & $69.90$ & $38.33$ & $51.54$ & $39.24$ & $33.19$ \\
Pre-DyT & $\underline{25.60}$ & $40.87$ & $54.31$ & $32.89$ & $33.92$
& $26.20$ & $62.13$ & $71.60$ & $\underline{38.54}$ & $51.54$ & $33.40$ & $34.11$ \\
Peri-LN & $25.43$ & $42.00$ & $59.17$ & $\mathbf{33.95}$ & $33.73$
& $25.80$ & $63.87$ & $73.20$ & $\mathbf{38.64}$ & $50.04$ & $31.48$ & $32.90$ \\
Keel & $23.21$ & $38.80$ & $57.61$ & $31.17$ & $30.70$
& $27.40$ & $61.97$ & $71.90$ & $38.13$ & $51.70$ & $44.43$ & $38.64$ \\
GeoNorm & $24.87$ & $37.84$ & $\underline{60.55}$ & $29.97$ & $30.00$
& $26.20$ & $61.92$ & $69.30$ & $37.67$ & $51.14$ & $51.91$ & $42.24$ \\
\midrule
$0.5$-SpheretNorm & $25.15$ & $\underline{42.21}$ & $59.88$ & $\underline{33.43}$ & $\underline{34.60}$
& $27.40$ & $63.44$ & $73.20$ & $37.92$ & $\mathbf{52.96}$ & $\underline{31.38}$ & $\underline{31.02}$ \\
$1$-SpheretNorm & $\mathbf{26.46}$ & $\mathbf{42.26}$ & $\mathbf{61.04}$ & $32.90$ & $\mathbf{34.99}$
& $\underline{27.80}$ & $\underline{63.98}$ & $\mathbf{73.60}$ & $37.36$ & $\underline{52.85}$ & $\mathbf{31.13}$ & $\mathbf{30.79}$ \\
$2$-SpheretNorm & $25.32$ & $42.05$ & $58.74$ & $32.86$ & $34.39$
& $\mathbf{28.60}$ & $\mathbf{64.20}$ & $\underline{73.40}$ & $38.48$ & $51.14$ & $31.50$ & $31.55$ \\
\bottomrule
\end{tabular}
}
\end{table*}
\section{Limitations}
While the proposed SpheretNorms are designed to stabilize the training of a broad range of deep neural networks, two limitations remain. First, the proposed methods are not directly applicable to certain specialized architectures, such as Equivariant Neural Networks~\citep{lim2022equivariantneuralnetwork}, for which the structural constraints are not naturally compatible with our formulation. Since SpheretNorms rely on $\ell_2$-norm and the associated projection, they commute with a group representation only when that representation preserves the norm, and they generally break the equivariance condition for non-orthogonal representations such as the $\mathrm{GL}(n)$ and Lorentz groups. Second, owing to limited research resources, we have not evaluated SpheretNorms on larger-scale deep Transformer models. We leave the extension to such architectures and the verification at larger scales to future work.

\begin{figure}[!ht]
  \centering
  \begin{subfigure}{0.32\textwidth}
    \includegraphics[width=\linewidth]{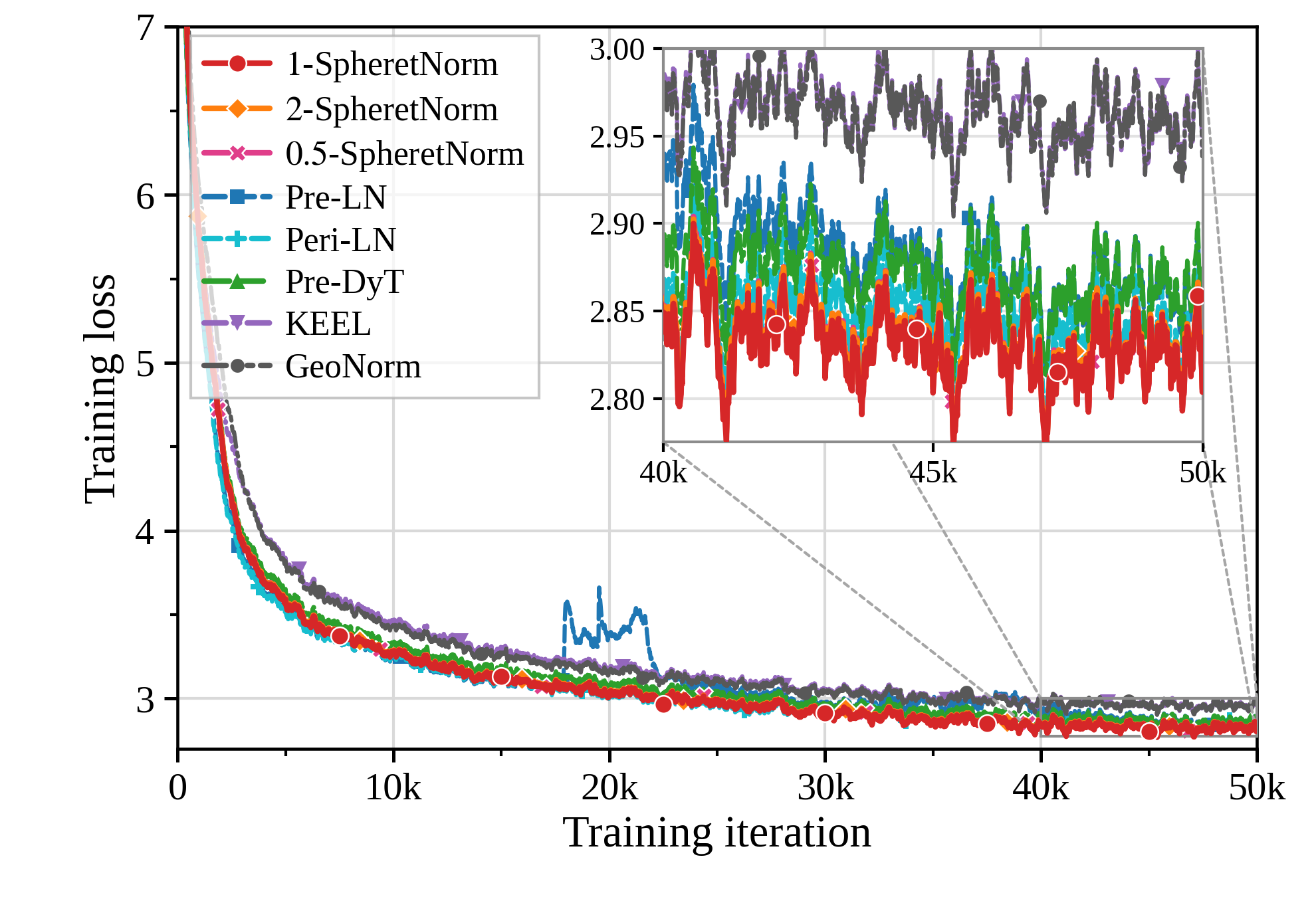}
    \caption{Training loss.}
  \end{subfigure}\hfill
  \begin{subfigure}{0.32\textwidth}
    \includegraphics[width=\linewidth]{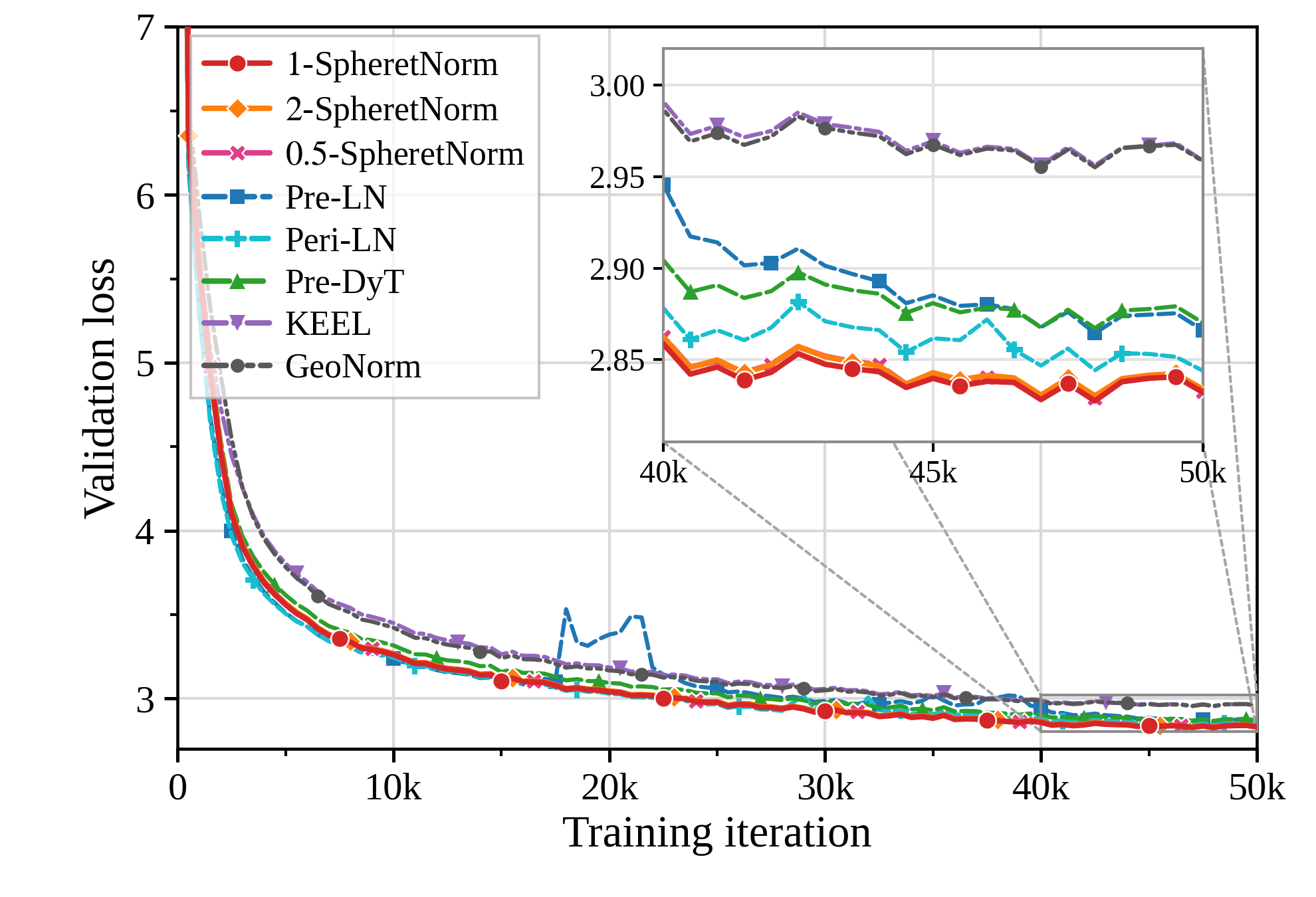}
    \caption{Validation loss.}
  \end{subfigure}
  \caption{Training and validation losses in the FineWeb-Edu.}
  \label{fig:loss_fwe}
\end{figure}

\section{Conclusion and Future Work}
In this work, we revisit retraction maps on the hypersphere and introduce three types of SpheretNorms: Proj-SpheretNorm, Cay-SpheretNorm, and $p$-SpheretNorm. We analyze the three SpheretNorms from both theoretical and practical perspectives. We implement them within the nanoGPT framework and evaluate them on pre-training and downstream tasks. The experimental results show that all three SpheretNorms achieve lower loss and higher accuracy than the baseline methods.

At a deeper level, designing deep connection mechanisms is fundamentally a mathematical problem at the intersection of graph theory, optimization, and dynamical systems, raising two intertwined questions: (i) which connection topologies enable non-trivial information propagation across depth while keeping the singular values of the end-to-end Jacobian bounded by constants independent of the depth $L$; and (ii) among such topologies, is there a unifying principle that identifies the optimal trade-off between information propagation and Jacobian conditioning? We leave these questions open and regard them as promising directions for future work.
\clearpage

\bibliographystyle{assets/plainnat}
\bibliography{references}
\clearpage
\beginappendix
\appendix

\section{Normalization as Hypersphere Projection}
\label{app:normalization-as-projection}

Throughout this appendix we work with idealized normalization maps: we omit the numerical stabilizer $\varepsilon$ in the denominators, and we set all additive bias terms to zero, since a translation does not affect the projection interpretation. For $r>0$, let
\[
\mathbb{S}^{d-1}_{r} := \bigl\{\mathbf{x}\in\mathbb{R}^{d}:\,
\lVert\mathbf{x}\rVert_{2}=r\bigr\},
\]
and, for $\mathbf{x}\in\mathbb{R}^{d}\setminus\{\boldsymbol{0}\}$, define the radial projection onto $\mathbb{S}^{d-1}_{r}$ by
\[
\Pi_{r}(\mathbf{x}):=r\cdot\frac{\mathbf{x}}{\lVert\mathbf{x}\rVert_{2}}.
\]
For every nonzero $\mathbf{x}$, the point $\Pi_{r}(\mathbf{x})$ is also the unique Euclidean metric projection of $\mathbf{x}$ onto $\mathbb{S}^{d-1}_{r}$. More generally, for a linear subspace $\mathcal{V}\subseteq\mathbb{R}^{d}$ we write
\[
\mathbb{S}_{r}(\mathcal{V}):=\bigl\{\mathbf{y}\in\mathcal{V}:\,
\lVert\mathbf{y}\rVert_{2}=r\bigr\},
\]
which is a $(\dim\mathcal{V}-1)$-dimensional sphere isometric to $\mathbb{S}^{\dim\mathcal{V}-1}_{r}$.

\paragraph{LayerNorm.}
Inspired by BatchNorm~\citep{pmlr-v37-ioffe15}, \citet{ba2016layer} proposed LayerNorm, which normalizes each input vector along its feature dimension. Let
$\mathbf{1}=(1,\ldots,1)^{\top}\in\mathbb{R}^{d}$ and define
\[
\mu=\frac{1}{d}\mathbf{1}^{\top}\mathbf{x},
\quad
\mathbf{P}=\mathbf{I}-\frac{1}{d}\mathbf{1}\mathbf{1}^{\top}.
\]
Since $\mathbf{P}=\mathbf{P}^{\top}=\mathbf{P}^{2}$ and $\ker\mathbf{P}=\operatorname{span}\{\mathbf{1}\}$, the matrix $\mathbf{P}$ is the orthogonal projection onto the mean-zero hyperplane
\[
\mathcal{H}=\bigl\{\mathbf{y}\in\mathbb{R}^{d}:\mathbf{1}^{\top}\mathbf{y}=0\bigr\},
\quad \dim\mathcal{H}=d-1,
\]
and the centered input satisfies $\mathbf{x}-\mu\mathbf{1}=\mathbf{P}\mathbf{x}$. Consequently, the coordinate-wise standard deviation of $\mathbf{x}$ satisfies
\[
\sigma=\sqrt{\frac{1}{d}\sum_{i=1}^{d}(x_{i}-\mu)^{2}}
=\frac{\lVert\mathbf{P}\mathbf{x}\rVert_{2}}{\sqrt{d}},
\]
so that $\sigma>0$ if and only if $\mathbf{P}\mathbf{x}\neq\boldsymbol{0}$, i.e., $\mathbf{x}\notin\operatorname{span}\{\mathbf{1}\}$; we restrict attention to
this case, on which LayerNorm is well defined. Let $\boldsymbol{\gamma}\in\mathbb{R}^{d}$ denote the learnable gain and let $\mathbf{D}_{\boldsymbol{\gamma}}=\operatorname{diag}(\boldsymbol{\gamma})$. LayerNorm can be written as
\begin{equation}\label{eq:layernorm-as-projection}
\begin{split}
\operatorname{LayerNorm}(\mathbf{x})
&=\boldsymbol{\gamma}\odot\frac{\mathbf{x}-\mu\mathbf{1}}{\sigma}\\
&=\mathbf{D}_{\boldsymbol{\gamma}}\left(\sqrt{d}\cdot
\frac{\mathbf{P}\mathbf{x}}{\lVert\mathbf{P}\mathbf{x}\rVert_{2}}\right)\\
&=\mathbf{D}_{\boldsymbol{\gamma}}\left(\Pi_{\sqrt{d}}(\mathbf{P}\mathbf{x})\right).
\end{split}
\end{equation}

Equation~\eqref{eq:layernorm-as-projection} shows that, before the coordinate-wise gain is applied, LayerNorm performs two successive projections: the orthogonal projection of $\mathbf{x}$ onto $\mathcal{H}$, followed by the radial projection of the centered vector onto $\mathbb{S}_{\sqrt{d}}(\mathcal{H})$, a $(d-2)$-dimensional sphere of radius $\sqrt{d}$ contained in $\mathcal{H}$. The image of the complete map is therefore
\[
\mathbf{D}_{\boldsymbol{\gamma}}\bigl(\mathbb{S}_{\sqrt{d}}(\mathcal{H})\bigr)
=\bigl\{\mathbf{D}_{\boldsymbol{\gamma}}\mathbf{y}:\,\mathbf{y}\in
\mathbb{S}_{\sqrt{d}}(\mathcal{H})\bigr\},
\]
which lies in the hyperplane $\mathbf{D}_{\boldsymbol{\gamma}}\mathcal{H}$ and is in general an ellipsoid, not necessarily axis-aligned, rather than a sphere. Assume $d \ge 3$ and $\gamma_{i}\neq 0$ for every $i$. The image is a round sphere if and only if $\mathbf{D}_{\boldsymbol{\gamma}}$ acts conformally on $\mathcal{H}$, i.e., $\mathbf{P}\mathbf{D}_{\boldsymbol{\gamma}}^{2}\mathbf{P}
= c^{2}\mathbf{P}$ for some $c>0$. Testing this identity on the pairs $\mathbf{e}_{i}-\mathbf{e}_{j}$ and $\mathbf{e}_{i}-\mathbf{e}_{k}$ with $i,j,k$ distinct, which is possible because $d\ge 3$, yields $\gamma_{i}^{2}=c^{2}$ for every $i$. Hence the image is a sphere precisely when $\lvert\gamma_{1}\rvert=\lvert\gamma_{2}\rvert=\cdots=\lvert\gamma_{d}\rvert$. Since $\boldsymbol{\gamma}$ is typically initialized at $\mathbf{1}$, at initialization LayerNorm reduces to $\mathbf{x}\mapsto\Pi_{\sqrt{d}}(\mathbf{P}\mathbf{x})$ and all outputs share the same norm $\sqrt{d}$. Once training drives $\boldsymbol{\gamma}$ away from $\mathbf{1}$, the outputs leave $\mathbb{S}_{\sqrt{d}}(\mathcal{H})$; they remain on some sphere only in the special case in which all $\lvert\gamma_{i}\rvert$ coincide, and are otherwise confined to the ellipsoid $\mathbf{D}_{\boldsymbol{\gamma}}\left(\mathbb{S}_{\sqrt{d}}(\mathcal{H})\right)$.

\paragraph{RMSNorm.}
RMSNorm~\citep{NEURIPS2019_1e8a1942} dispenses with the mean-centering step of LayerNorm and normalizes an input solely by its root mean square. For
$\mathbf{x}\in\mathbb{R}^{d}\setminus\{\boldsymbol{0}\}$, define the root mean square (RMS) operation as
\[
\operatorname{RMS}(\mathbf{x})=\sqrt{\frac{1}{d}\sum_{i=1}^{d}x_{i}^{2}}
=\frac{\lVert\mathbf{x}\rVert_{2}}{\sqrt{d}}.
\]
Let $\boldsymbol{\gamma}\in\mathbb{R}^{d}$ denote the learnable coordinate-wise gain and let $\mathbf{D}_{\boldsymbol{\gamma}}=\operatorname{diag}(\boldsymbol{\gamma})$. RMSNorm is then given by
\begin{equation}\label{eq:rmsnorm-as-projection}
\begin{split}
\operatorname{RMSNorm}(\mathbf{x})
&=\boldsymbol{\gamma}\odot\frac{\mathbf{x}}{\operatorname{RMS}(\mathbf{x})} \\
&=\mathbf{D}_{\boldsymbol{\gamma}}\left(\sqrt{d}\cdot
\frac{\mathbf{x}}{\lVert\mathbf{x}\rVert_{2}}\right)\\
&=\mathbf{D}_{\boldsymbol{\gamma}}\Pi_{\sqrt{d}}(\mathbf{x}).
\end{split}
\end{equation}
The normalization stage of RMSNorm, prior to the coordinate-wise gain, is therefore precisely the radial projection onto $\mathbb{S}^{d-1}_{\sqrt{d}}$. The complete map, however, is in general no longer spherical: its image is $\mathbf{D}_{\boldsymbol{\gamma}}\!\left(\mathbb{S}^{d-1}_{\sqrt{d}}\right)$, which, when $\gamma_{i}\neq 0$ for every $i$, is an axis-aligned ellipsoid whose semi-axes have lengths $\sqrt{d}\,\lvert\gamma_{i}\rvert$; if some $\gamma_{i}$ vanishes, the image degenerates to a lower-dimensional set. For $d\ge 2$, the
spherical geometry is retained precisely when all entries of $\boldsymbol{\gamma}$ share a common absolute value $c\neq 0$, in which case the image is the rescaled sphere $\mathbb{S}^{d-1}_{c\sqrt{d}}$. Since $\boldsymbol{\gamma}$ is typically initialized at $\mathbf{1}$, RMSNorm coincides with the projection onto $\mathbb{S}^{d-1}_{\sqrt{d}}$ at initialization; once training updates $\boldsymbol{\gamma}$, its outputs generally drift off the hypersphere.

\paragraph{ScaleNorm.}
Among commonly used normalization methods, ScaleNorm~\citep{nguyen-salazar-2019-transformers} is the one most directly related to hypersphere projection. For
$\mathbf{x}\in\mathbb{R}^{d}\setminus\{\boldsymbol{0}\}$ it is defined by
\begin{equation}
\operatorname{ScaleNorm}(\mathbf{x})=g\,\frac{\mathbf{x}}{\lVert\mathbf{x}\rVert_{2}},
\label{eq:scalenorm}
\end{equation}
where $g\in\mathbb{R}$ is a learnable scalar parameter. If $g>0$, then
\[
\operatorname{ScaleNorm}(\mathbf{x})=\Pi_{g}(\mathbf{x}),
\]
so ScaleNorm maps every nonzero input onto the hypersphere $\mathbb{S}^{d-1}_{g}$. More generally, when $g$ is unconstrained,
\[
\lVert\operatorname{ScaleNorm}(\mathbf{x})\rVert_{2}=\lvert g\rvert .
\]
Thus, for $g<0$ the spherical interpretation still holds: the map is the composition of the radial projection $\Pi_{\lvert g\rvert}$ onto $\mathbb{S}^{d-1}_{\lvert g\rvert}$ with the antipodal map $\mathbf{x}\mapsto-\mathbf{x}$, and only the degenerate case $g=0$ collapses the image to the origin. If an explicitly positive radius parameter is needed, one may use a positive reparameterization such as replacing $g$ with $\operatorname{Softplus}(g)$, although this is not required for the geometric interpretation.

\paragraph{Connecting ScaleNorm with Proj-SpheretNorm.}
The relationship between ScaleNorm and Proj-SpheretNorm can be made precise through their common radial-projection structure. Let $\mathbf{u}\in\mathbb{R}^{d}\setminus\{\boldsymbol{0}\}$ be the input to a generic Proj-SpheretNorm update, let $\mathcal{F}$ denote the associated sub-layer map, and let $\alpha\in\mathbb{R}$ be the residual coefficient. Set
\[
r=\lVert\mathbf{u}\rVert_{2},
\qquad
\mathbf{P}_{\mathbf{u}}=\mathbf{I}-\frac{\mathbf{u}\mathbf{u}^{\top}}{r^{2}},
\qquad
\mathbf{z}=\mathbf{P}_{\mathbf{u}}\mathcal{F}(\mathbf{u}).
\]
Here $\mathbf{P}_{\mathbf{u}}$ is the orthogonal projection onto $\mathbf{u}^{\perp}$, which is the tangent space of $\mathbb{S}^{d-1}_{r}$ at $\mathbf{u}$; in particular $\mathbf{u}^{\top}\mathbf{z}=0$ by construction. The Proj-SpheretNorm update can therefore be written as
\begin{equation}\label{eq:proj-spheretnorm-radial}
\mathbf{u}^{+}=r\,\frac{\mathbf{u}+\alpha\mathbf{z}}
{\lVert\mathbf{u}+\alpha\mathbf{z}\rVert_{2}}
=\Pi_{r}\bigl(\mathbf{u}+\alpha\mathbf{z}\bigr).
\end{equation}
Moreover, orthogonality gives
\[
\lVert\mathbf{u}+\alpha\mathbf{z}\rVert_{2}^{2}
=r^{2}+\alpha^{2}\lVert\mathbf{z}\rVert_{2}^{2}>0,
\]
so the radial projection is well defined whenever $\mathbf{u}\neq\boldsymbol{0}$. Equation~\eqref{eq:proj-spheretnorm-radial} also gives
$\lVert\mathbf{u}^{+}\rVert_{2}=\lVert\mathbf{u}\rVert_{2}$.

For $l=1,\,2,\,\ldots,\,L-1$, the $(l+1)$-th hidden state is given by
\[
\mathbf{h}_{l+1}=\operatorname{SpheretNorm}\!\left(\mathbf{h}_{l},\,
\mathcal{F}_{l}(\mathbf{h}_{l})\right).
\]
When Proj-SpheretNorm is used, the definition in Eq.~\eqref{eq:proj_spheretnorm} gives
\[
\mathbf{h}_{l+1}=\Pi_{\lVert\mathbf{h}_{l}\rVert_{2}}
\left(\mathbf{h}_{l}+\alpha_{l}\mathbf{z}_{l}\right),
\qquad
\mathbf{z}_{l}=\mathbf{P}_{\mathbf{h}_{l}}\mathcal{F}_{l}(\mathbf{h}_{l}),
\]
so that $\lVert\mathbf{h}_{l+1}\rVert_{2}=\lVert\mathbf{h}_{l}\rVert_{2}$ for $l=1,2,\ldots,L-1$. Consequently, the residual stream preserves the
hidden-state norm across layers, i.e., $\lVert\mathbf{h}_{1}\rVert_{2}=\lVert\mathbf{h}_{2}\rVert_{2}=\cdots=\lVert\mathbf{h}_{L}\rVert_{2}$.

At the beginning of the residual stream considered in our analysis, we take $\mathbf{h}_{1}$ to be the hidden state produced by the modified ScaleNorm, so that, by construction,
\[
\lVert\mathbf{h}_{1}\rVert_{2}=\operatorname{Softplus}(\gamma),
\]
which is strictly positive for every $\gamma\in\mathbb{R}$. We do not introduce notation for the representations preceding $\mathbf{h}_{1}$, and any operations applied after $\mathbf{h}_{L}$ lie outside the scope of this analysis.

Equation~\eqref{eq:proj-spheretnorm-radial} indicates that ScaleNorm and Proj-SpheretNorm share the same radial-projection operation but use it for different purposes. ScaleNorm fixes the radius of the initial hidden state $\mathbf{h}_{1}$, whereas each Proj-SpheretNorm sub-layer forms a tangent residual update and projects the resulting point back onto the sphere determined by the radius of its own input state. Under the above fixed-radius dynamics, every hidden state along the residual stream lies on the same hypersphere $\mathbb{S}^{d-1}_{\operatorname{Softplus}(\gamma)}$.

In summary, LayerNorm, RMSNorm, and ScaleNorm are all related to hypersphere projection, but in different ways. The normalization stages of LayerNorm and RMSNorm, taken before the gain, are spherical maps: LayerNorm first projects onto the zero-mean hyperplane and then performs a radial projection within that hyperplane, while RMSNorm performs a radial projection in the ambient space. Their complete maps, however, generally include a coordinate-wise gain and therefore need not have spherical images.

ScaleNorm differs in that it uses a single scalar gain. Its complete map therefore has a spherical image of radius $\lvert g\rvert$, and coincides exactly with the radial projection $\Pi_{g}$ when $g>0$. This positive-radius case provides the closest traditional counterpart to Proj-SpheretNorm. Specifically, Proj-SpheretNorm replaces the globally learned radius $g$ with the radius $\lVert\mathbf{h}_{l}\rVert_{2}$ carried by the current hidden state, and applies the radial projection to the point obtained after the tangent update. It can thus be viewed as a state-dependent, retraction-based variant of the radial projection used in ScaleNorm, designed to preserve the hidden-state norm exactly across the entire depth.

\section{Hyperparameters}\label{app:hyperparameters}
The shared hyperparameters used across all experiments are summarized in Table~\ref{tab:shared_hyperparameters}. The architecture hyperparameters for the small and medium model sizes are listed in Table~\ref{tab:arch_hyperparameters}. The training and validation losses in OpenWebText is illustrated in Figure~\ref{fig:loss_owt}.

\begin{table}[!ht]
\centering
\caption{Shared hyperparameters. The effective batch size is held fixed across all runs, independent of the number of GPUs.}
\label{tab:shared_hyperparameters}
\setlength{\tabcolsep}{4pt}
\begin{tabular}{l|c}
\toprule
Name & Value \\
\midrule
optimizer                     & AdamW \\
$\beta_{1}$                   & $0.9$ \\
$\beta_{2}$                   & $0.95$ \\
weight decay                  & $0.1$ \\
gradient clip                 & $1.0$ \\
dropout                       & $0.0$ \\
bias                          & enable \\
\# of iterations              & $50{,}000$ \\
\# of learning-rate decay iterations & $50{,}000$ \\
\# of warm-up iterations      & $2{,}000$ \\
block size (sequence length)  & $1{,}024$ \\
batch size (per GPU)          & $16$ \\
effective (global) batch size & $128$ sequences \\
tokens per update             & $131{,}072$ \\
total training tokens         & $\sim\!6.55$\,B \\
precision                     & \texttt{bfloat16} \\
torch.compile                 & disable \\
vocabulary size               & $50{,}304$ \\
\bottomrule
\end{tabular}
\end{table}

\begin{table}[!ht]
\centering
\caption{Architecture hyperparameters. Parameter counts include the tied token embedding and are reported for the Pre-LN baseline. All variants have identical parameter counts.}
\label{tab:arch_hyperparameters}
\setlength{\tabcolsep}{4pt}
\begin{tabular}{l|c|c|c}
\toprule
Name & S & M & L \\
\midrule
\# of layers 
    & $12$ 
    & $24$
    & $36$ \\
\# of heads 
    & $12$ 
    & $16$
    & $20$ \\
\# of params.
    & $\sim\! 0.12$\,B
    & $\sim\! 0.35$\,B
    & $\sim\! 0.77$\,B \\
hidden dim. 
    & $768$ 
    & $1024$ 
    & $1280$ \\
lr 
    & $6 \times 10^{-4}$ 
    & $3 \times 10^{-4}$ 
    & $2.5 \times 10^{-4}$ \\
min. lr 
    & $6 \times 10^{-5}$ 
    & $3 \times 10^{-5}$ 
    & $2.5 \times 10^{-5}$ \\
\bottomrule
\end{tabular}
\end{table}

\begin{figure}[!ht]
  \centering
  \begin{subfigure}{0.32\textwidth}
    \includegraphics[width=\linewidth]{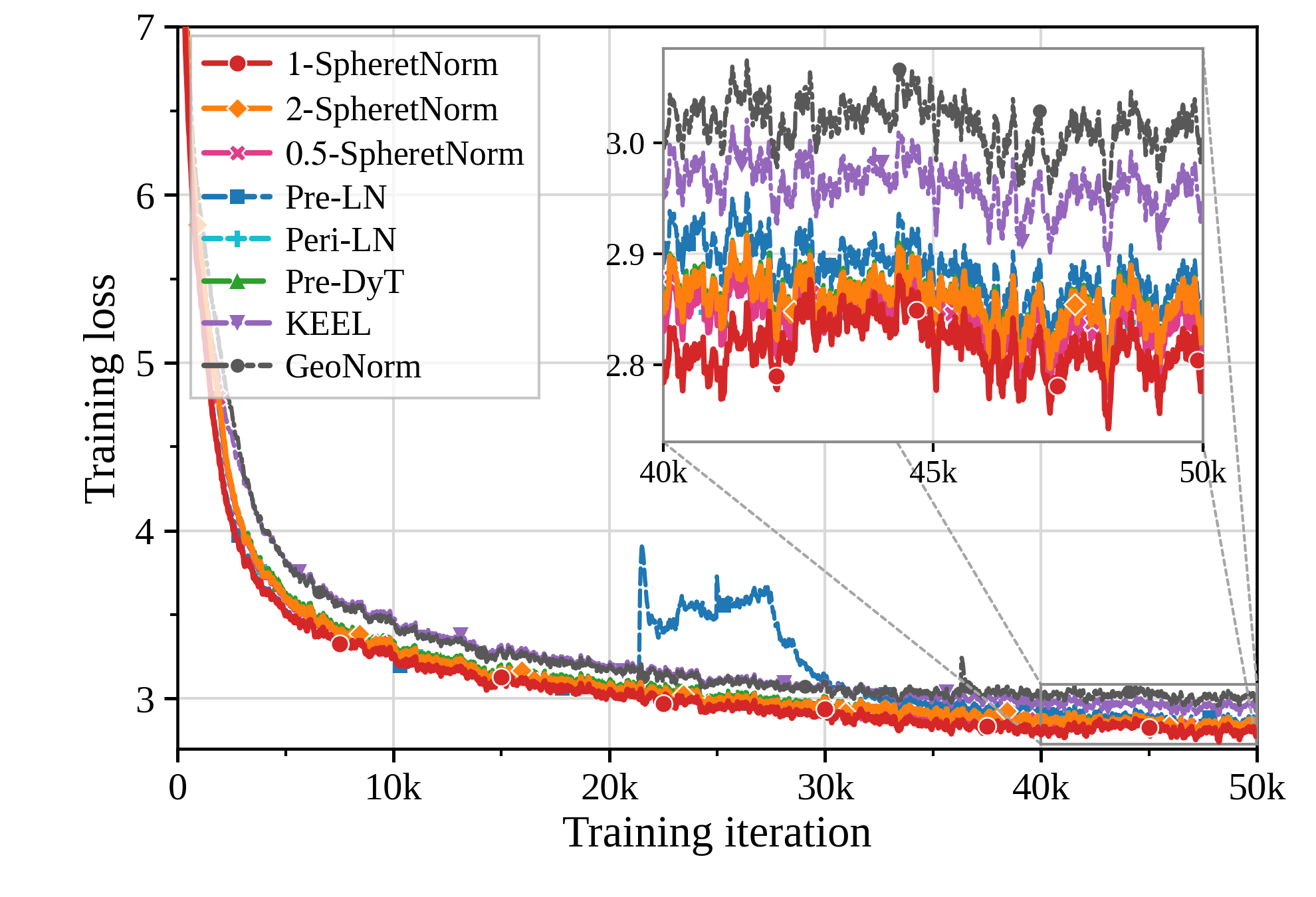}
    \caption{Training loss.}
  \end{subfigure}\hfill
  \begin{subfigure}{0.32\textwidth}
    \includegraphics[width=\linewidth]{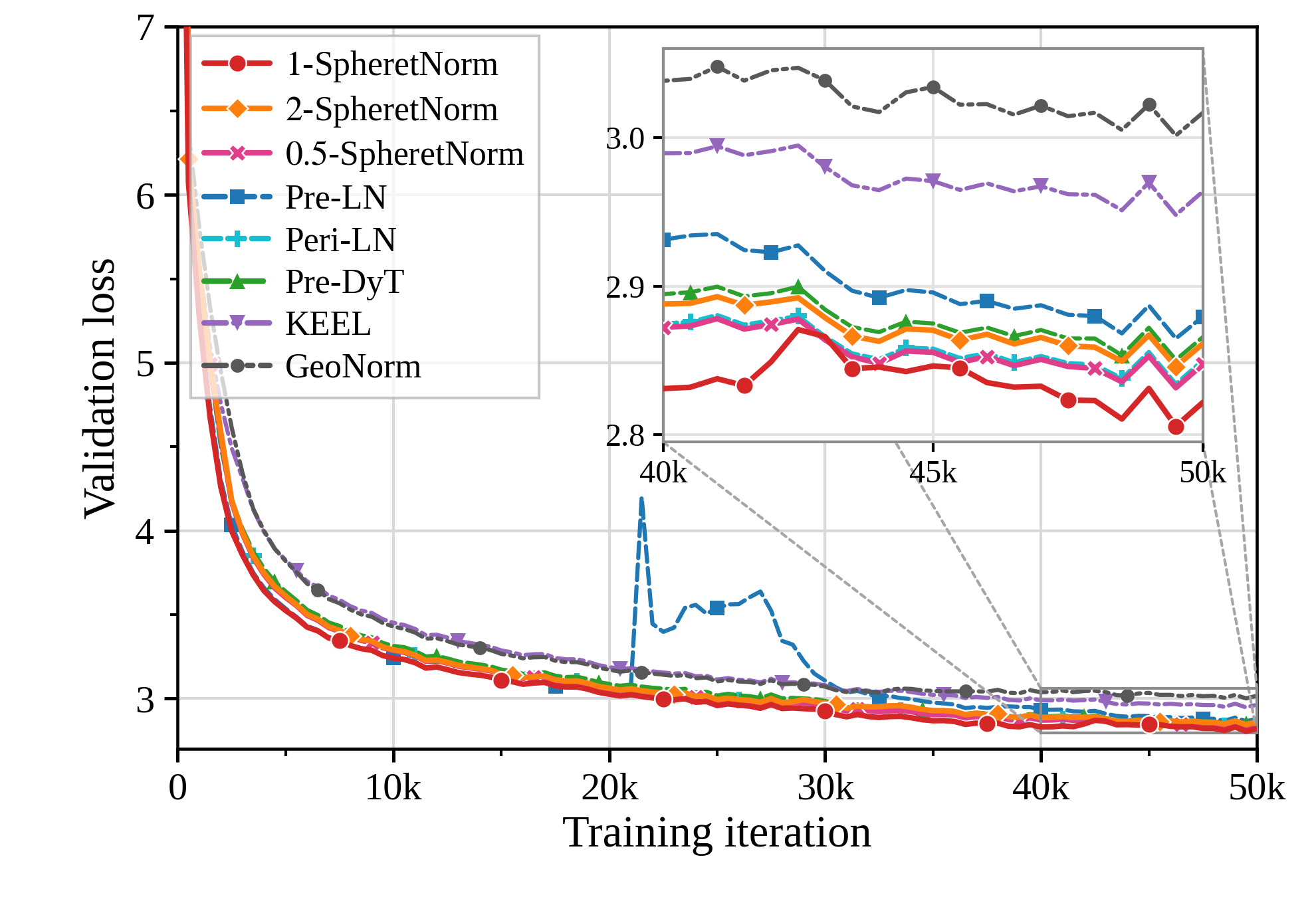}
    \caption{Validation loss.}
  \end{subfigure}
  \caption{Training and validation losses in the OpenWebText.}
  \label{fig:loss_owt}
\end{figure}

\section{Pseudocode for SpheretNorms}\label{app:pseudocode}
In this section, we provide PyTorch-style pseudocode for our SpheretNorms: Proj-SpheretNorm, Cay-SpheretNorm, and $p$-SpheretNorm. 

\begin{codeblock}[!ht]
\caption{PyTorch-style pseudocode for GeoNorm.}
\label{code:geonorm}
\centering
\begin{lstlisting}[style=pycode]
def geonorm(h_in, Fh, decay_method, layer_idx, num_layers, angle_upper_bound, alpha):
    h_in_norm_sq = (h_in * h_in).sum(dim=-1, keepdim=True).clamp(min=1e-12)
    h_in_norm = h_in_norm_sq.sqrt()
    h_in_trs_dot_Fh = (h_in * Fh).sum(dim=-1, keepdim=True)

    z = Fh - (h_in_trs_dot_Fh / h_in_norm_sq) * h_in
    z_norm_sq = (z * z).sum(dim=-1, keepdim=True).clamp(min=1e-12)
    z_norm = z_norm_sq.sqrt()
    z_hs = z / z_norm

    if decay_method == 'harmonic':
        alpha = torch.clamp(F.softplus(alpha), max=1.0) / layer_idx
    elif decay_method == 'linear':
        alpha = torch.clamp(F.softplus(alpha), max=1.0) * (num_layers - layer_idx) / num_layers
    elif decay_method == 'sqrt':
        alpha = torch.clamp(F.softplus(alpha), max=1.0) / math.sqrt(layer_idx)

    beta = (alpha * (z_norm / h_in_norm)).clamp(max=angle_upper_bound)
    h_out = torch.cos(beta) * h_in + h_in_norm * torch.sin(beta) * z_hs

    return h_out
\end{lstlisting}
\end{codeblock}

\begin{codeblock}[!ht]
\caption{PyTorch-style pseudocode for Proj-SpheretNorm.}
\label{code:proj_spheretnorm}
\centering
\begin{lstlisting}[style=pycode]
def proj_spheretnorm(h_in, Fh, layer_idx, alpha):
    h_in_norm_sq = (h_in * h_in).sum(dim=-1, keepdim=True).clamp(min=1e-12)
    h_in_norm = h_in_norm_sq.sqrt()
    h_in_trs_dot_Fh = (h_in * Fh).sum(dim=-1, keepdim=True)

    z = Fh - (h_in_trs_dot_Fh / h_in_norm_sq) * h_in

    alpha = torch.clamp(F.softplus(alpha), max=1.0) / math.sqrt(self.layer_idx)

    h_plus = h_in + alpha * z
    h_plus_norm_sq = (h_plus * h_plus).sum(dim=-1, keepdim=True).clamp(min=1e-12)
    h_plus_norm = h_plus_norm_sq.sqrt()
    h_out = h_in_norm * (h_plus / h_plus_norm)

    return h_out
\end{lstlisting}
\end{codeblock}

\begin{codeblock}[!ht]
\caption{PyTorch-style pseudocode for Cay-SpheretNorm.}
\label{code:cay_spheretnorm}
\centering
\begin{lstlisting}[style=pycode]
def cay_spheretnorm(h_in, Fh, layer_idx, angle_upper_bound, alpha):
    h_in_norm_sq = (h_in * h_in).sum(dim=-1, keepdim=True).clamp(min=1e-12)
    h_in_norm = h_in_norm_sq.sqrt()
    h_in_trs_dot_Fh = (h_in * Fh).sum(dim=-1, keepdim=True)

    z = Fh - (h_in_trs_dot_Fh / h_in_norm_sq) * h_in
    z_norm_sq = (z * z).sum(dim=-1, keepdim=True).clamp(min=1e-12)
    z_norm = z_norm_sq.sqrt()

    alpha = torch.clamp(F.softplus(alpha), max=1.0) / math.sqrt(layer_idx)

    beta_max = math.tan(angle_upper_bound * 0.5) * 2.0
    gamma = torch.minimum(alpha, beta_max * (h_in_norm / z_norm))
    beta = gamma * (z_norm / h_in_norm)
    beta_sq = beta * beta
    h_out = ((4 - beta_sq) * h_in + 4 * gamma * z) / (4 + beta_sq)

    return h_out
\end{lstlisting}
\end{codeblock}

\begin{codeblock}[!ht]
\caption{PyTorch-style pseudocode for $p$-SpheretNorm.}
\label{code:p_spheretnorm}
\centering
\begin{lstlisting}[style=pycode]
def p_spheretnorm(h_in, Fh, layer_idx, angle_upper_bound, alpha, p):
    h_in_norm_sq = (h_in * h_in).sum(dim=-1, keepdim=True).clamp(min=1e-12)
    h_in_norm = h_in_norm_sq.sqrt()
    h_in_trs_dot_Fh = (h_in * Fh).sum(dim=-1, keepdim=True)

    z = Fh - (h_in_trs_dot_Fh / h_in_norm_sq) * h_in
    z_norm_sq = (z * z).sum(dim=-1, keepdim=True).clamp(min=1e-12)
    z_norm = z_norm_sq.sqrt()
    z_hs = z / z_norm

    alpha = torch.clamp(F.softplus(alpha), max=1.0) / math.sqrt(self.layer_idx)

    beta = alpha * (z_norm / h_in_norm)
    theta = p * torch.arctan(beta / p)
    if p > 1.0:
        theta = torch.clamp(theta, max=angle_upper_bound)

    h_out = torch.cos(theta) * h_in + (h_in_norm * torch.sin(theta)) * z_hs

    return h_out
\end{lstlisting}
\end{codeblock}

\section{Technical Proofs for Section~\ref{sec:Proposed Method}}
\label{app:analysis-proofs}

\subsection{Proof of
Proposition~\ref{prop:cayley-sphere}}
\label{app:cayley-sphere}

Let $\mathcal{U}=\operatorname{span}\{\mathbf{h}, \mathbf{v}\}$. We first prove uniqueness. Suppose that $W\in\mathfrak{so}(d)$ satisfies
\[
\mathbf{W}\mathbf{h}=\mathbf{v},\qquad\operatorname{range}(\mathbf{W})\subseteq\mathcal{U}.
\]

For every $\mathbf{x}\in\mathcal{U}^{\perp}$ and $\mathbf{y}\in\mathcal{U}$, skew-symmetry gives
\[
\langle \mathbf{W}\mathbf{x}, \mathbf{y}\rangle=-\langle \mathbf{x}, \mathbf{W}\mathbf{y}\rangle.
\]
Since $\mathbf{W}\mathbf{y}\in\mathcal{U}$ and $\mathbf{x}\in\mathcal{U}^{\perp}$, it follows that $\langle \mathbf{W}\mathbf{x},\mathbf{y}\rangle=0$ for every $\mathbf{y}\in\mathcal{U}$. On the other hand,
$\mathbf{W}\mathbf{x}\in\mathcal{U}$ by the range condition. Therefore,
\[
\mathbf{W}\mathbf{x}=\boldsymbol{0},\qquad \mathbf{x}\in\mathcal{U}^{\perp}.
\]
Thus, $\mathbf{W}$ is completely determined by its restriction to
$\mathcal{U}$.

If $\mathbf{v}=\boldsymbol{0}$, then $\mathcal{U}=\operatorname{span}\{\mathbf{h}\}$ and $\mathbf{W}\mathbf{h}=\boldsymbol{0}$. Hence $\mathbf{W}=\boldsymbol{0}$, which agrees with the claimed formula.

Suppose now that $\mathbf{v} \neq \boldsymbol{0}$, and define the orthonormal vectors
\[
\mathbf{e}_1=\frac{\mathbf{h}}{r},\qquad \mathbf{e}_2=\frac{\mathbf{v}}{\|\mathbf{v}\|_2}.
\]
Since $\mathbf{W}\mathbf{h}=\mathbf{v}$, we have
\[
\mathbf{W}\mathbf{e}_1=\frac{\|\mathbf{v}\|_2}{r}\mathbf{e}_2.
\]
Skew-symmetry then uniquely determines the action on $\mathbf{e}_2$: $\mathbf{W}\mathbf{e}_2=-\frac{\|\mathbf{v}\|_2}{r}\mathbf{e}_1$. Together with $\mathbf{W}=\boldsymbol{0}$ on $\mathcal{U}^{\perp}$, this proves that at most one such skew-symmetric matrix exists.

Now define
\[
\widehat{\mathbf{W}}=\frac{\mathbf{v}\mathbf{h}^\top-\mathbf{h}\mathbf{v}^\top}{r^2}.
\]
Then $\widehat{\mathbf{W}}^{\top}=-\widehat{\mathbf{W}}$, so $\widehat{\mathbf{W}}\in\mathfrak{so}(d)$. Since
$\mathbf{h}^\top \mathbf{v}=0$ and $\|\mathbf{h}\|_2^2=r^2$,
\[
\widehat{\mathbf{W}}\mathbf{h}=\frac{\mathbf{v} \mathbf{h}^\top \mathbf{h}-\mathbf{h} \mathbf{v}^\top \mathbf{h}}{r^2}=\mathbf{v}.
\]
Moreover, $\operatorname{range}(\widehat{\mathbf{W}})\subseteq\operatorname{span}\{\mathbf{h},\mathbf{v}\}$. Therefore, by uniqueness,
\[
\mathbf{W}=\widehat{\mathbf{W}}=\frac{\mathbf{v}\mathbf{h}^\top-\mathbf{h}\mathbf{v}^\top}{r^2}.
\]

We next derive the closed form of the Cayley update. Define
$\rho=\frac{\|\mathbf{v}\|_2}{r}$, we obtain
\[
\mathbf{W}\mathbf{v}=\frac{\mathbf{v} \mathbf{h}^\top \mathbf{v}-\mathbf{h} \mathbf{v}^\top \mathbf{v}}{r^2}=-\rho^2\mathbf{h}.
\]

The matrix $\mathbf{I} - \frac{1}{2}\mathbf{W}$ is invertible. Indeed, if $\left(\mathbf{I}-\frac{1}{2}\mathbf{W}\right)\mathbf{x}=0$, then
$\mathbf{x}=\frac{1}{2}\mathbf{W}\mathbf{x}$. Taking the inner product with $\mathbf{x}$ and using $\mathbf{x}^\top \mathbf{W}\mathbf{x}=\boldsymbol{0}$ gives $\|\mathbf{x}\|_2^2=0$, and hence $\mathbf{x}=\boldsymbol{0}$.

Set $a=\frac{4-\rho^2}{4+\rho^2}$ and $b=\frac{4}{4+\rho^2}$. Using $\mathbf{W}\mathbf{h}=\mathbf{v}$ and $\mathbf{W}\mathbf{v}=-\rho^2\mathbf{h}$, we have
\begin{align*}
\left(\mathbf{I}-\frac{1}{2}\mathbf{W}\right)(a\mathbf{h}+b\mathbf{v})
&=a\mathbf{h}+b\mathbf{v}-\frac{a}{2}\mathbf{v}+\frac{b\rho^2}{2}\mathbf{h}\\
&=\left(a+\frac{b\rho^2}{2}\right)\mathbf{h}+\left(b-\frac{a}{2}\right)\mathbf{v}.
\end{align*}
The definitions of $a$ and $b$ give $a+\frac{b\rho^2}{2}=1$ and $b-\frac{a}{2}=\frac{1}{2}$. Therefore,
\[
\left(\mathbf{I}-\frac{1}{2}\mathbf{W}\right)(a\mathbf{h}+b\mathbf{v})=\mathbf{h}+\frac{1}{2}\mathbf{v}
=\left(\mathbf{I}+\frac{1}{2}\mathbf{W}\right)\mathbf{h}.
\]
Since $\mathbf{I}-\frac{1}{2}\mathbf{W}$ is invertible,
\[
\begin{aligned}
\mathcal{C}(W)\mathbf{h}
&=\left(\mathbf{I}-\frac{1}{2}\mathbf{W}\right)^{-1}\left(\mathbf{I}+\frac{1}{2}\mathbf{W}\right)\mathbf{h}\\
&=a\mathbf{h}+b\mathbf{v}\\
&=\frac{4-\rho^2}{4+\rho^2}\mathbf{h}+\frac{4}{4+\rho^2}\mathbf{v}.
\end{aligned}
\]
This proves the claimed closed-form expression.

\subsection{Proof of Proposition~\ref{prop:p-special-cases}}
\label{app:{prop:p-special-cases}}
For $p=1$, we have $\theta_l=\arctan(\beta_l)$, and hence
\[
    \cos\!\left(\theta_l\right)=\frac{1}{\sqrt{1+\beta_l^2}},
    \qquad\sin\!\left(\theta_l\right)=
    \frac{\beta_l}{\sqrt{1+\beta_l^2}}.
\]
Since $\beta_l=\alpha_ls_l/r_l$ and
$\mathbf{h}_{l}^\top\mathbf{z}_l=0$, it follows that
\[
    \mathbf{h}_{l+1}^{(1)}=\frac{\mathbf{h}_{l}+\alpha_l\mathbf{z}_l}{\sqrt{1+\beta_l^2}}
    =r_l\frac{\mathbf{h}_{l}+\alpha_l\mathbf{z}_l}{\|\mathbf{h}_{l}+\alpha_l\mathbf{z}_l\|_2}.
\]
This is precisely the Proj-SpheretNorm update.

For $p=2$, the double-angle identities give
\[
\cos\!\left(2\arctan\frac{\beta_l}{2}\right)=
    \frac{4-\beta_l^2}{4+\beta_l^2},\,
    \sin\!\left(2\arctan\frac{\beta_l}{2}\right)=
    \frac{4\beta_l}{4+\beta_l^2}.
\]
Substituting $\beta_l=\alpha_ls_l/r_l$ into
Definition~\ref{def:p-spheretnorm} yields
\[
    \mathbf{h}_{l+1}^{(2)}=
    \frac{4-\beta_l^2}{4+\beta_l^2}\mathbf{h}_{l}
    +\frac{4\alpha_l}{4+\beta_l^2}\mathbf{z}_l,
\]
which is the Cay-SpheretNorm update.

For every fixed $\beta_l$,
\[
    \lim_{p\to+\infty}p\arctan\left(\frac{\beta_l}{p}\right)
    =\beta_l.
\]
The continuity of the sine and cosine functions therefore
gives the exponential-map limit.

Finally,
\[
    \left|p\arctan\left(\frac{\beta_l}{p}\right)\right|
    \leq\frac{\pi p}{2}\to 0\qquad\text{as }p\to 0^+.
\]
Thus $\cos(\theta_l)\to 1$ and $\sin(\theta_l)\to 0$, proving convergence to the identity mapping.

\subsection{Proof of Proposition~\ref{prop:fixed-radius-dynamics}}
\label{app:fixed-radius-dynamics}

Consistent with the $L$ retained hidden states
$\mathbf{h}_1,\ldots, \mathbf{h}_L$, we index the individual
SpheretNorm updates by $l=1,\ldots,L-1$. Let
\[
\mathbf{h}_{l+1}=\operatorname{SpheretNorm}
\bigl(\mathbf{h}_l,\mathcal{F}_l(\mathbf{h}_l)\bigr),
\]
where the step-size and retraction parameters are suppressed
from the notation. Define $r_l=\|\mathbf{h}_l\|_2$, $\mathbf{P}_l
=\mathbf{I}-\frac{\mathbf{h}_l\mathbf{h}_l^\top}{r_l^2}$, and $\mathbf{z}_l=\mathbf{P}_l\mathcal{F}_l(\mathbf{h}_l)$. Then $\mathbf{h}_l^\top\mathbf{z}_l=0$. Suppose first that $\mathbf{z}_l\neq 0$. For an appropriate intrinsic angle $\theta_l$, every SpheretNorm variant considered in this work admits the representation
\[
\mathbf{h}_{l+1}=\cos(\theta_l)\mathbf{h}_l
+r_l\sin(\theta_l)\frac{\mathbf{z}_l}{\|\mathbf{z}_l\|_2}.
\]
Since $\mathbf{h}_l^\top\mathbf{z}_l=0$, the two terms on
the right-hand side are orthogonal, and therefore
\[
\begin{aligned}
\|\mathbf{h}_{l+1}\|_2^2&=\cos^2(\theta_l)\|\mathbf{h}_l\|_2^2
+r_l^2\sin^2(\theta_l)\left\|\frac{\mathbf{z}_l}{\|\mathbf{z}_l\|_2}\right\|_2^2 =r_l^2.
\end{aligned}
\]
Hence, $\|\mathbf{h}_{l+1}\|_2=\|\mathbf{h}_l\|_2$. When $\mathbf{z}_l=0$, the continuously extended update
satisfies $\mathbf{h}_{l+1}=\mathbf{h}_l$, and the same
identity holds. Therefore, by induction,
\[
\|\mathbf{h}_l\|_2=\|\mathbf{h}_1\|_2,\qquad l=1,\ldots,L.
\]

Since $\mathbf{h}_1$ is the output of the modified
ScaleNorm operation,
\[
\|\mathbf{h}_1\|_2=\operatorname{Softplus}(\gamma).
\]
Consequently,
\[
\|\mathbf{h}_l\|_2=\operatorname{Softplus}(\gamma),\qquad l=1,\ldots,L.
\]
In particular,
\[
\|\mathbf{h}_L\|_2=\operatorname{Softplus}(\gamma),
\]
which is stronger than, and therefore implies, the stated
upper bound
\[
\|\mathbf h_L\|_2\leq\operatorname{Softplus}(\gamma).
\]

\subsection{Second-Order Retraction and Local Consistency}
\label{app:p-angular-properties}

We prove the smoothness, second-order retraction, and local-consistency properties used in Section~1.3. Fix $p>0$, let $\mathbf{h}\in\mathbb{S}_r^{d-1}$, $\mathbf{v}\in T_{\mathbf{h}}\mathbb{S}_r^{d-1}$, and define $\rho=\frac{\|\mathbf{v}\|_2}{r}$, $\theta_p(\rho)=p\arctan\left(\frac{\rho}{p}\right)$.

For $\rho>0$, define
\[
a_p(\rho)=\cos\bigl(\theta_p(\rho)\bigr),\qquad
b_p(\rho)=\frac{\sin\bigl(\theta_p(\rho)\bigr)}{\rho},
\]
and set $b_p(0)=1$. The $p$-angular map can then be
written as
\[
\mathcal{R}_{\mathbf{h}}^{(p)}(\mathbf{v})=a_p(\rho)\mathbf{h}
+b_p(\rho)\mathbf{v}.
\]

\paragraph{Smoothness at the zero section.}
The apparent singularity in $b_p(\rho)$ at $\rho=0$
is removable. To see this, consider the real-analytic odd
function
\[
\widetilde{\theta}_p(t)=p\arctan\left(\frac{t}{p}\right),
\qquad t\in\mathbb{R}.
\]
Define $A_p(t)=\cos\bigl(\widetilde{\theta}_p(t)\bigr)$ and
$B_p(t)=
\begin{cases}
\displaystyle
\frac{\sin\bigl(\widetilde{\theta}_p(t)\bigr)}{t},& t\neq0,\\[1.2ex]
1,& t=0.
\end{cases}$. Since $\widetilde{\theta}_p$ is odd, both $A_p$ and
$B_p$ are even real-analytic functions in a neighborhood of the origin. Hence, there exist real-analytic functions $\widehat A_p$ and $\widehat B_p$, defined near zero, such that
\[
A_p(t)=\widehat A_p(t^2),\qquad B_p(t)=\widehat B_p(t^2).
\]
Consequently,
\[
\mathcal{R}_{\mathbf{h}}^{(p)}(\mathbf{v})=\widehat A_p
\left(\frac{\|\mathbf{v}\|_2^2}{r^2}\right)\mathbf{h}+\widehat B_p\left(\frac{\|\mathbf{v}\|_2^2}{r^2}\right)\mathbf{v}.
\]
This representation is smooth at $\mathbf{v}=\mathbf0$.
Since $r>0$ is fixed on $\mathbb{S}_r^{d-1}$, it also
shows that $(\mathbf{h},\mathbf{v})\to \mathcal{R}_{\mathbf{h}}^{(p)}(\mathbf{v})$ is smooth on the tangent bundle $T\mathbb{S}_r^{d-1}$ near the zero section. Away from $\mathbf{v}=\mathbf0$, smoothness follows directly from the defining formula.

\paragraph{Sphere-valued property.}
Since $\mathbf{h}^\top\mathbf{v}=0$, $\|\mathbf{h}\|_2=r$, and $\|\mathbf{v}\|_2=r\rho$, we obtain, for $\mathbf{v}\neq\boldsymbol{0}$,
\begin{align*}
\left\|\mathcal{R}_{\mathbf{h}}^{(p)}(\mathbf{v})\right\|_2^2&=a_p(\rho)^2\|\mathbf{h}\|_2^2
+b_p(\rho)^2\|\mathbf{v}\|_2^2\\
&=r^2\cos^2\bigl(\theta_p(\rho)\bigr)+r^2\sin^2\bigl(\theta_p(\rho)\bigr)=r^2.
\end{align*}
The same conclusion is immediate at
$\mathbf{v}=\mathbf0$. Therefore, $\mathcal{R}_{\mathbf{h}}^{(p)}(\mathbf{v})\in\mathbb{S}_r^{d-1}$.

\paragraph{Retraction and second-order properties.}
As $\rho\to 0$,
\[
\theta_p(\rho)=\rho-\frac{\rho^3}{3p^2}+\mathcal{O}(\rho^5).
\]
It follows that
\[
a_p(\rho)=1-\frac{\rho^2}{2}+\left(\frac{1}{24}+\frac{1}{3p^2}
\right)\rho^4+\mathcal{O}(\rho^6),
\]
and
\[
b_p(\rho)=1-\left(\frac{1}{6}+\frac{1}{3p^2}\right)\rho^2+\mathcal{O}(\rho^4).
\]
Equivalently, as $s\to 0$, it follows that
\[
\widehat A_p(s)=1-\frac{s}{2}+\left(\frac{1}{24}+\frac{1}{3p^2}\right)s^2+\mathcal{O}(s^3),
\]
and
\[
\widehat B_p(s)=1-\left(\frac{1}{6}+\frac{1}{3p^2}\right)s+\mathcal{O}(s^2).
\]

Fix $\mathbf{v}\in T_{\mathbf{h}}\mathbb{S}_r^{d-1}$, set $\rho_0=\frac{\|\mathbf{v}\|_2}{r}$, and consider the curve $\mathbf{c}(t)=\mathcal{R}_{\mathbf{h}}^{(p)}(t\mathbf{v})$. Using the even-function representation above, we have
\[
\mathbf{c}(t)=\widehat A_p(t^2\rho_0^2)\mathbf{h}+t\widehat B_p(t^2\rho_0^2)\mathbf{v}.
\]
Therefore,
\[
\mathbf{c}(t)
=\mathbf{h}+t\mathbf{v}-\frac{t^2\|\mathbf{v}\|_2^2}{2r^2}\mathbf{h}+\mathcal{O}(t^3).
\]
Hence, $\mathbf{c}(0)=\mathbf{h}$, $\mathbf{c}'(0)=\mathbf{v}$ and
$\mathbf{c}''(0)=-\frac{\|\mathbf{v}\|_2^2}{r^2}\mathbf{h}$. The first two identities verify the defining conditions of a
retraction. Moreover, since
\[
N_{\mathbf{h}}\mathbb S_r^{d-1}=\operatorname{span}\{\mathbf{h}\},
\]
the acceleration $\mathbf{c}''(0)$ is normal to the
hypersphere. Equivalently,
\[
\mathbf{P}_{\mathbf{h}}\mathbf{c}''(0)=\boldsymbol{0},
\qquad\mathbf{P}_{\mathbf{h}}=\mathbf{I}-\frac{\mathbf{h}\mathbf{h}^\top}{r^2}.
\]
Hence, the retraction curve has no tangential acceleration at
the origin, which is precisely the second-order retraction
condition. Therefore, $\mathcal{R}^{(p)}$ is a second-order retraction on
$\mathbb{S}_r^{d-1}$ for every fixed $p>0$.

\paragraph{Local expansion of the SpheretNorm update.}
For the $l$-th SpheretNorm update, let $r_l=\|\mathbf{h}_l\|_2$, $\mathbf{P}_l=\mathbf{I}-\frac{\mathbf{h}_l\mathbf{h}_l^\top}{r_l^2}$, $\mathbf{z}_l=\mathbf{P}_l\mathcal{F}_l(\mathbf{h}_l)$, and
$s_l=\|\mathbf{z}_l\|_2$. Applying the preceding expansion to $\mathbf{v}=\alpha_l\mathbf{z}_l$ gives
\begin{align*}
\mathbf{h}_{l+1}={}&\mathbf{h}_l+\alpha_l\mathbf{z}_l-\frac{\alpha_l^2\|\mathbf{z}_l\|_2^2}{2r_l^2}\mathbf{h}_l\\
&-\left(\frac{1}{6}+\frac{1}{3p^2}\right)\frac{\alpha_l^3\|\mathbf{z}_l\|_2^2}{r_l^2}
\mathbf{z}_l+\mathcal{O}(\alpha_l^4).
\end{align*}
Here, the expansion is taken as $\alpha_l\to 0$, with $\mathbf{h}_l$ and
$\mathbf{z}_l$ fixed. More generally, the remainder is uniform over any local regime in which $r_l$ is bounded away from zero and $\eta_l=\frac{\|\mathbf{z}_l\|_2}{r_l}$ remains bounded. The linear term $\alpha_l\mathbf{z}_l$ is tangent to the hypersphere, while the quadratic term $-\frac{\alpha_l^2\|\mathbf{z}_l\|_2^2}{2r_l^2}\mathbf{h}_l$ is normal to the hypersphere at $\mathbf{h}_l$. Moreover, the parameter $p$ first appears in the cubic tangent correction
\[
-\left(\frac{1}{6}+\frac{1}{3p^2}\right)\frac{\alpha_l^3\|\mathbf{z}_l\|_2^2}{r_l^2}
\mathbf{z}_l.
\]
Thus, all members of the $p$-angular family have the same
first-order tangent dynamics and the same second-order
normal correction, while their finite-step dependence on
$p$ first appears at cubic order. Finally, since $\mathbf{z}_l=\mathbf{P}_l\mathcal{F}_l(\mathbf{h}_l)$, we obtain the first-order consistency relation
\[
\mathbf{h}_{l+1}=\mathbf{h}_l+\alpha_l\mathbf{P}_l\mathcal{F}_l(\mathbf{h}_l)
+\mathcal{O}(\alpha_l^2).
\]

\subsection{Depth-Wise Conditioning Criterion}
\label{app:depthwise-conditioning}

We first formulate the conditioning criterion for a generic
ordered sequence of individual SpheretNorm updates. Let
$M$ denote the number of such updates, and let
\[
\mathbf{J}_j=\frac{\partial \mathbf x_{j+1}}{\partial \mathbf x_j},
\qquad j=1,\ldots,M.
\]
For each $j$, define
$
q_j=\begin{cases}
|\cos(\theta_j)|,& s_j>0,\\
1,& s_j=0,
\end{cases}$ and $m_j=\max\{0,q_j-\Delta_j\}$. Assume that there exists a constant $m_*>0$, independent of $j$ and $M$, such that $m_j\geq m_*$, $j=1,\ldots,M$. Since $m_j\leq1$, we have
\[
-\log m_j=\int_{m_j}^{1}\frac{dt}{t}\leq\frac{1-m_j}{m_*}.
\]
Because $m_j>0$, the maximum in its definition is inactive,
and hence $m_j=q_j-\Delta_j$. Therefore,
\[
1-m_j=1-q_j+\Delta_j.
\]
For every real $\theta$, $1-|\cos\theta|\leq 1-\cos\theta\leq\frac{\theta^2}{2}$. Consequently, $1-m_j\leq\frac{|\theta_j|^2}{2}+\Delta_j$. It follows that
\begin{align*}
-\log\left(\prod_{j=1}^{M}m_j\right)&=-\sum_{j=1}^{M}\log m_j\\
&\leq\frac{1}{m_*}\left[\frac12\sum_{j=1}^{M}|\theta_j|^2+\sum_{j=1}^{M}\Delta_j\right].
\end{align*}
Hence,
\[
\prod_{j=1}^{M}m_j\geq\exp\left\{-\frac{1}{m_*}\left[\frac12\sum_{j=1}^{M}|\theta_j|^2
+\sum_{j=1}^{M}\Delta_j\right]\right\}.
\]

Let $\mathbf{G}_M=\mathbf{J}_M\mathbf{J}_{M-1}\cdots\mathbf{J}_1$. Combining the preceding estimate with the layerwise bounds of Proposition~\ref{prop:jacobian-bounds} gives
\[
\sigma_{\max}(\mathbf{G}_M)\leq\exp\left(\sum_{j=1}^{M}\Delta_j\right)
\]
and
\[
\sigma_{\min}(\mathbf{G}_M)\geq\exp\left\{-\frac{1}{m_*}\left[\frac12\sum_{j=1}^{M}|\theta_j|^2
+\sum_{j=1}^{M}\Delta_j\right]\right\}.
\]
Therefore, if $\sum_{j=1}^{M}\Delta_j=\mathcal{O}(1)$ and $\sum_{j=1}^{M}|\theta_j|^2=\mathcal{O}(1)$, then both
$\sigma_{\max}(\mathbf{G}_M)$ and $\sigma_{\min}(\mathbf{G}_M)^{-1}$
remain bounded independently of the number $M$ of individual SpheretNorm updates.

For the residual stream
$\mathbf{h}_1,\mathbf{h}_2,\ldots,\mathbf{h}_L$
considered in Appendix~\ref{app:normalization-as-projection} and Appendix~\ref{app:fixed-radius-dynamics}, there are $M=L-1$ individual SpheretNorm updates. More precisely, setting
\[
\mathbf{x}_j=\mathbf{h}_j,\qquad\mathbf{J}_j
=\frac{\partial\mathbf{h}_{j+1}}{\partial\mathbf{h}_j},\qquad j=1,\ldots,L-1,
\]
gives
\[
\mathbf{G}_M=\mathbf{G}_{L-1}=\frac{\partial\mathbf{h}_L}
{\partial\mathbf{h}_1}=\mathbf{J}_{L-1}\mathbf{J}_{L-2}\cdots\mathbf{J}_1.
\]
Therefore, the preceding depth-wise conditioning criterion
applies to the actual residual stream with
\[
\sum_{j=1}^{L-1}\Delta_j=\mathcal{O}(1),\qquad\sum_{j=1}^{L-1}|\theta_j|^2=\mathcal{O}(1).
\]
Proposition~\ref{prop:jacobian-bounds} uses $L$ as the generic number of individual SpheretNorm updates. When the result is applied
to the retained-state indexing $\mathbf{h}_1,\ldots,\mathbf{h}_L$, used in Appendix~\ref{app:normalization-as-projection} and Appendix~\ref{app:fixed-radius-dynamics}, the corresponding update count is $M=L-1$.

\subsection{Proof of the Depth-Schedule Growth Proposition}
\label{app:depth-schedule-growth}
The three schedules compared in Proposition~\ref{prop:depth-schedule-growth} are $\lambda_l=1$, $\lambda_l=l^{-1/2}$, and $\lambda_l=l^{-1}$. In particular, the third schedule is the harmonic schedule
$\lambda_l=l^{-1}$, for which
\[
\sum_{l=1}^L\lambda_l=\mathcal{O}(\log L),
\qquad
\sum_{l=1}^L\lambda_l^2=\mathcal{O}(1).
\]
By definition, $S_L=\sum_{l=1}^{L}\lambda_l a_l\kappa_l$. Since $0<a_l\leq1$ and $\kappa_l\leq K$, it follows that $S_L\leq K\sum_{l=1}^{L}\lambda_l$. Furthermore, since
\[
\left|p\arctan\left(\frac{\beta}{p}\right)\right|\leq |\beta|,
\qquad \beta\in\mathbb{R},
\]
we have $\left|\theta_l\right|\leq|\beta_l|=\lambda_l a_l\eta_l$. Using $a_l\eta_l\leq H$, we obtain
\[
A_L=\sum_{l=1}^{L}\left|\theta_l\right|^2
\leq\sum_{l=1}^{L}\lambda_l^2(a_l\eta_l)^2\leq H^2\sum_{l=1}^{L}\lambda_l^2.
\]

For the constant schedule $\lambda_l=1$, we have $\sum_{l=1}^{L}\lambda_l=L$ and $\sum_{l=1}^{L}\lambda_l^2=L$. For the layerwise square-root schedule
$\lambda_l=l^{-1/2}$,
\[
\sum_{l=1}^{L}\lambda_l=\sum_{l=1}^{L}\frac{1}{\sqrt l}
\leq 2\sqrt L=\mathcal{O}(\sqrt L),
\]
while
\[
\sum_{l=1}^{L}\lambda_l^2=\sum_{l=1}^{L}\frac{1}{l}\leq 1+\log L
=\mathcal{O}(\log L).
\]

For the harmonic schedule $\lambda_l=l^{-1}$,
\begin{align*}
\sum_{l=1}^{L}\lambda_l &=\sum_{l=1}^{L}\frac{1}{l}
=\mathcal{O}(\log L)
\,\mbox{ and}\\
\sum_{l=1}^{L}\lambda_l^2 &=\sum_{l=1}^{L}\frac{1}{l^2}\leq\frac{\pi^2}{6}
=\mathcal{O}(1).
\end{align*}
The stated growth rates follow.

\paragraph{Application to the retained-state indexing.}
Proposition~\ref{prop:depth-schedule-growth} uses \(L\) as the generic number of individual SpheretNorm updates. For the retained-state sequence $\mathbf{h}_1,\mathbf{h}_2,\ldots, \mathbf{h}_L$, the corresponding number of updates is $M=L-1$. For $l=1,\ldots, M$, let $\alpha_l=\lambda_l a_l$ and $\Delta_l=\lambda_l a_l\kappa_l$, where $\kappa_l=2\|\mathbf{B}_l\|_2+\eta_l$. Suppose that
\[
\kappa_l\leq K,\qquad a_l\eta_l\leq H,\qquad l=1,\ldots,M.
\]
Define the accumulated sensitivity envelope and squared
angular budget by
\[
S_M^{\mathrm{res}}=\sum_{l=1}^{M}\Delta_l,\qquad
A_M^{\mathrm{res}}=\sum_{l=1}^{M}|\theta_l|^2.
\]
Then the preceding estimates give
\[
S_M^{\mathrm{res}}\leq K\sum_{l=1}^{M}\lambda_l,
\qquad A_M^{\mathrm{res}}\leq H^2\sum_{l=1}^{M}\lambda_l^2.
\]
Since $M=L-1$, these estimates are equivalently
\[
\begin{array}{c|cc}
\lambda_{l} 
    &S_{L-1}^{\mathrm{res}} 
    &A_{L-1}^{\mathrm{res}}\\ \hline
1
    &\mathcal{O}(L)
    &\mathcal{O}(L) \\
{1}/{\sqrt{l}}
    &\mathcal{O}(\sqrt L)
    &\mathcal{O}(\log L) \\
{1}/{l}
    &\mathcal{O}(\log L)
    &\mathcal{O}(1).
\end{array}
\]
Thus, specializing the generic update count in Proposition~\ref{prop:depth-schedule-growth} changes only the endpoint of the sums and does not alter any of the stated asymptotic growth
rates.

\subsection{Full Jacobian and Layerwise Estimates}
\label{app:full-jacobian}
\paragraph{Indexing convention.}
Consistent with the retained-state indexing used in
Appendix~\ref{app:normalization-as-projection} and Appendix~\ref{app:fixed-radius-dynamics}, we consider the sequence $\mathbf{h}_1,\mathbf{h}_2,\ldots,\mathbf{h}_L$, where, for $k=1,\ldots,L-1$,
\[
\mathbf{h}_{k+1}=\operatorname{SpheretNorm}
\bigl(\mathbf{h}_k,\mathcal{F}_k(\mathbf{h}_k)\bigr).
\]
There are therefore $N=L-1$ individual SpheretNorm updates between $\mathbf{h}_1$ and $\mathbf{h}_L$.

For notational convenience in the Jacobian calculation, set $\mathbf{x}_k=\mathbf{h}_k$ for $k=1,\ldots,N+1$. Then
\[
\mathbf{x}_1=\mathbf{h}_1,\qquad\mathbf{x}_{N+1}=\mathbf{h}_L,
\]
and the $k$-th individual SpheretNorm update is written as
\[
\mathbf{x}_k\longmapsto\mathbf{x}_{k+1},\qquad k=1,\ldots,N.
\]
Proposition~\ref{prop:jacobian-bounds} uses $L$ as a generic count of individual SpheretNorm updates. To avoid conflict with the
retained-state count $L$ used here, we denote that generic
update count by $N$. For the retained-state sequence
$\mathbf{h}_1,\ldots, \mathbf{h}_L$, one has $N=L-1$.

\paragraph{Differential of a single SpheretNorm update.}
For each $k=1,\ldots, N$, let $\mathcal{F}_k$ denote the mapping associated with the $k$-th SpheretNorm update, and let $\alpha_k$ denote its step size. We assume that $\mathcal{F}_k$ is continuously differentiable in a neighborhood of $\mathbf{x}_k$. Throughout this subsection, all derivatives are taken with respect to $\mathbf{x}_k$,
while $p$ and $\alpha_k$ are treated as constants.

Define $r_k=\|\mathbf{x}_k\|_2$, $\mathbf{P}_k= \mathbf{I}- \frac{\mathbf{x}_k\mathbf{x}_k^\top}{r_k^2}$, $c_k=\frac{\mathbf{x}_k^\top\mathcal{F}_k(\mathbf{x}_k)}{r_k^2}$,
$\mathbf{z}_k=\mathcal{F}_k(\mathbf{x}_k)-c_k\mathbf{x}_k=
\mathbf{P}_k\mathcal{F}_k(\mathbf{x}_k)$, $s_k=\|\mathbf{z}_k\|_2$, and
\[
\eta_k=\frac{s_k}{r_k},
\quad
\beta_k=\alpha_k\eta_k,
\quad
\theta_k=p\arctan\left(\frac{\beta_k}{p}\right).
\]
Let $\mathbf{A}_k=D\mathcal{F}_k(\mathbf{x}_k)
=\frac{\partial\mathcal{F}_k(\mathbf{x}_k)}{\partial\mathbf{x}_k}$, $\mathbf{B}_k=\mathbf{A}_k-c_k\mathbf{I}$, and define $\mathbf{J}_k=\frac{\partial\mathbf{x}_{k+1}}
{\partial\mathbf{x}_k}$. For a perturbation
$\boldsymbol\xi\in\mathbb R^d$, differentiation of $\mathbf{z}_k=\mathcal{F}_k(\mathbf{x}_k)-c_k\mathbf{x}_k$ gives
\[
D\mathbf{z}_k[\boldsymbol\xi]=\mathbf{B}_k\boldsymbol\xi
-Dc_k[\boldsymbol\xi]\mathbf{x}_k.
\]
Since $\mathbf{P}_k\mathbf{x}_k=\boldsymbol{0}$, we immediately obtain $\mathbf{P}_kD\mathbf{z}_k[\boldsymbol\xi]=\mathbf{P}_k\mathbf{B}_k\boldsymbol\xi$. We next compute the derivative of $c_k$. Direct differentiation gives
$$
Dc_k[\boldsymbol\xi]=\frac{\boldsymbol\xi^\top\mathcal{F}_k(\mathbf{x}_k)+
\mathbf{x}_k^\top\mathbf{A}_k\boldsymbol\xi}{r_k^2}-2\frac{\mathbf{x}_k^\top\mathcal{F}_k(\mathbf{x}_k)}{r_k^4}\mathbf{x}_k^\top\boldsymbol\xi.
$$
Using $\mathcal{F}_k(\mathbf{x}_k)=\mathbf{z}_k+c_k\mathbf{x}_k$ and $\mathbf{A}_k=\mathbf{B}_k+c_k\mathbf{I}$, the terms containing
$c_k\mathbf{x}_k^\top\boldsymbol\xi$ cancel, yielding
\[
Dc_k[\boldsymbol\xi]=\frac{\mathbf{z}_k^\top\boldsymbol\xi+
\mathbf{x}_k^\top\mathbf{B}_k\boldsymbol\xi}{r_k^2}.
\]

Suppose first that $s_k>0$. Since $s_k=\|\mathbf{z}_k\|_2$ and $r_k=\|\mathbf{x}_k\|_2$, we have $\nabla_{\mathbf{x}_k}s_k=\frac{\mathbf{B}_k^\top\mathbf{z}_k}{s_k}$ and $\nabla_{\mathbf{x}_k}r_k
=\frac{\mathbf{x}_k}{r_k}$. Therefore,
\[
\nabla_{\mathbf{x}_k}\beta_k=
\alpha_k\left(\frac{\mathbf{B}_k^\top\mathbf{z}_k}{r_ks_k}-
\frac{s_k\mathbf{x}_k}{r_k^3}\right).
\]
Moreover, $\frac{d\theta_k}{d\beta_k}=\frac{p^2}{p^2+\beta_k^2}$, and hence
\[
\nabla_{\mathbf{x}_k}\theta_k=\frac{\alpha_kp^2}{p^2+\beta_k^2}
\left(\frac{\mathbf{B}_k^\top\mathbf{z}_k}{r_ks_k}
-\frac{s_k\mathbf{x}_k}{r_k^3}\right).
\]

For $s_k>0$, the $k$-th SpheretNorm update is $\mathbf{x}_{k+1}=\cos(\theta_k)\mathbf{x}_k+r_k\sin(\theta_k)\frac{\mathbf{z}_k}{s_k}$. Differentiating this expression and substituting the formula
for $Dc_k[\boldsymbol\xi]$ yields
\begin{align*}
\mathbf{J}_k={}&\cos(\theta_k)\mathbf{I}+\frac{\sin(\theta_k)}{r_ks_k}\left(
\mathbf{z}_k\mathbf{x}_k^\top-\mathbf{x}_k\mathbf{z}_k^\top\right)\\
&+\frac{r_k\sin(\theta_k)}{s_k}\left(\mathbf{P}_k-\frac{\mathbf{z}_k\mathbf{z}_k^\top}{s_k^2}
\right)\mathbf{B}_k\\
&+\left(\frac{r_k\cos(\theta_k)}{s_k}\mathbf{z}_k-\sin(\theta_k)\mathbf{x}_k\right)
\left(\nabla_{\mathbf{x}_k}\theta_k\right)^\top.
\end{align*}

Define $\mathbf Q_k=\cos(\theta_k)\mathbf{I}+\frac{\sin(\theta_k)}{r_ks_k}
\left(\mathbf{z}_k\mathbf{x}_k^\top-\mathbf{x}_k\mathbf{z}_k^\top\right)$. Then $\mathbf{J}_k=\mathbf Q_k+\mathbf E_k$, where
\begin{align*}
\mathbf E_k={}&\frac{r_k\sin(\theta_k)}{s_k}\left(\mathbf{P}_k-
\frac{\mathbf{z}_k\mathbf{z}_k^\top}{s_k^2}\right)\mathbf{B}_k\\
&+\left(\frac{r_k\cos(\theta_k)}{s_k}\mathbf{z}_k-
\sin(\theta_k)\mathbf{x}_k\right)\left(\nabla_{\mathbf{x}_k}\theta_k\right)^\top.
\end{align*}

\paragraph{Estimate of the perturbation term.}
Since $\mathbf{P}_k-\frac{\mathbf{z}_k\mathbf{z}_k^\top}{s_k^2}$ is the orthogonal projection onto the subspace of $T_{\mathbf{x}_k}\mathbb S_{r_k}^{d-1}$ orthogonal to $\mathbf{z}_k$, its operator norm is at most one.
Therefore,
\[
\left\|\frac{r_k\sin(\theta_k)}{s_k}\left(\mathbf{P}_k-
\frac{\mathbf{z}_k\mathbf{z}_k^\top}{s_k^2}\right)\mathbf{B}_k\right\|_2
\leq\frac{r_k|\sin(\theta_k)|}{s_k}\|\mathbf{B}_k\|_2.
\]
Since $|\beta_k|=|\alpha_k|\frac{s_k}{r_k}$, we obtain
\[
\left\|\frac{r_k\sin(\theta_k)}{s_k}\left(\mathbf{P}_k-
\frac{\mathbf{z}_k\mathbf{z}_k^\top}{s_k^2}\right)\mathbf{B}_k\right\|_2
\leq|\alpha_k|\frac{|\sin(\theta_k)|}{|\beta_k|}\|\mathbf{B}_k\|_2.
\]

For the second term in $\mathbf E_k$, the orthogonality
$\mathbf{x}_k^\top\mathbf{z}_k=0$ gives
\begin{align*}
\left\|\frac{r_k\cos(\theta_k)}{s_k}\mathbf{z}_k-
\sin(\theta_k)\mathbf{x}_k\right\|_2^2&=r_k^2\cos^2(\theta_k)+r_k^2\sin^2(\theta_k)=r_k^2.
\end{align*}
Hence,
\[
\left\|\frac{r_k\cos(\theta_k)}{s_k}\mathbf{z}_k-\sin(\theta_k)\mathbf{x}_k
\right\|_2=r_k.
\]
Furthermore,
\[
\left\|\nabla_{\mathbf{x}_k}\theta_k\right\|_2\leq
\frac{p^2}{p^2+\beta_k^2}\frac{|\alpha_k|\|\mathbf{B}_k\|_2+|\beta_k|}{r_k}.
\]
It follows that $\|\mathbf E_k\|_2\leq\delta_k^{\mathrm{exact}}$, where
\[
\delta_k^{\mathrm{exact}}=|\alpha_k|\frac{|\sin(\theta_k)|}{|\beta_k|}\|\mathbf{B}_k\|_2
+\frac{p^2}{p^2+\beta_k^2}\left(|\alpha_k|\|\mathbf{B}_k\|_2+|\beta_k|\right).
\]
The quotient $\frac{|\sin(\theta_k)|}{|\beta_k|}$ is understood through its continuous extension at $\beta_k=0$.

Since $|\sin(\theta_k)|\leq|\theta_k|\leq|\beta_k|$, $\frac{p^2}{p^2+\beta_k^2}\leq 1$, and $|\beta_k|=|\alpha_k|\eta_k$, we obtain
\[
\delta_k^{\mathrm{exact}}\leq|\alpha_k|\left(2\|\mathbf{B}_k\|_2+\eta_k\right).
\]
Define $\Delta_k=|\alpha_k|\left(2\|\mathbf{B}_k\|_2+\eta_k\right)$. Then
$\|\mathbf E_k\|_2\leq\Delta_k$.

\paragraph{The zero-tangent case.}
We now consider $s_k=0$. At the point under consideration, $\mathbf{z}_k=\boldsymbol{0}$, $\beta_k=0$. Direct differentiation of $\beta_k=\alpha_k\frac{\|\mathbf{z}_k\|_2}{r_k}$ is not valid at $\mathbf{z}_k=\boldsymbol{0}$, because the Euclidean norm is not differentiable at the origin. We therefore use the even-function representation established
in Appendix~\ref{app:p-angular-properties}.

Let $a_p(\beta)=\cos\left(p\arctan\left(\frac{\beta}{p}\right)\right)$ and
\[
b_p(\beta)=\begin{cases}
\displaystyle
\frac{\sin\left(p\arctan(\beta/p)\right)}{\beta},& \beta\neq0,\\[1.2ex]
1,& \beta=0.
\end{cases}
\]
There exist real-analytic functions
$\widehat A_p$ and $\widehat B_p$, defined near zero,
such that
\[
a_p(\beta)=\widehat A_p(\beta^2),\qquad
b_p(\beta)=\widehat B_p(\beta^2),
\]
with $\widehat A_p(0)=\widehat B_p(0)=1$. For $\mathbf{x}\neq\mathbf0$, define
\[
\mathbf{z}_k(\mathbf{x})=\left(\mathbf{I}-
\frac{\mathbf{x}\mathbf{x}^\top}{\|\mathbf{x}\|_2^2}\right)\mathcal{F}_k(\mathbf{x}),\quad
\omega_k(\mathbf{x})=\alpha_k^2
\frac{\|\mathbf{z}_k(\mathbf{x})\|_2^2}{\|\mathbf{x}\|_2^2}.
\]
The layer map can be written locally as
\[
\mathbf T_k(\mathbf{x})=
\widehat A_p\bigl(\omega_k(\mathbf{x})\bigr)\mathbf{x}+\alpha_k
\widehat B_p\bigl(\omega_k(\mathbf{x})\bigr)\mathbf{z}_k(\mathbf{x}).
\]
For a perturbation
$\boldsymbol\xi\in\mathbb R^d$,
$$
D\omega_k(\mathbf{x}_k)[\boldsymbol{\xi}]=\frac{2\alpha_k^2}{r_k^2}\mathbf{z}_k^\top
D\mathbf{z}_k[\boldsymbol{\xi}]-\frac{2\alpha_k^2s_k^2}{r_k^4}\mathbf{x}_k^\top\boldsymbol{\xi}.
$$
When $s_k=0$, we have $\omega_k(\mathbf{x}_k)=0$ and $D\omega_k(\mathbf{x}_k)[\boldsymbol{\xi}]=0$. Consequently,
\[
D\mathbf T_k(\mathbf{x}_k)[\boldsymbol\xi]=
\boldsymbol\xi+\alpha_kD\mathbf{z}_k[\boldsymbol\xi].
\]

Since $\mathbf{z}_k=\mathbf0$, one has $\mathcal{F}_k(\mathbf{x}_k)=c_k\mathbf{x}_k$. The derivative of $c_k$ therefore reduces to
\[
Dc_k[\boldsymbol\xi]=
\frac{\mathbf{x}_k^\top\mathbf{B}_k\boldsymbol\xi}{r_k^2}.
\]
Hence,
\begin{align*}
D\mathbf{z}_k[\boldsymbol\xi]&=\mathbf{B}_k\boldsymbol\xi
-Dc_k[\boldsymbol\xi]\mathbf{x}_k\\
&=\mathbf{B}_k\boldsymbol\xi-\frac{\mathbf{x}_k\mathbf{x}_k^\top}{r_k^2}\mathbf{B}_k\boldsymbol\xi
=\mathbf{P}_k\mathbf{B}_k\boldsymbol\xi.
\end{align*}
It follows that $\mathbf{J}_k=\mathbf{I}+\alpha_k\mathbf{P}_k\mathbf{B}_k$ when $s_k=0$. Define $\delta_k^{(0)}=|\alpha_k|\|\mathbf{P}_k\mathbf{B}_k\|_2$. Weyl's singular-value perturbation inequality gives
\[
\sigma_{\max}(\mathbf{J}_k)\leq 1+\delta_k^{(0)}
\]
and
\[
\sigma_{\min}(\mathbf{J}_k)\geq\max\{0,1-\delta_k^{(0)}\}.
\]
Moreover, $\delta_k^{(0)}\leq|\alpha_k|\|\mathbf{B}_k\|_2\leq\Delta_k$.

\paragraph{Singular-value bounds.}
For $s_k>0$, define $\mathbf{e}_{1,k}=\frac{\mathbf{x}_k}{r_k}$, $\mathbf{e}_{2,k}=\frac{\mathbf{z}_k}{s_k}$. On
$\operatorname{span}\{\mathbf{e}_{1,k},\mathbf{e}_{2,k}\}$,
the matrix of $\mathbf Q_k$, with respect to this
orthonormal basis, is
\[
\begin{pmatrix}
\cos(\theta_k) & -\sin(\theta_k)\\
\sin(\theta_k) & \cos(\theta_k)
\end{pmatrix}.
\]
Thus, $\mathbf Q_k$ acts as an isometry on this
two-dimensional subspace. On its orthogonal complement,
$\mathbf Q_k$ acts as multiplication by
$\cos(\theta_k)$. Consequently, we obtain that $\|\mathbf Q_k\|_2=1$ and $\sigma_{\min}(\mathbf Q_k)\geq|\cos(\theta_k)|$. For $s_k>0$, Weyl's inequality therefore gives
\[
\sigma_{\max}(\mathbf{J}_k)\leq 1+\Delta_k
\]
and
\[
\sigma_{\min}(\mathbf{J}_k)\geq\max\{0,|\cos(\theta_k)|-\Delta_k\}.
\]

Define
\[
q_k=
\begin{cases}
|\cos(\theta_k)|,& s_k>0,\\
1,& s_k=0,
\end{cases}
\]
and $m_k=\max\{0,q_k-\Delta_k\}$. Combining the cases $s_k>0$ and $s_k=0$, we obtain
\[
\sigma_{\max}(\mathbf{J}_k)\leq 1+\Delta_k,\qquad
\sigma_{\min}(\mathbf{J}_k)\geq m_k,\qquad k=1,\ldots,N.
\]

\paragraph{Depth-wise Jacobian.}
The end-to-end Jacobian over the retained residual stream is
\[
\mathbf{G}=\frac{\partial\mathbf{h}_L}{\partial\mathbf{h}_1}
=\frac{\partial\mathbf{x}_{N+1}}{\partial\mathbf{x}_1}.
\]
Since $N=L-1$, the chain rule gives $\mathbf{G}=\mathbf{J}_N\mathbf{J}_{N-1}\cdots\mathbf{J}_1
=\mathbf{J}_{L-1}\mathbf{J}_{L-2}\cdots\mathbf{J}_1$. By submultiplicativity,
\begin{align*}
\sigma_{\max}(\mathbf{G})&\leq\prod_{k=1}^{N}\sigma_{\max}(\mathbf{J}_k) \leq\prod_{k=1}^{N}(1+\Delta_k) \\
&\leq\exp\left(\sum_{k=1}^{N}\Delta_k\right).
\end{align*}
Equivalently, $\sigma_{\max}(\mathbf{G})\leq\exp\left(\sum_{k=1}^{L-1}\Delta_k\right)$. Similarly,
\[
\sigma_{\min}(\mathbf{G})\geq\prod_{k=1}^{N}\sigma_{\min}(\mathbf{J}_k)
\geq\prod_{k=1}^{N}m_k=\prod_{k=1}^{L-1}m_k.
\]

\paragraph{A consequence of exact norm preservation.}
Since $\|\mathbf{x}_{k+1}\|_2^2=
\|\mathbf{x}_k\|_2^2$ for every admissible $\mathbf{x}_k$, differentiation with
respect to $\mathbf{x}_k$ yields $\mathbf{J}_k^\top\mathbf{x}_{k+1}=\mathbf{x}_k$. Therefore,
\[
\|\mathbf{J}_k\|_2=\|\mathbf{J}_k^\top\|_2\geq
\frac{\|\mathbf{J}_k^\top\mathbf{x}_{k+1}\|_2}{\|\mathbf{x}_{k+1}\|_2}
=\frac{\|\mathbf{x}_k\|_2}{\|\mathbf{x}_{k+1}\|_2}=1.
\]
Thus, an individual SpheretNorm update Jacobian cannot be
a strict contraction in every ambient direction, although it
may still strongly contract particular tangential directions.

\subsection{Angle Control Alone Is Insufficient}
\label{app:angle-control-insufficient}
The intrinsic rotation angle controls the geometric component
of the update Jacobian, but it does not control the differential sensitivity of the underlying mapping. The following example shows that even zero angular displacement provides no uniform upper bound on the update-Jacobian norm and does not guarantee nonsingularity.

\begin{proposition}
For every $R>1$ and every $\alpha>0$, there exist a
linear mapping
$\mathcal{F}:\mathbb{R}^2\to\mathbb{R}^2$ and a hidden state
$\mathbf{h}\in\mathbb{S}^1$ such that the corresponding
SpheretNorm update satisfies $\beta=0$ and $\theta^{(p)}=0$, while its update Jacobian satisfies $\|\mathbf{J}\|_2=R$.
\end{proposition}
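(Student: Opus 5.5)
The plan is to build an explicit two-dimensional example in which $\mathcal{F}$ maps $\mathbf{h}$ to a multiple of itself. Then $\mathbf{z}=\mathbf{0}$, so $s=0$, $\beta=0$ and $\theta^{(p)}=0$ for every $p>0$. The Jacobian is then read off from the zero-tangent formula established in Appendix~\ref{app:full-jacobian}, namely $\mathbf{J}=\mathbf{I}+\alpha\mathbf{P}\mathbf{B}$ with $\mathbf{B}=\mathbf{A}-c\mathbf{I}$.

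Take $\mathbf{h}=\mathbf{e}_1\in\mathbb{S}^1$, so $r=1$, and choose a linear map $\mathcal{F}(\mathbf{x})=\mathbf{A}\mathbf{x}$ of the form
\[
\mathbf{A}=\begin{pmatrix} 0 & 0\\ \mu & \nu\end{pmatrix}
\]
with real $\mu,\nu$ still to be fixed. The first column is chosen so that $\mathbf{A}\mathbf{e}_1=\mu\mathbf{e}_2$. However, we need $\mathbf{A}\mathbf{e}_1\parallel\mathbf{e}_1$ so that $\mathbf{z}=\mathbf{0}$, which forces $\mu=0$ in this template.

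A cleaner choice is therefore the upper-triangular
\[
\mathbf{A}=\begin{pmatrix} 0 & \mu\\ 0 & \nu\end{pmatrix}.
\]
With this, $\mathbf{A}\mathbf{e}_1=\mathbf{0}$, so $c=0$, $\mathbf{z}=\mathbf{0}$, and $\mathbf{B}=\mathbf{A}$. The projection is $\mathbf{P}=\mathbf{I}-\mathbf{e}_1\mathbf{e}_1^\top=\mathbf{e}_2\mathbf{e}_2^\top$, so $\mathbf{P}\mathbf{B}=\begin{pmatrix}0&0\\0&\nu\end{pmatrix}$. This gives $\mathbf{J}=\operatorname{diag}(1,\,1+\alpha\nu)$.

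This structure reflects a general fact. Since $\mathbf{J}^\top\mathbf{h}_{+}=\mathbf{h}$ is forced by norm preservation, the radial direction always carries singular value $1$, and all freedom sits in the tangential block $1+\alpha\nu$.

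Choosing $\nu=(R-1)/\alpha>0$ gives $\|\mathbf{J}\|_2=\max\{1,R\}=R$, since $R>1$. Choosing $\nu=-1/\alpha$ instead would make $\mathbf{J}$ singular, which addresses the accompanying remark about nonsingularity.

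The only real obstacle is justifying the Jacobian at $s=0$, where the norm $\|\mathbf{z}\|_2$ is not differentiable. Rather than redoing that computation, I would invoke the even-function representation $\mathbf{T}(\mathbf{x})=\widehat A_p(\omega(\mathbf{x}))\mathbf{x}+\alpha\widehat B_p(\omega(\mathbf{x}))\mathbf{z}(\mathbf{x})$ from Appendix~\ref{app:p-angular-properties} and the zero-tangent case of Appendix~\ref{app:full-jacobian}. These give $D\omega=0$ and hence $\mathbf{J}=\mathbf{I}+\alpha\mathbf{P}\mathbf{B}$ directly. As a sanity check, $\mathbf{z}(\mathbf{x})$ for $\mathbf{x}$ near $\mathbf{e}_1$ is computed explicitly, and its derivative at $\mathbf{e}_1$ in direction $\mathbf{e}_2$ is confirmed to be $\nu\mathbf{e}_2$.
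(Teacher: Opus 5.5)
Your construction is correct and is essentially the paper's. In both, $\mathbf{h}=\mathbf{e}_1$ is an eigenvector of $\mathbf{A}$, so $\mathbf{z}=\mathbf{0}$, and the zero-tangent formula gives $\mathbf{J}=\mathbf{I}+\alpha\mathbf{P}\mathbf{B}=\operatorname{diag}(1,1+\alpha\nu)$ with $\nu=(R-1)/\alpha$. The paper uses $\mathbf{A}=\operatorname{diag}(\mu,\mu+K)$ instead, and your extra first-row entry $\mu$ is simply annihilated by $\mathbf{P}$.

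Your aside about a ``general fact'' is not needed, and its justification is incomplete. The identity $\mathbf{J}^\top\mathbf{h}_+=\mathbf{h}$ by itself only gives $\|\mathbf{J}\|_2\geq 1$. The radial singular value equals $1$ here only because a linear $\mathcal{F}$ makes the update positively $1$-homogeneous, which also gives $\mathbf{J}\mathbf{h}=\mathbf{h}_+$.
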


\begin{proof}
Let $\mathbf{h}=\mathbf{e}_1=\begin{pmatrix}
1\\
0
\end{pmatrix}
$ and define $\mathcal{F}(\mathbf{x})=\mathbf{A}\mathbf{x}$ and $\mathbf{A}=\begin{pmatrix}
\mu & 0\\
0 & \mu+K
\end{pmatrix}$, where $\mu,K\in\mathbb{R}$. When $\mathbf{h}=\mathbf{e}_1$, we have $\mathcal{F}(\mathbf{h})=\mu\mathbf{e}_1$ and $c=\mathbf{h}^\top\mathcal{F}(\mathbf{h})=\mu$. Therefore,
\[
\mathbf{B}=\mathbf{A}-c\mathbf{I}=
\begin{pmatrix}
0 & 0\\
0 & K
\end{pmatrix}.
\]

Moreover, $\mathbf{P}=\mathbf{I}-\mathbf{h}\mathbf{h}^\top
=\begin{pmatrix}
0&0\\
0&1
\end{pmatrix}$, and hence $\mathbf{z}=\mathbf{P}\mathcal{F}(\mathbf{h})
=\boldsymbol{0}$. Therefore, $\beta=0$ and $\theta^{(p)}=0$ for every $p>0$. Since $\mathbf{z}=\boldsymbol{0}$, the zero-tangent formula established in Appendix~\ref{app:full-jacobian} gives
\[
\mathbf{J}=\mathbf{I}+\alpha\mathbf{P}\mathbf{B}
=\begin{pmatrix}
1&0\\
0&1+\alpha K
\end{pmatrix}.
\]
Choosing $K=\frac{R-1}{\alpha}$ gives
$\mathbf{J}=
\begin{pmatrix}
1&0\\
0&R
\end{pmatrix}$, and therefore $\|\mathbf{J}\|_2=R$.
\end{proof}

The same construction also shows that zero angular displacement
does not imply nonsingularity. Indeed, choosing $K=-\frac{1}{\alpha}$ gives $\mathbf{J}=
\begin{pmatrix}
1&0\\
0&0
\end{pmatrix}$, so that $\sigma_{\min}(\mathbf{J})=0$. Therefore, angular control must be supplemented by control of the update-sensitivity term $|\alpha|\|\mathbf{B}\|_2$.

\subsection{Dyadic Angular Budget of the Depth Schedules}
\label{app:dyadic-angular-budget}

The global estimates in Proposition~\ref{prop:depth-schedule-growth} compare the accumulated stability envelopes over the entire sequence of SpheretNorm updates. We now examine how the available angular motion is distributed across different depth scales.

Under the retained-state indexing $\mathbf h_1,\mathbf h_2,\ldots,\mathbf h_L$, the individual SpheretNorm updates are indexed by $l=1,\ldots, L-1$. For an integer $m\geq 1$ satisfying $2m\leq L$, define the dyadic update interval $I_m=\{m,m+1,\ldots,2m-1\}$. The condition $2m\leq L$ ensures that $I_m\subseteq\{1,\ldots,L-1\}$.

\begin{proposition}\label{prop:dyadic-angular-budget}
Fix an optimization iteration $\tau$ and let $p>0$.
Suppose that there exist constants
\[
0<H_-\leq H_+<\infty
\]
and $\varepsilon>0$ such that, for every $l\in I_m$,
\[
H_-\leq a_l^{(\tau)}\eta_l^{(\tau)}\leq H_+
\quad\mbox{ and }\quad
\left|\beta_l^{(\tau)}\right|\leq\varepsilon.
\]
Then there exist constants $C_-,C_+>0$, independent of
$m$ and $L$, such that
\[
C_-\sum_{l\in I_m}\lambda_l^2\leq\sum_{l\in I_m}\left|\theta_{l,\tau}^{(p)}\right|^2\leq C_+
\sum_{l\in I_m}\lambda_l^2.
\]
Consequently, $\sum_{l\in I_m}\left|\theta_{l,\tau}^{(p)}\right|^2=\Theta(1)$ for the layerwise square-root schedule $\lambda_l=l^{-1/2}$, while
$\sum_{l\in I_m}\left|\theta_{l,\tau}^{(p)}\right|^2=\Theta(\frac{1}{m})$ for the harmonic schedule $\lambda_l=l^{-1}$.
\end{proposition}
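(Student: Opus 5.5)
The plan is a two-sided pointwise comparison between $|\theta_{l,\tau}^{(p)}|$ and $\lambda_l$, valid for every $l\in I_m$, which we then square and sum over the dyadic interval. Recall $\beta_l=\alpha_l\eta_l=\lambda_l a_l\eta_l$ with $\lambda_l>0$ and $a_l>0$. The hypothesis $H_-\leq a_l\eta_l\leq H_+$ therefore gives
\[
H_-\lambda_l\leq|\beta_l|\leq H_+\lambda_l .
\]
So it suffices to compare $|\theta_l|=|p\arctan(\beta_l/p)|$ with $|\beta_l|$ from both sides, with constants that do not depend on $l$, $m$, or $L$.

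For the upper bound, I will use $|\arctan x|\leq|x|$, exactly as in Appendix~\ref{app:depth-schedule-growth}. This gives $|\theta_l|\leq|\beta_l|\leq H_+\lambda_l$, and hence $C_+=H_+^2$.

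For the lower bound I use the small-step hypothesis $|\beta_l|\leq\varepsilon$. The function $x\mapsto\arctan(x)/x$ is even, positive, and decreasing on $(0,\infty)$. So for $|\beta_l|/p\leq\varepsilon/p$ we get
\[
|\theta_l|\geq c_{p,\varepsilon}|\beta_l|,
\qquad
c_{p,\varepsilon}=\frac{p}{\varepsilon}\arctan\!\left(\frac{\varepsilon}{p}\right)\in(0,1].
\]
An equally valid route is the concavity bound $\arctan x\geq x\arctan(y)/y$ for $0\le x\le y$. Either way, $|\theta_l|\geq c_{p,\varepsilon}H_-\lambda_l$, which gives $C_-=c_{p,\varepsilon}^2H_-^2$. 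This depends only on $p,\varepsilon,H_-$, not on $m$ or $L$.

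Squaring both bounds and summing over $l\in I_m$ gives the main two-sided inequality. For the consequences, the interval $I_m$ has exactly $m$ elements, each satisfying $m\leq l\leq 2m-1$.
\begin{itemize}
\item For $\lambda_l=l^{-1/2}$, we have $\sum_{l\in I_m}l^{-1}\in[m/(2m-1),\,1]\subseteq[1/2,1]$, so the sum is $\Theta(1)$.
\item For $\lambda_l=l^{-1}$, we have $\sum_{l\in I_m}l^{-2}\in[m/(2m-1)^2,\,1/m]$. The lower end is at least $1/(4m)$, so the sum is $\Theta(1/m)$.
\end{itemize}

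The only point needing care is the lower bound. Its constant must be uniform in $l$, and this is exactly where the saturation of $\arctan$ would otherwise break the comparison. The hypothesis $|\beta_l|\leq\varepsilon$ handles this, and no smallness of $\varepsilon$ is actually needed: the bound holds for any finite $\varepsilon$, since $\arctan(x)/x$ is bounded below on any bounded interval.
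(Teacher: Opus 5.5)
Your proposal is correct and follows the paper's proof essentially step for step: the same pointwise sandwich $c_{p,\varepsilon}H_-\lambda_l\leq|\theta_l|\leq H_+\lambda_l$, then squaring and summing, and the same dyadic estimates. The only difference is that you get the explicit constant $c_{p,\varepsilon}=(p/\varepsilon)\arctan(\varepsilon/p)$ from the monotonicity of $\arctan(x)/x$, whereas the paper takes the minimum of the continuous positive function $\phi_p(t)=p\arctan(t/p)/t$ over the compact interval $[0,\varepsilon]$; this minimum is the same number.
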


\begin{proof}
Define
\[
\phi_p(t)=\begin{cases}
\displaystyle
\frac{p\arctan(t/p)}{t},&t>0,\\[3mm]
1,&t=0.
\end{cases}
\]
The function $\phi_p$ is continuous and strictly positive on
the compact interval $[0,\varepsilon]$. Hence there exists
a constant
$c_{p,\varepsilon}=\min_{0\leq t\leq\varepsilon}\phi_p(t)>0$ such that
\[
c_{p,\varepsilon}t\leq p\arctan\left(\frac{t}{p}\right)
\leq t,\qquad 0\leq t\leq\varepsilon.
\]
Applying this estimate with
$t=|\beta_l^{(\tau)}|$ gives $c_{p,\varepsilon}\left|\beta_l^{(\tau)}\right|\leq
\left|\theta_{l,\tau}^{(p)}\right|\leq
\left|\beta_l^{(\tau)}\right|$. Since $\left|\beta_l^{(\tau)}\right|=\lambda_l a_l^{(\tau)}\eta_l^{(\tau)}$, the assumptions imply $c_{p,\varepsilon}H_-\lambda_l\leq\left|\theta_{l,\tau}^{(p)}
\right|\leq H_+\lambda_l$. Squaring and summing over $I_m$ yields
\[
c_{p,\varepsilon}^2H_-^2\sum_{l\in I_m}\lambda_l^2\leq
\sum_{l\in I_m}\left|\theta_{l,\tau}^{(p)}\right|^2
\leq H_+^2\sum_{l\in I_m}\lambda_l^2.
\]
Thus, the first assertion holds with $C_-=c_{p,\varepsilon}^2H_-^2$ and $C_+=H_+^2$. For the layerwise square-root schedule
$\lambda_l=l^{-1/2}$, we have
\[
\sum_{l=m}^{2m-1}\lambda_l^2=\sum_{l=m}^{2m-1}\frac{1}{l}.
\]
Since $m\leq l\leq 2m-1$ for every $l\in I_m$, we have $\frac{1}{2m-1}\leq\frac{1}{l}\leq\frac{1}{m}$. Since $I_m$ contains exactly $m$ indices, it follows that $\frac{m}{2m-1}\leq\sum_{l=m}^{2m-1}\frac{1}{l}\leq 1$. Hence,
\[
\sum_{l=m}^{2m-1}\lambda_l^2=\Theta(1).
\]

For the harmonic schedule $\lambda_l=l^{-1}$, we have
$\sum_{l=m}^{2m-1}\lambda_l^2=\sum_{l=m}^{2m-1}\frac{1}{l^2}$. Using again $m\leq l\leq 2m-1$, we obtain
$\frac{1}{(2m-1)^2}\leq\frac{1}{l^2}\leq\frac{1}{m^2}$. Therefore,
\[
\frac{m}{(2m-1)^2}\leq\sum_{l=m}^{2m-1}\frac{1}{l^2}\leq\frac{1}{m}.
\]
Since $\frac{m}{(2m-1)^2}\geq\frac{1}{4m}$, we conclude that
\[
\frac{1}{4m}\leq\sum_{l=m}^{2m-1}\lambda_l^2\leq\frac{1}{m}.
\]
Hence,
\[
\sum_{l=m}^{2m-1}\lambda_l^2=\Theta\left(\frac{1}{m}\right).
\]
The stated conclusions follow.
\end{proof}

Proposition~\ref{prop:dyadic-angular-budget} distinguishes the distribution of squared angular motion from the global worst-case stability envelope. The harmonic schedule gives the smaller global
accumulated-sensitivity bound, but allocates progressively
less squared angular motion to deeper dyadic update intervals. By contrast, the layerwise square-root schedule retains an approximately constant squared angular budget across logarithmic scales of update depth, allowing the transformations associated with deeper SpheretNorm updates to remain nontrivial.

\end{document}